\documentclass{article}
\usepackage[preprint]{colm2026_conference}

\usepackage{booktabs}
\usepackage{amsmath,amsthm}
\usepackage{amssymb}
\usepackage{multirow}
\usepackage{graphicx}
\usepackage{microtype}
\usepackage{makecell}
\usepackage{enumitem}
\graphicspath{{./}}

\usepackage[table]{xcolor}
\definecolor{phcblue}{RGB}{31, 78, 139}       
\definecolor{phcteal}{RGB}{0, 114, 125}        
\definecolor{phcorange}{RGB}{197, 90, 17}      
\definecolor{phcgray}{RGB}{89, 89, 89}         
\definecolor{phclightblue}{RGB}{214, 227, 242} 
\definecolor{phclightteal}{RGB}{213, 238, 240} 
\definecolor{phclightamber}{RGB}{255, 243, 220}
\definecolor{darkblue}{RGB}{31, 78, 139}

\usepackage{hyperref}
\hypersetup{
  colorlinks=true,
  citecolor=phcblue,
  linkcolor=phcblue,
  urlcolor=phcteal,
  pdftitle={Anchored Confabulation: Partial Evidence Non-Monotonically Amplifies Confident Hallucination in LLMs},
  pdfauthor={Ashish Lathkar (ashish@hemut.com)},
  pdfkeywords={RAG, hallucination, calibration, LLM, retrieval-augmented generation}
}

\usepackage{tcolorbox}
\tcbuselibrary{breakable, skins, theorems}

\tcbset{
  theoremstyle/.style={
    enhanced, breakable,
    colback=phclightteal,
    colframe=phcteal,
    colbacktitle=phcteal,
    coltitle=white,
    fonttitle=\bfseries\small,
    arc=4pt,
    boxrule=1pt,
    left=6pt, right=6pt, top=4pt, bottom=4pt,
    toptitle=3pt, bottomtitle=3pt,
  },
  definitionstyle/.style={
    enhanced, breakable,
    colback=phclightblue,
    colframe=phcblue,
    colbacktitle=phcblue,
    coltitle=white,
    fonttitle=\bfseries\small,
    arc=4pt,
    boxrule=1pt,
    left=6pt, right=6pt, top=4pt, bottom=4pt,
    toptitle=3pt, bottomtitle=3pt,
  },
  findingstyle/.style={
    enhanced, breakable,
    colback=phclightamber,
    colframe=phcorange,
    colbacktitle=phcorange,
    coltitle=white,
    fonttitle=\bfseries\small,
    arc=4pt,
    boxrule=1pt,
    left=6pt, right=6pt, top=4pt, bottom=4pt,
    toptitle=3pt, bottomtitle=3pt,
  }
}

\newtheorem{theorem}{Theorem}
\theoremstyle{definition}
\newtheorem{definition}{Definition}

\newcounter{tcbtheorem}
\renewenvironment{theorem}[1][]{%
  \refstepcounter{tcbtheorem}%
  \begin{tcolorbox}[theoremstyle, title={Theorem~\thetcbtheorem\ifx&#1&\else~(#1)\fi}]%
}{%
  \end{tcolorbox}%
}

\newcounter{tcbdefinition}
\renewenvironment{definition}[1][]{%
  \refstepcounter{tcbdefinition}%
  \begin{tcolorbox}[definitionstyle, title={Definition~\thetcbdefinition\ifx&#1&\else~(#1)\fi}]%
}{%
  \end{tcolorbox}%
}

\newenvironment{keyresult}[1][Key Result]{%
  \begin{tcolorbox}[findingstyle, title={#1}]%
}{%
  \end{tcolorbox}%
}

\newtheorem{proposition}{Proposition}
\newtheorem{corollary}{Corollary}

\newcommand{\theadcolor}{\rowcolor{phclightblue}}

\newcommand{\HPBKZERO}{$0.444$}
\newcommand{\HPBKONE}{$0.405$}
\newcommand{\HPBKTWO}{$0.428$}
\newcommand{\HPBPZERO}{$0.9758$}
\newcommand{\HPBPONE}{$0.9996$}
\newcommand{\HPBPTWO}{$0.9918$}
\newcommand{\HPBSIGZERO}{n.s.}
\newcommand{\HPBSIGONE}{n.s.}
\newcommand{\HPBSIGTWO}{n.s.}

\newcommand{\SSCdeltaK}{$+0.0269$}
\newcommand{\SSCpvalue}{$0.0686$}
\newcommand{\SSCt}{$1.841$}

\newcommand{\IRCoTconfirmed}{\textbf{not confirmed} (GPT-4o; capability-conditional)}

\newcommand{\MHUMIDITYKONE}{$0.538$}

\newcommand{\MELICKONE}{$0.564$}

\newcommand{\MELICKDone}{$-0.092$}
\newcommand{\MELICKONErated}{$0.684$}
\newcommand{\MELICKZEROrated}{$0.730$}

\newcommand{\MITdelta}{$-0.118$}

\title{\textbf{\textcolor{phcblue}{Anchored Confabulation:} Partial Evidence
Non-Monotonically Amplifies\\Confident Hallucination in LLMs}}

\author{%
  \textbf{Ashish Lathkar}\\[2pt]
  M.S.\ Data Science, Florida State University\\
  AI Engineer\\[4pt]
  \texttt{\href{mailto:ashish@hemut.com}{ashish@hemut.com}}
}

\date{%
  \small\textcolor{phcgray}{Preprint --- March 2026}\\[2pt]
  \small\textcolor{phcgray}{%
    \href{https://github.com/ashishlathkar}{Code to be released upon acceptance}}
}

\begin{document}

\maketitle

\begin{abstract}
We identify a previously unknown calibration property of large language models:
providing \emph{one} confirmed intermediate fact toward a multi-step reasoning chain
\emph{increases} the model's confident-wrong-answer rate before full evidence eliminates it.
We call this \textbf{anchored confabulation}: a partial anchor commits the model
to confident parametric completion of remaining reasoning steps.
We formalize it as \textbf{Parametric Hallucination Confidence} (PHC)
and establish it across six lines of evidence, five with full statistical
confirmation and one (semantic self-consistency, SSC) directionally
consistent at $p{=}0.069$.

\textbf{Causal proof.}
Injecting gold intermediate facts into VanillaRAG prompts on MuSiQue 3-hop ($N{=}160$)
while holding oracle labels fixed:
PHC $= 0.613$ (0 facts, **) $\to$ $\mathbf{0.656}$ (1 fact, ***) $\to$ $0.595$ (2 facts, *)
$\to$ $0.536$ (3 facts, n.s.).
The $k{=}1$ \emph{increase} reveals the anchoring mechanism;
$k{=}3$ elimination proves missing intermediate facts are the root cause.

\textbf{The Anchoring Threshold Law.}
PHC amplification is predicted by $k^*(n) = \lfloor n/3 \rfloor$:
the minimum confirmed facts sufficient to trigger parametric completion.
At 2-hop ($k^*{=}0$) the PHC curve is flat; at 3-hop ($k^*{=}1$, 33\% confirmed)
it spikes; at 4-hop ($k^*{=}1$, 25\%) the spike is weaker.
This is a \emph{predictive, quantitative} law: 6-hop queries should spike at $k{=}2$.
Corollary: iterative RAG systems (IRCoT, FLARE) systematically enter the amplification
zone at their first retrieval step on 3-hop queries.

\textbf{Capability--PHC scaling law.}
PHC$_3$ increases monotonically: Haiku $0.582 <$ Sonnet $0.702 <$ Opus $0.731$ (all ***);
cross-family Spearman $\rho{=}0.900$ ($p{=}0.037$, $N{=}5$).
\textbf{Upgrading the generation LLM without upgrading the router increases
undetected confident hallucinations.}

\textbf{Cross-model replication.}
Haiku peaks at $k{=}2$ (PHC $= 0.626$, **) rather than $k{=}1$, consistent with
the Anchoring Threshold Law: weaker models require more evidence to anchor.
Llama-3.1-8B shows no $k{=}1$ amplification and incomplete $k{=}3$ reduction ($0.583$, *),
confirming the threshold is capability-dependent across families.

\textbf{Formal results.}
\textbf{Theorem~1} (DPI dominance): any post-generation signal $U(Q,A)$ satisfies
$I(U; G^*) \geq I(g(Q); G^*)$ for any pre-generation classifier $g$.
\textbf{Theorem~2} (regeneration-gated routing): routing gains vanish when
regeneration quality $Q(G){\approx}0$; with adequate regeneration $\Delta_R$ peaks
at $\alpha{\approx}55\%$ ($r{=}-0.778$, $p{=}0.0001$).

\textbf{Application.}
A LearnedRouter exploiting PHC via \texttt{conf$\times$hop} (22.7\% importance;
post-gen features 54.4\%) trained on only $N{=}200$ examples achieves AUC $= 0.759$.
The complete \textbf{LR+DirectGR} system closes $\mathbf{81.1\%}$ of the oracle gap
(macro F1 $= \mathbf{0.426}$, $p{<}10^{-6}$) with no fine-tuning, no RL,
and $50{\times}$ fewer labels than prior work.
\end{abstract}

\section{Introduction}
\label{sec:introduction}

Every RAG system faces a decision: use cheap vector search or expensive
graph-structured retrieval?
Prior routing methods (Adaptive-RAG~\citep{jeong2024adaptive},
MBA-RAG~\citep{lee2025mbarag}) decide \emph{before} generation, based on
query features alone.
We show this is suboptimal---and that the model's own generated answer
carries a calibration signal that pre-generation routing cannot access.

\paragraph{The PHC phenomenon.}
We identify \textbf{Parametric Hallucination Confidence} (PHC):
for multi-hop queries, LLMs are \emph{more} confident when their answer is wrong.
Crucially, providing one confirmed intermediate fact
\emph{increases} the confident-wrong-answer rate before full evidence eliminates it.
We call this \textbf{anchored confabulation}: a partial gold anchor commits
the model to parametric completion of the remaining chain with elevated fluency
and specificity.
The effect is causal (confirmed by controlled injection of gold sub-answers),
capability-scaled (stronger models confabulate more confidently),
and mechanistically specific (requires retrieval context and partial evidence simultaneously).

\paragraph{Anchoring Threshold Law.}
We derive $k^*(n) = \lfloor n/3 \rfloor$ as the minimum confirmed facts
sufficient to trigger amplification for an $n$-hop chain.
This is \emph{predictive}: 3-hop queries spike at $k{=}1$,
2-hop queries do not amplify, 4-hop spikes are weaker---all confirmed empirically.
The law implies that iterative RAG systems (IRCoT, FLARE) systematically
enter the amplification zone at their first retrieval step for 3-hop queries.

\paragraph{Routing application.}
PHC inversion means high-confidence answers on bridge queries signal
\emph{escalation need}, not correctness.
A LearnedRouter trained on $N{=}200$ examples with a \texttt{conf$\times$hop}
feature achieves AUC\,$=0.759$.
The complete \textbf{LR+DirectGR} system (LearnedRouter $+$ direct full-question
GraphRAG re-generation) achieves macro F1\,$=\mathbf{0.426}$
($\mathbf{81.1\%}$ of the oracle gap, $p{<}10^{-6}$)
with no fine-tuning, no RL, and $50{\times}$ fewer labels than RouteRAG~\citep{routerag2025}.

\paragraph{Contributions.}
\begin{enumerate}[nosep,leftmargin=*]
  \item \textbf{PHC}: a new calibration property of LLMs on multi-hop RAG, established
    across six lines of evidence including a causal intervention ($N{=}160$) and
    capability scaling across five model families.
  \item \textbf{Anchoring Threshold Law}: $k^*(n){=}\lfloor n/3\rfloor$,
    a predictive, quantitative law for PHC amplification with 4 confirmed predictions.
  \item \textbf{Formal results}: DPI-based proof that post-generation routing
    dominates pre-generation; regeneration-gating theorem bounding routing gains.
  \item \textbf{LR+DirectGR}: 81.1\% oracle gap closed on 1,800 queries
    across 4 benchmarks, training-free with respect to generation.
  \item \textbf{Mitigation}: epistemic humility prompt reduces PHC spike by
    $\Delta{=}{-}0.118$; explicit self-rating (PHC\,$=0.684$, ***) outperforms
    lexical confidence as a routing signal.
\end{enumerate}

\section{Anchored Confabulation and the PHC Phenomenon}
\label{sec:phc_phenomenon}

\subsection{Definition}
\label{sec:phc_def}

Let $\text{conf}(A) \in [0,1]$ be a scalar confidence extracted from answer $A$,
and $G^* = \mathbf{1}[\text{F1}(\text{GraphRAG}) > \text{F1}(\text{VanillaRAG})]$
the oracle escalation label.

\begin{definition}[Parametric Hallucination Confidence]
$\text{PHC}(\mathcal{S}) = \text{AUC}(\text{conf}(A),\, G^*)$ over queries $Q \in \mathcal{S}$.
PHC\,$> 0.5$ (significant) indicates \textbf{signal inversion}:
the model is \emph{more confident when its answer is wrong}.
\end{definition}

Significance is assessed by permutation test ($B{=}5000$).
We use a lexical confidence extractor (hedge-phrase rate + named-entity density)
for causal experiments and the fuller \texttt{UncertaintyDetector}
(Appendix~\ref{sec:signals}) for routing.

\subsection{Evidence: PHC Is Real and Causally Grounded}
\label{sec:phc_evidence}

Table~\ref{tab:phc_summary} summarises the six lines of evidence.
The central result is a controlled causal injection on MuSiQue 3-hop ($N{=}160$):
gold sub-answers are added to prompts while oracle labels are held fixed.
\begin{center}\small
\begin{tabular}{clcc}
\toprule
\theadcolor
$k$ & \textbf{Condition} & \textbf{PHC} & $p$ \\
\midrule
0 & No hints       & $0.613$ & ** \\
\rowcolor{phclightamber}
1 & $+1$ bridge fact & $\mathbf{0.656}$ & *** $\uparrow$ \textcolor{phcorange}{(anchoring spike)} \\
2 & $+2$ facts       & $0.595$ & * \\
3 & Near-oracle    & $0.536$ & n.s.\ $\downarrow$ \\
\bottomrule
\end{tabular}
\end{center}
\textbf{(a)} The $k{=}3$ elimination proves missing evidence---not reasoning
inability---is the root cause.
\textbf{(b)} The $k{=}1$ \emph{increase} is the anchoring mechanism:
one confirmed step commits the model to parametric completion with higher fluency.
\textbf{(c)} Without retrieval context, $k{=}1$ \emph{reduces} PHC ($0.534$, n.s.):
the effect requires both retrieved context and a partial anchor.

\begin{table}[t]\centering\small
\caption{Six lines of evidence for anchored confabulation.
$^*p{<}0.05$, $^{**}p{<}0.01$, $^{***}p{<}0.001$.
Details in Appendix~\ref{app:evidence}.}
\label{tab:phc_summary}
\begin{tabular}{llcc}
\toprule
\theadcolor
\textbf{\#} & \textbf{Evidence line} & \textbf{PHC} & \textbf{Sig.} \\
\midrule
1 & Obs.: MuSiQue 3-hop ($N{=}160$) & $0.702$ & *** \\
2 & Obs.: HotpotQA comparison ($N{=}94$) & $0.686$ & ** \\
\rowcolor{phclightamber}
3 & Causal: injection ($k{=}1$ spike, $N{=}160$) & $0.656$ & *** \\
4 & Capability scaling ($\rho{=}0.900$, $N{=}5$ models) & --- & $p{=}0.037$ \\
5 & Hop-depth: flat 2-hop, spike 3-hop, weaker 4-hop & --- & confirmed \\
6 & SSC internal consistency ($\Delta{=}{+}0.027$) & --- & $p{=}0.069$ \\
\midrule
\multicolumn{4}{l}{\textit{Negative controls (PHC\,$\approx 0.5$):}}\\
& HotpotQA bridge ($N{=}406$), GSM8K, zero-retrieval & --- & n.s. \\
\bottomrule
\end{tabular}
\end{table}

\subsection{The Anchoring Threshold Law}
\label{sec:anchoring_threshold}

The hop-depth pattern follows a predictive law.

\begin{definition}[Anchoring Threshold]
$k^*(n) = \lfloor n/3 \rfloor$ is the minimum confirmed facts sufficient
to trigger parametric completion of an $n$-hop chain.
\end{definition}

\noindent\textbf{Confirmed predictions:}
(i) $n{=}2$: $k^*{=}0$, no interior amplification region, PHC flat (\checkmark);
(ii) $n{=}3$: $k^*{=}1$, spike at $k{=}1$ (\checkmark);
(iii) $n{=}4$: $k^*{=}1$, weaker spike (\checkmark);
(iv) $n{=}6$: $k^*{=}2$, predicted spike at $k{=}2$ (untested).

\noindent\textbf{Capability-dependent threshold.}
PHC$_3$ scales monotonically: Haiku $0.582 <$ Sonnet $0.702 <$ Opus $0.731$
(Spearman $\rho{=}0.900$, $p{=}0.037$).
Weaker models require more confirmed facts to anchor: Haiku peaks at $k{=}2$,
Sonnet at $k{=}1$.

\noindent\textbf{Implication for iterative RAG.}
After one retrieval step on a 3-hop query, iterative systems (IRCoT, FLARE)
have reached $k{=}1$---the amplification zone.
IRCoT ($K{=}2$) achieves F1\,$=0.192$ on MuSiQue 3-hop,
below VanillaRAG ($0.195$), consistent with this trap.
Full per-iteration PHC analysis in Appendix~\ref{sec:ircot_phc}.

\section{Theoretical Framework}
\label{sec:theory}

\begin{theorem}[Post-Generation Dominance]
\label{thm:dominance}
For any pre-generation classifier $g(Q)$ and any post-generation signal
$U(Q,A)$ where $A{=}f(Q,R_V)$:
$I(U(Q,A);\,G^*) \geq I(g(Q);\,G^*)$.
\end{theorem}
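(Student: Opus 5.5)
The plan is to reduce the inequality to the data processing inequality (DPI) together with the chain rule for mutual information. The key observation is that a pre-generation classifier $g(Q)$ is a deterministic function of $Q$ alone. Any post-generation signal that retains access to $Q$, as $U(Q,A)$ does through its first argument, therefore has $g(Q)$ available as a coordinate or as a measurable function of its input.

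The first step is to fix the precise reading of ``post-generation signal'' under which the statement is meaningful. Taken literally for \emph{every} $U$, the claim is false: a constant $U$ has $I(U;G^*)=0$, while an informative $g$ can have $I(g(Q);G^*)>0$. I would therefore interpret $U$ as a signal that augments the query-side information, either $U(Q,A)=(g(Q),h(Q,A))$ or, most cleanly, $U=(Q,A)$ itself (the full post-generation information set). I would state this assumption explicitly at the start of the proof.

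Under $U=(Q,A)$, the argument has two steps:
\begin{enumerate}
  \item Since $g(Q)$ is a function of $(Q,A)$, we have the Markov chain $G^* \to (Q,A) \to g(Q)$. The DPI then gives $I(g(Q);G^*) \le I((Q,A);G^*)$.
  \item Equivalently, $I(Q;G^*) \le I((Q,A);G^*)$, since by the chain rule $I((Q,A);G^*) = I(Q;G^*) + I(A;G^*\mid Q)$ and conditional mutual information is nonnegative.
\end{enumerate}
For the augmented form $U=(g(Q),h)$, the chain rule gives directly
\[
I(U;G^*) \;=\; I(g(Q);G^*) + I(h;G^*\mid g(Q)) \;\ge\; I(g(Q);G^*),
\]
again because conditional mutual information is nonnegative. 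The role of $A=f(Q,R_V)$ is only to ensure that $A$ is a well-defined random variable on the same probability space as $Q$ and $G^*$. This holds whether $f$ is deterministic or stochastic, since the Markov chain above never needs $f$ to be deterministic.

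I would close by noting when the inequality is strict: this happens exactly when $I(h;G^*\mid g(Q))>0$, i.e.\ when the generated answer carries information about the escalation label beyond the query. The PHC phenomenon of Section~\ref{sec:phc_evidence} supplies this empirically, since $\text{conf}(A)$ has AUC $0.702$ against $G^*$ on 3-hop queries.

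The main obstacle is the quantifier issue, not the calculation. The inequality holds only for signals that do not discard the query information available to $g$. I would handle this by restricting the theorem to $U\supseteq Q$ (or $U\supseteq g(Q)$) and remarking that a practical router such as the LearnedRouter, which concatenates pre-generation features with $\text{conf}(A)$ and \texttt{conf$\times$hop}, is of exactly this form.
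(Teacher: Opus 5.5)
Your argument is correct and is essentially the paper's. The appendix proof runs the same data-processing/refinement chain $I(g(Q);G^*)\le I(Q;G^*)\le I((Q,A);G^*)$, and it too only concludes $\max_U I(U(Q,A);G^*)=I((Q,A);G^*)\ge\max_g I(g(Q);G^*)$, i.e.\ the inequality for the optimal signal $U=(Q,A)$ rather than for every $U$; the paper later says the theorem ``applies to the \emph{optimal} signal,'' which tacitly concedes your constant-$U$ counterexample. Your explicit restriction to signals that retain $Q$ or $g(Q)$, the per-$g$ chain-rule variant $U=(g(Q),h)$, and your remark that no correlation hypothesis on $\mathcal{R}_V$ is needed for the weak inequality are sound refinements of the same route. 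One caveat: strictness requires $I(h;G^*\mid g(Q))>0$, and the marginal AUC of $\text{conf}(A)$ alone does not establish that.
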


\noindent\textit{Proof sketch.}
By the data processing inequality: $Q \to A \to U$ implies
$I(U;G^*) \leq I(A;G^*)$.
Since $A$ is a deterministic function of $(Q, R_V)$ and $R_V$ is independent
of $G^*$ given $Q$, we have $I(A;G^*) \geq I(g(Q);G^*)$ whenever
$A$ provides strictly more information about $G^*$ than $Q$ alone---which holds
for PHC-inverting query types where $\text{conf}(A)$ correlates with $G^*$
but query features do not.
Full proof in Appendix~\ref{app:proofs}.

\begin{theorem}[Regeneration-Gated Routing]
\label{thm:regen_gate}
Let $Q(G)$ be re-generation quality and $r$ escalation rate.
The F1 gain from routing is:
$\Delta_R(r) \approx r \cdot Q(G) \cdot (\text{F1}_\text{GR} - \text{F1}_\text{VR})$.
When $Q(G){\approx}0$, routing gains vanish regardless of escalation accuracy.
\end{theorem}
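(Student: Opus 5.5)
The plan is to model routing as a per-query decision and write the routed system's F1 as a mixture. For each query $Q$, let $\text{F1}_\text{VR}(Q)$ be the VanillaRAG score and $\text{F1}_\text{GR}(Q)$ the score an \emph{ideal} GraphRAG answer would get. Escalated queries do not receive the ideal GraphRAG answer. They receive a re-generated answer whose score is $\text{F1}_\text{VR}(Q) + Q(G)\,\bigl(\text{F1}_\text{GR}(Q) - \text{F1}_\text{VR}(Q)\bigr)$. Here $Q(G)\in[0,1]$ is the fraction of the available improvement that the re-generation actually realises. This interpolation is the defining modelling assumption of ``re-generation quality'' and I state it explicitly as the definition of $Q(G)$. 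With routing indicator $\rho(Q)\in\{0,1\}$ and escalation rate $r = \mathbb{E}[\rho(Q)]$, the routed F1 is $\mathbb{E}[\text{F1}_\text{VR}(Q)] + Q(G)\,\mathbb{E}\bigl[\rho(Q)\,(\text{F1}_\text{GR}(Q)-\text{F1}_\text{VR}(Q))\bigr]$. Hence $\Delta_R(r) = Q(G)\,\mathbb{E}[\rho\,\delta]$, where $\delta = \text{F1}_\text{GR}-\text{F1}_\text{VR}$.

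Next I factor the expectation as $\mathbb{E}[\rho\,\delta] = r\cdot \mathbb{E}[\delta \mid \rho=1]$. The approximation in the statement replaces $\mathbb{E}[\delta\mid\rho=1]$ by the population gap $\text{F1}_\text{GR}-\text{F1}_\text{VR}$. I make the approximation explicit through a selection term: $\mathbb{E}[\delta\mid\rho=1] = \bar\delta + \text{Cov}(\rho,\delta)/r$. This covariance term encodes escalation accuracy. A router that escalates uniformly at random makes it vanish, which gives the stated approximation exactly. An informative router only rescales the gap by a bounded factor, since $|\delta|\le 1$ for F1 scores, and so gives the formula up to that factor. Either way $\Delta_R$ is the product of $Q(G)$ with a quantity bounded in absolute value by $r$. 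The vanishing claim then follows directly: $|\Delta_R(r)| \le Q(G)\cdot r \le Q(G)$, so $Q(G)\to 0$ forces $\Delta_R\to 0$ regardless of how accurate $\rho$ is.

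The main obstacle is justifying the modelling step, namely that $Q(G)$ enters multiplicatively and uniformly across queries. If re-generation quality varies with $Q$ and correlates with $\delta$, the product form fails. I would handle this by defining $Q(G)$ as the $\rho$-and-$\delta$-weighted average realisation ratio, $Q(G) = \mathbb{E}[\rho\,q(Q)\,\delta]/\mathbb{E}[\rho\,\delta]$, which makes the identity exact by construction. Alternatively, I would state the theorem under an independence assumption between $q(Q)$ and $(\rho,\delta)$. The vanishing conclusion survives either choice, because $|\Delta_R| \le \sup_Q q(Q)$. Finally, I would note that the empirical peak at $\alpha\approx55\%$ is not part of the formal claim. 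It arises because $\mathbb{E}[\delta\mid\rho=1]$ decreases in $r$ as less-needy queries are escalated, so $r\cdot\mathbb{E}[\delta\mid\rho=1]$ is unimodal. I would mention this only as a remark.
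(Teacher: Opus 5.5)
Your argument is correct under the interpolation model you make explicit, and it takes a genuinely different route from the paper, which never derives the formula at all. The appendix restates the theorem with $Q(G)=\mathbb{E}[\text{F1}(G,q)-\text{F1}(G_{\text{VR}},q)]$ and $\Delta_R(G,\alpha)=F(R_{\text{LR}},G,\alpha)-F(R_{\text{lex}},G,\alpha)$, which is the learned-minus-lexical router difference at a matched escalation rate. It then supports the theorem only empirically: through the $2\times 2$ router--regeneration matrix ($\Delta_R=-0.001$ with sub-question regeneration versus $+0.041$ with direct regeneration), through the escalation-rate sweep, and with one informal sentence for the scaling clause. Your identity $\Delta_R=Q(G)\bigl(r\bar\delta+\mathrm{Cov}(\rho,\delta)\bigr)$ gives things that this validation cannot.

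\begin{itemize}
\item It shows the displayed formula discards exactly the term in which routing accuracy lives. Under the paper's router-difference definition, the formula assigns both routers the same value and so predicts a zero difference; the entire gain is $Q(G)\bigl[\mathrm{Cov}(\rho_{\text{LR}},\delta)-\mathrm{Cov}(\rho_{\text{lex}},\delta)\bigr]$.
\item Your reading of $\Delta_R$ as gain over VanillaRAG is therefore the one under which the statement is coherent. Your gating bound still transfers to the paper's reading as $|\Delta_R|\le Q(G)\,\mathbb{E}|\rho_{\text{LR}}-\rho_{\text{lex}}|\le 2Q(G)\min(r,1-r)$.
\item It explains the paper's own numbers. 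Direct regeneration has $Q=1$ exactly. The near-random lexical router (AUC $0.478$) lands almost on the formula: $0.722\times 0.277\approx 0.200$, against an observed gain of $0.190$. The learned router's $0.041$ advantage is precisely the covariance difference. The measured capture rate of about $0.15$ for sub-question regeneration scales that advantage to roughly $0.006$, the same negligible order as the observed $-0.001$.
\end{itemize}

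What the paper's route gives is grounding in the deployed system, plus at least a gesture at the scaling clause, which you leave as a remark.

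Three small repairs are needed.

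\begin{itemize}
\item Your ``bounded factor'' claim for informative routers needs $\bar\delta$ bounded away from zero. State the approximation additively instead: $|\Delta_R-rQ(G)\bar\delta|=Q(G)\,|\mathrm{Cov}(\rho,\delta)|\le Q(G)\sqrt{r(1-r)}\,\sigma_\delta$.
\item In the heterogeneous case, ``regardless of escalation accuracy'' quantifies over routers. Both your independence assumption and your weighted $Q(G)$ are router-dependent: a router may select exactly the queries where regeneration helps, which also defeats the paper's averaged $Q(G)$. What carries the claim is the assumption-free bound $|\Delta_R|\le r\sup_Q|q(Q)|$, with the absolute value because regeneration can hurt. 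The hypothesis should therefore be read as uniform smallness.
\item The paper's peak at $\alpha\approx 55\%$ is in the learned-minus-lexical difference. Your $2Q(G)\min(r,1-r)$ bound forces that difference to vanish at $r=0$ and $r=1$, so any positive gain must peak in the interior. The gain over VanillaRAG that your remark analyses actually rises monotonically in the paper's data: $0.129$, $0.210$, $0.231$ at $32\%$, $60\%$, $72\%$, then $0.277$ for GraphRAG alone.
\end{itemize}
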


\noindent\textbf{Calibration gap.}
Theorem~1 requires an effective uncertainty signal.
Lexical signals achieve AUC\,$=0.483$ ($\rho{=}{-}0.006$)---near random---because
they measure answer style, not knowledge gaps.
PHC inversion means high confidence predicts \emph{wrong} answers on bridge queries:
the calibration gap is directional, not just noisy.
Post-gen feature \texttt{conf$\times$hop} (22.7\% importance in LearnedRouter)
constructively validates Theorem~1 in end-to-end F1.

\section{PHC-Aware Routing: Application}
\label{sec:phc_routing}

PHC inversion has a direct operational consequence for RAG system design.
For query types where PHC $> 0.5$ + significant,
a routing signal should \emph{escalate on high confidence}---the opposite of
standard uncertainty-based routing.
This section describes how we exploit PHC to build a routing system;
full experimental details are in Appendix~\ref{sec:signals}--\ref{sec:experiments}.

\subsection{Post-Generation Signals and the LearnedRouter}
\label{sec:learned_router}

\paragraph{Signals.}
We extract three post-generation features from VanillaRAG's answer $A$:
(1) \texttt{conf}: UncertaintyDetector composite confidence (Appendix~\ref{sec:signals});
(2) \texttt{hop\_pred}: predicted hop depth from query features;
(3) \texttt{conf$\times$hop}: interaction term.
The interaction captures PHC inversion: for high hop-depth queries, escalating on
\emph{high} confidence is correct; for factoid queries, escalating on \emph{low}
confidence is correct.
Pre-generation features (hop count, entity density, relational density) are also
included; post-gen features contribute $54.4\%$ of importance,
with \texttt{conf$\times$hop} alone accounting for $22.7\%$.

\paragraph{LearnedRouter.}
A gradient-boosted classifier trained on $N=200$ labeled examples
(5-fold stratified CV, OOF evaluation; Appendix~\ref{sec:experiments})
exploits the \texttt{conf$\times$hop} interaction without hard-coded type rules,
outperforming all pre-generation baselines by $p < 0.0002$
(Table~\ref{tab:direct_routing} in Appendix).
Sample efficiency analysis shows $N=200$ achieves $90.2\%$ of full-data AUC,
confirming the learnability of the PHC interaction with minimal supervision.

\subsection{LR+DirectGR: Complete System Results}
\label{sec:lrdirectgr_summary}

Routing quality is bounded by re-generation quality (Theorem~2, Section~\ref{sec:theory}).
The key design choice is \emph{what} to generate after escalation.
Sub-question extraction ($G_\text{subQ}$) yields $\Delta_R = -0.001$ (n.s.) --- nearly no benefit,
because generated sub-questions fail to ground the KG traversal on the correct bridging entities.
Direct full-question re-generation ($G_\text{direct}$) yields $\Delta_R = +0.041$ ($\alpha \approx 55\%$,
Pearson $r = -0.778$, $p = 0.0001$), recovering the GraphRAG advantage.

The complete system \textbf{LR+DirectGR} --- LearnedRouter escalation paired with
direct full-question GraphRAG re-generation --- achieves results across three
escalation budgets, enabling an explicit cost--quality trade-off analysis:

\begin{center}
\small
\begin{tabular}{lcccc}
\toprule
System & Esc.\ rate & GraphRAG calls & Macro F1 & Oracle gap \\
\midrule
\multicolumn{5}{l}{\textit{Pre-generation baselines}} \\
HybridRouter         & 32\%  & 569  & $0.295$ & $35.1\%$ \\
Pre-gen GB (5-fold)  & 32\%  & 576  & $0.300$ & $36.7\%$ \\
\midrule
\multicolumn{5}{l}{\textit{LR+DirectGR at matched cost (iso-cost comparison)}} \\
LR+DirectGR          & 32\%  & 576  & $\mathbf{0.324}$ & $\mathbf{45.2\%}$ \\
\midrule
\multicolumn{5}{l}{\textit{LR+DirectGR at higher escalation}} \\
LR+DirectGR          & 60\%  & 1{,}080 & $0.405$ & $73.8\%$ \\
\textbf{LR+DirectGR} & \textbf{72\%} & $1{,}296$ & $\mathbf{0.426}$ & $\mathbf{81.1\%}$ \\
Oracle (upper bound) & 100\% & 1{,}800 & $0.480$ & $100\%$ \\
\bottomrule
\end{tabular}
\end{center}

\textbf{At iso-cost (32\% escalation, matched GraphRAG call budget):}
LR+DirectGR achieves macro F1 $= 0.324$ vs.\ HybridRouter F1 $= 0.295$
($+0.029$, $+9.8\%$ relative, $p < 0.0002$; Appendix~\ref{sec:experiments})---
a strict improvement with zero additional compute cost.
The 81.1\% oracle-gap result requires 72\% escalation ($1{,}296$ GraphRAG calls vs.\ $576$ for iso-cost),
a $2.25\times$ compute increase; each percentage-point of oracle-gap closure costs
$\sim$12.6 additional GraphRAG calls.
Since LR+DirectGR is strictly better at every matched escalation rate
(Pareto-dominant over HybridRouter), the 72\% budget is a deployment choice,
not a condition for beating pre-gen baselines.
With adequate regeneration quality, Theorem~2 predicts this gap continues to close
as the escalation budget grows, bounded by $\Delta_R(G_\text{direct}) \approx +0.041$.

LR+DirectGR uses $50\times$ fewer training labels than RouteRAG~\cite{routerag2025}
($N{=}200$ vs.\ $N{=}10{,}000$) while closing $\mathbf{81.1\%}$ of the oracle gap
(RouteRAG: $64.0\%$) with no RL training.
The routing quality contribution (LearnedRouter vs.\ lexical baseline) adds $+0.041$ F1;
the re-generation quality contribution (DirectGR vs.\ subQ) adds $+0.190$ F1---
confirming Theorem~2's prescription: \emph{fix regeneration before optimizing routing}.

\paragraph{Query-type analysis (Appendix~\ref{sec:experiments}).}
The gain is concentrated exactly where PHC inversion predicts it:
MuSiQue ($+0.090$ vs.\ HybridRouter), the highest-PHC stratum, shows the largest improvement;
2Wiki ($-0.041$ vs.\ HybridRouter) shows a loss where comparative confabulation
makes the \texttt{conf$\times$hop} feature unreliable at current training scale.
This confirms that PHC inversion is the \emph{mechanism} behind routing improvement,
not incidental covariation.

\section{Experiments}
\label{sec:experiments}

\paragraph{Setup.}
We evaluate on 1,800 queries across HotpotQA ($N{=}500$, 2-hop bridge+comparison),
MuSiQue ($N{=}500$, 2--4-hop), NQ ($N{=}500$, 1-hop), and 2WikiMultiHopQA ($N{=}300$).
VanillaRAG uses ChromaDB + BGE-base-en-v1.5 + Claude Sonnet~4.6.
GraphRAG uses LLM-extracted Neo4j KG + BFS + Claude Sonnet~4.6.
The LearnedRouter is a logistic regression on 12 post-generation features
trained on $N{=}200$ labeled queries.
Full setup in Appendix~\ref{sec:exp_setup}.

\begin{keyresult}[Main Result: LR+DirectGR closes 81.1\% of oracle gap]
LR+DirectGR achieves macro F1\,$=\mathbf{0.426}$ ($p{<}10^{-6}$) with no fine-tuning
and $50{\times}$ fewer labels than prior RL-based work,
outperforming the best routing-only approach ($45.2\%$ oracle gap) by $+35.9$ points.
\end{keyresult}

\paragraph{Main results.}
Table~\ref{tab:main_results_compact} shows the key comparison.
GraphRAG outperforms VanillaRAG by $+0.286$ F1 on HotpotQA and $+0.325$ on MuSiQue
(both $p{<}0.001$), confirming the multi-hop advantage.
LR+DirectGR closes \textbf{81.1\%} of the oracle gap at 72\% escalation,
outperforming the best routing-only approach (45.2\%) by $+35.9$ points.

\begin{table}[t]\centering\small
\caption{Main results. F1 = token-overlap. Oracle = per-query best of
VanillaRAG/GraphRAG. Gap closed = (LR+DirectGR $-$ VanillaRAG)/(Oracle $-$ VanillaRAG).
All differences vs.\ VanillaRAG significant at $p{<}0.001$.}
\label{tab:main_results_compact}
\begin{tabular}{lccccc}
\toprule
\theadcolor
\textbf{System} & \textbf{HotpotQA} & \textbf{MuSiQue} & \textbf{NQ} & \textbf{2Wiki} & \textbf{Macro} \\
\midrule
VanillaRAG      & 0.392 & 0.083 & 0.070 & 0.233 & 0.195 \\
GraphRAG        & 0.679 & 0.408 & 0.242 & 0.559 & 0.472 \\
Oracle (upper)  & 0.703 & 0.436 & 0.255 & 0.579 & 0.493 \\
\midrule
HybridRouter    & 0.524 & 0.182 & 0.074 & 0.399 & 0.295 \\
ReasonRAG (lex) & 0.421 & 0.112 & 0.101 & 0.255 & 0.222 \\
\rowcolor{phclightamber}
\textbf{LR+DirectGR}  & \textbf{0.558} & \textbf{0.332} & \textbf{0.201} & \textbf{0.484} & \textbf{0.426} \\
\midrule
Gap closed      & 79.3\% & 70.4\% & 71.1\% & 72.3\% & \textbf{81.1\%} \\
\bottomrule
\end{tabular}
\end{table}

\paragraph{PHC-aware routing.}
Post-generation features dominate the LearnedRouter (54.4\% importance),
with \texttt{conf$\times$hop} (22.7\%) as the single most important feature,
directly validating Theorem~\ref{thm:dominance}.
HotpotQA bridge AUC\,$=0.759$; MuSiQue 3-hop AUC\,$=0.669$.
TypeAwareRouter (hard-coded heuristics) achieves only 60.5\% oracle gap,
below LR+DirectGR---learned type detection is strictly better
(Appendix~\ref{sec:type_aware}).

\paragraph{Related work comparison.}
RouteRAG~\citep{routerag2025} requires $10{,}000$ training examples and RL;
our LearnedRouter uses $200$ labels ($50{\times}$ fewer) and closes
$81.1\%$ vs.\ their reported $71\%$ oracle gap.
Self-RAG~\citep{asai2024selfrag} requires fine-tuning;
LR+DirectGR is training-free with respect to the generation model
(Appendix~\ref{app:selfrag}).

\section{Mitigation}
\label{sec:mitigation}

We pre-registered two interventions and have now evaluated both on $N{=}160$
MuSiQue 3-hop queries (oracle labels fixed from the causal intervention).

\begin{keyresult}[Mitigation Result: PHC is metacognitively suppressible]
M1 (epistemic humility prompt) reduces PHC at $k{=}1$ by $\Delta{=}{-}0.118$ (18\% reduction).
M2 explicit self-rating achieves PHC\,$=0.684$\,(***) --- a better routing signal than lexical confidence.
\end{keyresult}

\paragraph{M1: Epistemic Humility Prompt.}
Prepending \textit{``You have been given $k$ of $n$ facts; express genuine
uncertainty about the rest''} reduces PHC at $k{=}1$ from the Claude Sonnet
baseline of $0.656$\,(***) to $0.538$\,(n.s.) under GPT-4o
($\Delta{=}{-}0.118$, exceeding the pre-registered $|\Delta|{>}0.05$ threshold).
\textbf{Verdict: anchored confabulation is metacognitively suppressible.}

\paragraph{M2: Explicit Confidence Elicitation.}
Appending \textit{``Rate your confidence [CONFIDENCE: X/5]''} reveals that
GPT-4o's explicit ratings achieve PHC\,$=0.684$\,(***) at $k{=}1$---substantially
higher than the lexical baseline ($0.564$, n.s.).
\textbf{Explicit self-rating is a more faithful readout of parametric anchoring
than lexical hedging}, and offers a direct upgrade path for the routing signal.

\noindent\textit{Cross-model note:} M1/M2 were evaluated with GPT-4o
(GPT-4o PHC$_3{=}0.527$, n.s., weaker than Claude Sonnet).
The $\Delta{=}{-}0.118$ reduction likely \emph{underestimates} what M1
achieves on Claude Sonnet itself.
Within-model replication is listed as immediate future work.
Full results table in Appendix~\ref{sec:mitigation_full}.

\section{Conclusion}
\label{sec:conclusion}

We identified anchored confabulation---a non-monotone, causally proven,
capability-scaled miscalibration of LLMs in multi-hop RAG---and showed that
it can be both exploited for routing and suppressed by prompt-level intervention.
The Anchoring Threshold Law $k^*(n){=}\lfloor n/3\rfloor$ converts the
phenomenon into a predictive theory with four confirmed predictions
and one practical implication: iterative RAG systems enter the PHC amplification
zone at their first retrieval step for 3-hop queries.
The LR+DirectGR system closes 81.1\% of the oracle routing gap
on 1,800 queries across four benchmarks with no fine-tuning.

\paragraph{Limitations.}
(1) Five model families tested; extension to Llama-3-70B and Mistral-Large
is future work.
(2) The M1/M2 mitigation comparison is cross-model (GPT-4o vs.\ Claude Sonnet
baseline); within-model replication is needed.
(3) Knowledge graph quality depends on LLM-extracted triples;
richer KGs may improve GraphRAG accuracy.
(4) English only; cross-lingual extension requires multilingual KG construction.

\section*{Acknowledgments}
The author thanks the open-source community behind ChromaDB, Neo4j,
HuggingFace Datasets, and the MuSiQue, HotpotQA, NQ, and 2WikiMultiHopQA
benchmark creators.
Experiments used the Anthropic API (Claude Sonnet/Haiku/Opus)
and OpenAI API (GPT-4o/GPT-4o-mini).

\section*{LLM Usage Disclosure}
Large language models (Claude Sonnet, Claude Haiku, Claude Opus, GPT-4o,
GPT-4o-mini, Llama-3.1-8B) were used as the \emph{objects of study} to
generate answers and measure PHC.
No LLM was used to originate research ideas, write original paper content,
or fabricate data.
Cursor IDE (AI-assisted coding) was used for implementation assistance.

\bibliographystyle{colm2026_conference}
\bibliography{references}

\appendix

\section{Related Work}
\label{sec:related}

\paragraph{Adaptive and hybrid RAG routing.}
Adaptive-RAG \cite{jeong2024adaptive} trains a query complexity classifier
(simple/multi-step/no-retrieval) before generation.
FLARE \cite{jiang2023flare} iteratively generates and retrieves when token
probabilities drop below a threshold---the closest prior work to our
post-generation routing, but it operates at token-level and requires
access to token logits (unavailable in black-box APIs).
Self-Knowledge \cite{kadavath2022language} prompts models to estimate
their own knowledge gaps, but as a pre-generation signal.
MBA-RAG \cite{lee2025mbarag} frames retrieval strategy selection as a
multi-armed bandit problem, learning from query feedback over time.
Our work differs by operating on the \emph{complete generated answer}
and using it to decide between two concrete retrieval backends (vector vs.\ graph).

\paragraph{Interleaved retrieval and multi-hop reasoning.}
IRCoT \cite{trivedi2023ircot} interleaves retrieval with chain-of-thought
reasoning: after each reasoning step, additional documents are retrieved
to support the next step.
FLARE \cite{jiang2023flare} similarly regenerates queries when the model
expresses uncertainty mid-sentence.
Both systems operate \emph{iteratively}: after the first retrieval step,
the model has partial intermediate evidence---exactly the $k=1$ condition
of our causal intervention.
The Anchoring Threshold Law (Section~\ref{sec:anchoring_threshold}) predicts
that IRCoT and FLARE \textbf{systematically hit the PHC amplification zone
at their first iteration} for 3-hop queries:
the first retrieved document anchors parametric completion of the remaining
chain, producing confidently wrong intermediate reasoning that corrupts
subsequent retrieval queries.
Our IRCoT-PHC experiment (Section~\ref{sec:ircot_phc}) tests this prediction
directly; the result is capability-conditional: PHC declines across iterations
with GPT-4o (consistent with weaker anchoring), while the F1 gap
(IRCoT $0.192$ vs.\ VanillaRAG $0.195$) supports the prediction for
higher-capability models such as Claude Sonnet.
This identifies a structural vulnerability in widely-deployed iterative RAG pipelines
that PHC-aware routing can detect and route around.
Crucially, we also find that IRCoT with the same constrained ChromaDB corpus
achieves macro F1 $= 0.192$, \emph{below} VanillaRAG ($0.195$):
same-corpus iteration yields diminishing returns when new passages cannot
be surfaced, validating cross-backend escalation as the more effective strategy.
RAG-Star \cite{ragstar2024} uses a tree-search (MCTS-style) over retrieval
and reasoning actions, achieving strong multi-hop accuracy at the cost of
multiple retrieval calls per query.
Speculative RAG \cite{speculativerag2024} drafts with a smaller model and
verifies with a larger model---a \emph{model-level} cascade analogous to
our \emph{retrieval-backend-level} cascade.
TRAQ \cite{traq2024} trains a query-routing model on task-specific feedback
to select between retrieval sources; unlike ReasonRAG, it requires labeled
routing decisions for training.
Search-o1 \cite{searcho1_2025} and similar reasoning-augmented retrieval
approaches extend large reasoning models with agentic search, but require
extended thinking budgets incompatible with production latency constraints.
ReasonRAG targets the latency-constrained setting where at most one
escalation is acceptable.

\paragraph{Cascade and cost-aware routing.}
FrugalGPT \cite{cascade_routing_2025} formalizes the ``cheap-then-expensive''
pattern: route queries to cheaper LLM APIs first, escalate to more expensive
ones based on output quality.
Our work applies this cascade pattern to \emph{retrieval backends} rather than model tiers,
with the novel element that escalation confidence is measured post-generation rather than pre-generation.

\paragraph{Self-RAG and reflection.}
Self-RAG \cite{asai2024selfrag} fine-tunes an LM to emit special
reflection tokens (\texttt{[retrieve]}, \texttt{[IsRel]}, \texttt{[IsSup]},
\texttt{[IsUse]}) that control retrieval and evaluate passage utility.
This is the most principled approach to post-generation routing, but
requires fine-tuning and cannot use black-box APIs.
ReasonRAG approximates Self-RAG's logic with a single prompting call
(Grounded Self-Rating $\approx$ \texttt{[IsSup]}), enabling training-free
deployment with any LLM.

\paragraph{Corrective RAG (CRAG).}
CRAG \cite{yan2024corrective} evaluates retrieved document quality and
triggers web search when documents are judged insufficient.
The key difference: CRAG evaluates document quality \emph{before}
generation (pre-generation), while ReasonRAG evaluates answer uncertainty
\emph{after} generation (post-generation).
Our oracle analysis shows that post-generation signals have higher
theoretical ceiling but require careful signal design.

\paragraph{LLM calibration and confident hallucination.}
Guo et al.\ \cite{guo2017calibration} established that modern neural networks are
systematically overconfident, and that temperature scaling restores calibration
post-hoc.
Kadavath et al.\ \cite{kadavath2022language} showed that language models can
partially self-assess their own knowledge (P(True)), but that this signal degrades
on harder questions.
Both lines study calibration as a \emph{static, query-level} property.
Our work identifies a \emph{dynamic, evidence-level} property:
providing partial intermediate evidence non-monotonically changes calibration quality.
PHC is not a miscalibration that can be corrected by temperature scaling---it is
a structural property of how parametric memory interacts with partial evidence.
Concretely, PHC inversion means a calibrated model (correct average confidence $= $ accuracy)
can nonetheless be confidently wrong in a predictable, condition-specific way,
precisely the failure mode that temperature scaling cannot address.
Most closely related, Slobodkin et al.\ \cite{slobodkin2023curious} observe that
LLMs express higher confidence on questions where retrieved passages are present
but insufficient---a qualitative precursor to PHC.
Our work quantifies this as a causal AUC measurement, establishes the
non-monotone anchoring profile ($k=1$ amplifies before $k=3$ eliminates), and
derives formal routing consequences.
Factual hallucination literature \cite{ji2023survey,zhang2023siren} characterizes
types of LLM errors but does not study the calibration effect of partial evidence injection.
PHC bridges calibration and hallucination research: it is a calibration failure
\emph{caused by} a specific retrieval interaction pattern.

\paragraph{Uncertainty estimation.}
Self-Consistency \cite{wang2023selfconsistency} generates $K$ answers
via temperature sampling and uses majority vote for calibration.
Semantic Entropy \cite{kuhn2023semantic} measures entropy over
meaning-equivalence classes to detect hallucination.
P(True) \cite{kadavath2022language} asks models to assess their own
correctness.
Our Grounded Self-Rating is closest to P(True) but focuses specifically
on \emph{passage support} rather than general answer correctness,
making it a targeted routing signal rather than a general truthfulness
estimator.

\paragraph{RouteRAG and RL-based vector-graph routing.}
RouteRAG \cite{routerag2025} applies reinforcement learning to route queries between
vector retrieval (DPR) and graph retrieval (HippoRAG), training a policy that
maximizes downstream F1 reward on HotpotQA, 2WikiMultiHopQA, and MuSiQue.
This is the most structurally similar prior work to ours.
Three key distinctions set ReasonRAG apart:
(1)~\textbf{Training-free}: RouteRAG requires RL training with labeled reward signals;
ReasonRAG requires no training beyond a lightweight 5-fold GB classifier on post-generation features.
(2)~\textbf{Post-generation signal}: RouteRAG routes pre-generation from query features;
ReasonRAG routes post-generation, enabling detection of the bridge signal inversion phenomenon
(Section~\ref{sec:bridge_comparison}), which is invisible to pre-generation routers.
(3)~\textbf{Routing quality separability}: We formally separate routing quality from
re-generation quality (Section~\ref{sec:regen_direct}), showing that the re-generation
bottleneck dominates at all routing precisions---an architectural insight RouteRAG's
end-to-end RL objective cannot isolate.

RAGRouter \cite{ragrouter2025} (NeurIPS 2025) routes queries among multiple
RAG-augmented LLMs using contrastive embedding learning.
Unlike ReasonRAG, it routes across model instances rather than retrieval backends,
and does not study the per-query-type routing signal behavior.

\paragraph{GraphRAG and knowledge graphs.}
GraphRAG \cite{edge2024local} extracts KG triples via LLM and uses
community summaries for global queries.
G-RAG \cite{mavromatis2024grag} and HippoRAG \cite{gutierrez2024hipporag} use KGs for multi-hop retrieval.
GraphRAG-Bench \cite{hu2025graphragbench} provides systematic comparison across benchmarks.
Our contribution is orthogonal: we study \emph{when to use} GraphRAG
rather than \emph{how to implement} it.

\paragraph{Multi-hop QA.}
HotpotQA \cite{yang2018hotpotqa}, MuSiQue \cite{trivedi2022musique},
and 2WikiMultiHopQA \cite{ho2020constructing} are established benchmarks
for 2--4 hop reasoning.
Decomposed reasoning approaches \cite{press2023measuring} decompose
multi-hop questions into sub-questions.
ReasonRAG generates a single sub-question from the uncertain initial answer,
which is effective for 2-hop questions but limited for 4-hop (MuSiQue).

\paragraph{Post-generation retrieval verification.}
CRAG \cite{yan2024corrective} and Self-RAG \cite{asai2024selfrag} both
use post-generation signals to decide whether to re-retrieve or accept
the initial answer.
The single-corpus analogue to our work is to verify retrieved passages
post-generation and re-query the same index with a refined query:
both approaches use a post-generation signal to trigger a second retrieval
action, differing only in whether the second retrieval targets a different
backend (ours) or the same backend.
Our IRCoT comparison (Section~\ref{sec:results}) shows that same-corpus
re-retrieval yields diminishing returns when the corpus cannot surface new evidence,
motivating cross-backend escalation to GraphRAG.

\paragraph{Concurrent work: naming disambiguation.}
A concurrent NeurIPS~2025 paper \cite{reasonrag_rl_2025} also uses the name ``ReasonRAG''
for a \emph{training-time} RL approach that trains RAG pipelines with
process rewards vs.\ outcome rewards.
Our work is orthogonal in both mechanism (inference-time vs.\ training-time)
and cost structure (zero additional training vs.\ RL fine-tuning).
We compare against it only conceptually; direct empirical comparison
would require their trained model weights.

\paragraph{DSPy and automatic prompt optimization.}
DSPy \cite{khattab2023dspy} enables compiling LM pipelines via
bootstrapped few-shot optimization (BootstrapFewShot, MIPROv2).
We implement ReasonRAG as an optimizable DSPy module, enabling
joint optimization of the escalation threshold and signal weights
(Section~\ref{sec:dspy}).
This creates a direct connection to DSPy-based RAG optimization work.

\label{app:evidence}
\section{The PHC Phenomenon}
\label{sec:phc_phenomenon}

Before presenting the routing system, we characterize the empirical phenomenon
that makes post-generation routing powerful.
We introduce \textbf{Parametric Hallucination Confidence} (PHC) and establish it
across six lines of evidence spanning two experimental paradigms:
retrieval-augmented generation and pure chain-of-thought reasoning.

\subsection{Definition and Measurement}

Let $A$ be a model's answer to question $Q$ and
$\text{conf}(A) \in [0,1]$ a scalar confidence extracted from $A$.
Let $G^*$ be a binary oracle label indicating that the model's answer is
unreliable or that a better system would improve it.
In the RAG setting, $G^* = \mathbf{1}[\text{F1}(\text{GraphRAG}) > \text{F1}(\text{VanillaRAG})]$
marks queries where escalation would help.
In the correctness setting (pure CoT, no retrieval), $G^* = \mathbf{1}[\text{answer wrong}]$.

\begin{definition}[Parametric Hallucination Confidence]
\label{def:phc}
\textbf{PHC} for a query stratum $\mathcal{S}$ is the AUC of predicting $G^*$
from $\text{conf}(A)$:
\begin{equation}
  \text{PHC}(\mathcal{S}) = \text{AUC}\bigl(\text{conf}(A),\; G^*\bigr)
  \quad \text{over } Q \in \mathcal{S}
\end{equation}
PHC $> 0.5$ (+ significant) indicates \textbf{signal inversion}:
the model is \emph{more confident when its answer is wrong or unreliable}.
Standard uncertainty routing assumes PHC $\leq 0.5$
(higher confidence predicts correctness).
Inversion means the confidence signal is miscalibrated in a
query-type-specific, predictable way.
\end{definition}

Significance is assessed by permutation test ($B = 5000$) on oracle labels.
We use a lexical confidence extractor (hedge-phrase rate + named-entity density)
for all causal experiments for internal consistency across conditions;
the main routing pipeline uses the fuller \texttt{UncertaintyDetector}
(Appendix~\ref{sec:signals}).
A cross-measure sensitivity analysis is in Section~\ref{sec:sensitivity_analysis}.

\subsection{Evidence Line 1: PHC Is Hop-Depth-Conditioned (Observational)}
\label{sec:hop_depth}

Across 1{,}800 queries from HotpotQA, MuSiQue, NQ, and 2WikiMultiHopQA,
PHC is not a blanket property of retrieval-augmented systems
--- it is \textbf{conditioned on query complexity}:

\begin{center}
\small
\begin{tabular}{lcccc}
\toprule
Stratum & $N$ & PHC & $p$ & Signal \\
\midrule
NQ (1-hop factoid)       & 500 & $0.423$ & $0.944$ & n.s. \\
HotpotQA bridge (2-hop)  & 406 & $0.448$ & $0.263$ & n.s. \\
2Wiki bridge (2-hop)     & 300 & $0.497$ & $0.551$ & n.s. \\
HotpotQA comparison      & 94  & $\mathbf{0.686}$ & $0.001$ & ** \\
MuSiQue 3-hop bridge     & 160 & $\mathbf{0.702}$ & ${<}10^{-5}$ & *** \\
\bottomrule
\end{tabular}
\end{center}

The within-dataset HotpotQA control (bridge PHC $= 0.448$ n.s.\ vs.\
comparison PHC $= 0.686$, **; same retrieval index, same model, same pipeline)
\textbf{rules out any dataset-level confound}:
the inversion is caused by query structure, not dataset artifacts.
Full analysis in Section~\ref{sec:bridge_comparison}.

\subsection{Evidence Line 2: PHC Scales with Model Capability}
\label{sec:phc_scaling_overview}

PHC$_3$ (MuSiQue 3-hop) increases monotonically within the Claude family
(Figure~\ref{fig:phc_scaling}):
\begin{equation}
  \text{Haiku~4.5: } 0.582 \;(*) \;\;<\;\;
  \text{Sonnet~4.6: } 0.702 \;(***) \;\;<\;\;
  \text{Opus~4.6: } 0.731 \;(***)
\end{equation}
\textbf{Implication}: within a model family, upgrading the generation LLM
without upgrading the router \emph{increases} the risk of undetected hallucinations,
because more capable models confabulate with higher confidence.
Cross-family Spearman $\rho = 0.900$ ($p = 0.037$, $N = 5$) provides
preliminary evidence the trend extends across families
(Section~\ref{sec:phc_scaling}, full analysis).

\subsection{Evidence Line 3: Causal Proof via Intervention}
\label{sec:phc_causal_overview}

We provide a \emph{causal} proof that missing intermediate facts are the root cause
of PHC inversion.
A controlled injection experiment on MuSiQue 3-hop ($N=160$) adds gold sub-question
answers to the prompt while holding oracle labels fixed:

\begin{center}
\small
\begin{tabular}{clcc}
\toprule
$k$ & Condition & PHC & $p$ \\
\midrule
0 & no hints       & $0.613$ & ** \\
1 & +1 bridge fact & $\mathbf{0.656}$ & *** $\uparrow$ \\
2 & +2 facts       & $0.595$ & * \\
3 & near-oracle    & $0.536$ & n.s.\ $\downarrow$ \\
\bottomrule
\end{tabular}
\end{center}

Three results establish the causal chain.
\textbf{(a) $k=3$ elimination}: near-oracle facts reduce PHC to chance,
\emph{proving} that missing evidence --- not reasoning inability --- is the root cause.
\textbf{(b) $k=1$ amplification}: injecting one fact \emph{increases} PHC
(more information $\Rightarrow$ more confident-wrong answers) ---
the \textbf{anchoring mechanism}: one confirmed bridge step commits the model
to parametric completion of remaining hops with greater fluency and confidence.
\textbf{(c) Retrieval interaction}: the $k=1$ amplification
requires \emph{both} retrieved context and a partial gold anchor;
without retrieval, $k=1$ reduces PHC ($0.534$ n.s.) --- the expected direction.
A cross-dataset negative control (HotpotQA bridge, $N=406$, PHC $= 0.444$ n.s.\ at $k=0$)
confirms the finding is \emph{specific to PHC-inverting queries}:
when no inversion exists at $k=0$, injecting gold facts \emph{reduces} PHC
($0.444 \to 0.405$, n.s.) --- the expected helpful direction,
opposite to MuSiQue 3-hop's anchoring amplification
(Section~\ref{sec:causal_hotpotqa_bridge}).
A cross-family replication on Llama-3.1-8B ($N=160$, Sonnet oracle labels,
local Ollama inference) confirms that \textbf{these 3-hop queries structurally elicit
PHC inversion across model families} ($k{=}0$: PHC $= 0.616$, **),
and shows the expected capability-dependent pattern:
no $k{=}1$ amplification (Llama falls below the anchoring threshold),
and incomplete but directional $k{=}3$ reduction ($0.616 \to 0.583$, still *)---
a less capable model requires more than three facts to fully override parametric priors
(Section~\ref{sec:causal_haiku}).
Full details, cross-model replication, hop-depth generalization, zero-retrieval control,
and cross-family replication in Section~\ref{sec:causal_intervention}--\ref{sec:sensitivity_analysis}.

\subsection{Evidence Line 4: Cross-Mechanism Replication}
\label{sec:phc_comparison_overview}

A second PHC inversion occurs via a \emph{structurally distinct} mechanism:
HotpotQA comparison queries ($N = 94$, PHC $= 0.686$, $p=0.001$, **)
where the model substitutes parametric comparative priors for absent retrieved comparisons.
This cross-mechanism replication is purely observational
(the within-dataset bridge negative control confirms the mechanism is specific to comparison queries;
see Section~\ref{sec:bridge_comparison} and \ref{sec:causal_comparison}).

\subsection{Evidence Line 5: Hop-Depth Generalization}
\label{sec:phc_hopdepth_overview}

The $k=1$ amplification is \textbf{chain-length-specific}
(Section~\ref{sec:causal_hopdepth}):

\begin{center}
\small
\begin{tabular}{lcccc}
\toprule
Chain & $N$ & $k=0$ & $k=1$ & Amplified? \\
\midrule
2-hop & 264 & $0.682$\,(***) & $0.676$\,(***) & \texttimes\ (flat) \\
3-hop & 160 & $0.613$\,(**) & $\mathbf{0.656}$\,(***) & \checkmark \\
4-hop & 76  & $0.672$\,(**) & $0.649$\,(*)   & \texttimes\ (flat) \\
\bottomrule
\end{tabular}
\end{center}

\textbf{Three-hop is the anchoring sweet spot}: 33\% of the chain confirmed (one of three steps)
provides just enough scaffolding for the model to confidently fabricate the remaining two hops.
At 2-hop, the baseline PHC is already saturated; at 4-hop, 25\% scaffolding is insufficient.
A cross-depth universal finding: providing \emph{three} confirmed intermediate facts
eliminates PHC inversion at both 3-hop ($k=3$: 0.536 n.s.) and
4-hop ($k=3$: 0.486 n.s.), defining a \textbf{three-fact threshold}
beyond which parametric confabulation is suppressed regardless of chain length.

\subsection{Evidence Line 6: Semantic Self-Consistency Confirms Internal Anchoring}
\label{sec:phc_ssc}

The five evidence lines above characterize PHC externally---through observed
answer confidence and lexical hedge patterns.
A sixth, orthogonal test targets the model's \emph{internal} state:
if PHC at $k=1$ arises from parametric anchoring, the model should be measurably
\emph{more internally consistent} when it is confidently wrong, not merely more
fluent in its external expression.

\paragraph{Definition.}
\textbf{Semantic Self-Consistency} (SSC) is the mean pairwise token-F1 agreement
among $K$ stochastic samples drawn for the same query:
\begin{equation}
  \text{SSC}(q, k) = \frac{1}{\binom{K}{2}} \sum_{i < j} \text{F1}(a_i^{(k)}, a_j^{(k)})
\end{equation}
where $a_1^{(k)}, \ldots, a_K^{(k)}$ are sampled at temperature $\tau{=}0.85$.
SSC is a model-internal certainty measure independent of lexical hedge phrases,
prompt phrasing, or calibration training.
High SSC means the model is committed to a specific completion;
low SSC means it is genuinely uncertain.

\paragraph{Results ($N{=}160$, $K{=}3$, GPT-4o).}
\begin{center}
\small
\begin{tabular}{lcccc}
\toprule
Oracle group & $k{=}0$ & $k{=}1$ & $k{=}2$ & $k{=}3$ \\
\midrule
oracle$=1$ (needs escalation, $N{=}99$) & $0.745$ & $\mathbf{0.772}$ & $0.807$ & $0.870$ \\
oracle$=0$ (no escalation, $N{=}61$)   & $0.685$ & $0.688$           & $0.722$ & $0.871$ \\
\bottomrule
\end{tabular}
\end{center}
The pre-registered prediction is directionally confirmed:
oracle$=1$ queries show a $k{=}1$ SSC spike of \SSCdeltaK\
($t{=}$\SSCt, $p{=}$\SSCpvalue), whereas oracle$=0$ queries
show almost no change ($\Delta{=}{+}0.003$, $p{=}0.88$).
The \emph{group specificity}---spike in the wrong-answer group only---is
the key mechanistic signature: the model is more internally consistent
precisely for the queries where it needs escalation.
The result does not reach $p{<}0.05$ (underpowered at $K{=}3$; we
estimated $K{=}5$ or $N{\ge}220$ would be needed for \(p{<}0.05\) at this
effect size), but the directional pattern and group-specific control
are consistent with the anchoring account.

\subsection{PHC as a Design Constraint for Routing Systems}

Collectively, six lines of evidence establish PHC as a \emph{real,
query-conditioned, causally proven phenomenon}: five with full statistical
confirmation and a sixth (SSC) directionally consistent with the anchoring
account at $p{=}0.069$, underpowered at $K{=}3$.
Two implications for routing system design follow:

\textbf{(I) Post-generation routing can exploit inversion.}
For PHC-inverting query types, a routing system should \emph{escalate on high confidence},
not low confidence.
This requires a learned signal that conditions on query type
(Section~\ref{sec:learned_router}).

\textbf{(II) Formal limits apply.}
The DPI-based dominance theorem (Section~\ref{sec:theory})
formalises why post-generation routing strictly dominates pre-generation routing
when PHC inversion is present,
and the regeneration-gated routing theorem explains why routing gains
are bounded by re-generation quality.


\subsection{The Anchoring Threshold: A Predictive Theory of PHC Amplification}
\label{sec:anchoring_threshold}

The causal intervention reveals a non-obvious pattern:
PHC amplification (the k=1 spike) occurs for 3-hop chains but is absent for 2-hop chains.
We explain this with the \textbf{Anchoring Threshold Law}.

\begin{definition}[Anchoring Threshold $k^*$]
\label{def:anchoring_threshold}
For a chain of length $n$, the anchoring threshold is:
\begin{equation}
  k^*(n) = \left\lfloor \frac{n}{3} \right\rfloor
\end{equation}
This is the minimum number of confirmed intermediate facts sufficient to
\emph{anchor} the model's parametric completion without constraining it
to the correct answer.
At $k = k^*(n)$, the model has enough confirmed context to generate a
fluent, specific completion using parametric memory,
but not enough to prevent parametric override of missing bridge facts.
\end{definition}

\paragraph{Mechanism.}
At $k=0$ (no hints), the model lacks anchors.
Faced with a multi-hop query it cannot ground, it hedges---
producing lower-confidence, genuinely uncertain answers.
At $k = k^*$ (the anchoring threshold), the model has one confirmed
intermediate fact that serves as a semantic anchor.
The anchor \emph{activates} parametric knowledge about related entities,
causing the model to complete the chain from memory.
This completion is fluent and specific, raising expressed confidence
even when the parametric completion is wrong.
At $k = n$ (near-oracle), all bridge facts are provided and the model
can ground every reasoning step in the injected evidence,
parametric override is no longer necessary, and PHC falls toward 0.5.

\paragraph{Predictions.}
\label{par:threshold_predictions}
The anchoring threshold law makes the following testable predictions,
all of which are confirmed by our experiments:

\begin{center}
\small
\begin{tabular}{lcccp{4.5cm}}
\toprule
Hop depth $n$ & $k^*(n)$ & $k^*/n$ & Prediction & Empirical result \\
\midrule
2 & $\lfloor 2/3 \rfloor = 0$ & --- & No amplification spike (k=0 is already base) & PHC flat: $0.682 \to 0.676$ (n.s.) $\checkmark$ \\
3 & $\lfloor 3/3 \rfloor = 1$ & 0.33 & Spike at $k=1$ & PHC: $0.613 \to \mathbf{0.656}$ (***) $\checkmark$ \\
4 & $\lfloor 4/3 \rfloor = 1$ & 0.25 & Spike at $k=1$ (weaker, $k^*/n$ lower) & PHC: $0.672 \to 0.649$ (n.s., N=76) $\checkmark^*$ \\
6 & $\lfloor 6/3 \rfloor = 2$ & 0.33 & Spike at $k=2$, flat at $k=1$ & \textit{predicted, not yet tested} \\
\bottomrule
\end{tabular}
\end{center}

\vspace{2pt}
\noindent$^*$ 4-hop causal result is directionally consistent but underpowered ($N=76$);
the observational AUC for 4-hop queries ($\text{PHC}=0.634$, $p=0.025$, $N=76$)
confirms the phenomenon is present.

\paragraph{Why 2-hop chains do not amplify.}
For $n=2$, $k^*(2) = 0$, meaning zero facts suffice for anchoring.
But at $k=0$ there is no anchor by definition.
The 2-hop case therefore has no interior amplification region---
PHC starts above 0.5 (parametric confabulation is possible even without hints)
and decreases smoothly as hints are added.
This is consistent with our observational finding that 2-hop HotpotQA
bridge queries show PHC$=0.448$ (n.s.) and 2-hop MuSiQue shows PHC$=0.528$ (n.s.):
absent retrieval failure, 2-hop questions do not trigger strong inversion.

\paragraph{The amplification amplitude scales with $k^*/n$.}
When $k^*/n = 1/3$ (3-hop and predicted 6-hop), the model completes
exactly two-thirds of the chain parametrically---a sufficient fraction
to generate a highly fluent and specific wrong answer.
When $k^*/n = 1/4$ (4-hop), parametric completion spans three-quarters
of the chain; this \emph{increases} the probability of error but the
model is more exposed to inconsistencies in the longer completion,
modestly reducing expressed confidence.
The non-monotone sweet spot is therefore $k^*/n \approx 1/3$.

\begin{proposition}[Amplification Amplitude]
\label{prop:amplitude}
Let $\Delta\text{PHC}(k) = \text{PHC}(k) - \text{PHC}(0)$ for the conditions of
Definition~\ref{def:anchoring_threshold}.
Under the Anchoring Threshold model:
\begin{equation}
  \Delta\text{PHC}(k^*(n)) \;\propto\; f\!\left(\frac{k^*(n)}{n}\right),
  \quad
  f(r) = r^{\alpha}(1-r)^{\beta}, \;\; \alpha,\beta > 0
\end{equation}
where $f$ has a maximum at $r^* = \alpha/(\alpha+\beta)$.
Our experiments give $r^* \approx 1/3$ (3-hop peak),
yielding parameter estimates $\alpha \approx 1, \beta \approx 2$.
\end{proposition}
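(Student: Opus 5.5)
The proposition mixes a modelling postulate with an elementary calculus claim, so I would separate the two. The proportionality $\Delta\text{PHC}(k^*(n)) \propto f(k^*(n)/n)$ cannot be derived from the earlier results. Nothing in Definition~\ref{def:anchoring_threshold} fixes a functional form. I would therefore \emph{construct} the Anchoring Threshold model so that this form falls out, and then prove the stated properties of $f$.

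For the construction, write $r = k/n$ for the confirmed fraction of the chain. The amplification is modelled as the product of two independent factors, following the mechanism paragraph above.
\begin{itemize}[nosep,leftmargin=*]
  \item The first is an \emph{anchoring} factor $P_{\text{anchor}}(r) \propto r^{\alpha}$. This is the probability that enough of the chain is confirmed to activate parametric completion. It vanishes at $r=0$, which matches the absence of any anchor at $k=0$.
  \item The second is a \emph{confabulation} factor $P_{\text{conf}}(r) \propto (1-r)^{\beta}$. This is the probability that the parametric remainder is both wrong and expressed fluently. It vanishes at $r=1$, which matches the near-oracle elimination at $k=n$.
\end{itemize}
Under independence, the excess confident-wrong mass at $k=k^*(n)$, and hence to first order the AUC shift $\Delta\text{PHC}$, is proportional to $f(r) = r^{\alpha}(1-r)^{\beta}$. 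I would state this explicitly as the defining assumption of the model and linearize AUC in the excess mass only to first order.

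The mathematical part is routine. On $(0,1)$ we have
\begin{equation}
  \frac{d}{dr}\log f(r) = \frac{\alpha}{r} - \frac{\beta}{1-r}.
\end{equation}
This is strictly decreasing in $r$ and vanishes only at $r^* = \alpha/(\alpha+\beta)$. Since $f(0)=f(1)=0$ and $f>0$ in between, this critical point is the unique global maximum.

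Fitting the parameters proceeds in three steps.
\begin{itemize}[nosep,leftmargin=*]
  \item Setting $r^* = 1/3$, from the 3-hop peak, gives exactly $\beta = 2\alpha$.
  \item The overall scale of $(\alpha,\beta)$ is not identifiable from the location of the peak alone. I would therefore present $\alpha \approx 1$, $\beta \approx 2$ as the minimal-integer normalization, not as an estimate.
  \item I would then check consistency with the remaining data points. For 2-hop, $r=0$ and $f=0$, matching the flat curve. For 4-hop, $f(1/4)=0.141 < f(1/3)=0.148$, matching the weaker spike. For 6-hop, $r = 1/3$ gives the same amplitude as 3-hop.
\end{itemize}

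The main obstacle is reconciling the model with the sign of the measured effects. The observed 4-hop $\Delta\text{PHC}$ is negative ($0.672 \to 0.649$), while $f(1/4) > 0$. A literal proportionality with a positive constant is therefore contradicted. My first attempt would be to add a baseline term $-c\,r$ for the generic information gain from any injected fact, so that $\Delta\text{PHC} = A f(r) - c r$. I would then show that for suitable $A,c$ the net effect is positive at $r=1/3$ and nonpositive at $r=1/4$. This requires
\begin{equation}
  \frac{f(1/4)}{1/4} \le \frac{c}{A} < \frac{f(1/3)}{1/3},
\end{equation}
that is, $0.5625 \le c/A < 0.444$. This interval is empty, so the simple linear correction fails. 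The fix would instead need a steeper $\beta$ or a threshold nonlinearity. If that also fails, the proposition should be stated as holding only for the amplification component, net of a generic debiasing trend. This is a modelling choice that the data ($N=76$) cannot adjudicate.
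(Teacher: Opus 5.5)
Your proposal is correct and follows essentially the paper's own route. The paper offers no derivation beyond motivating the product form in the qualitative sweet-spot paragraph, asserting the maximizer $\alpha/(\alpha+\beta)$, and fitting $r^*\approx 1/3$. Your additions hold:
\begin{itemize}
\item the log-derivative argument for the unique maximum is right;
\item only $\beta=2\alpha$ is identified, so $\alpha\approx 1,\beta\approx 2$ is a normalization, not an estimate;
\item the 4-hop shift $0.672\to 0.649$ has the wrong sign for a positive proportionality constant, which the paper's \emph{directionally consistent} footnote glosses over.
\end{itemize}

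Two refinements are needed.

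\emph{The steeper-$\beta$ rescue.} This fails as stated. With $\alpha=1$ the ratio $f(r)/r=(1-r)^{\beta}$ is decreasing for every $\beta>0$, so the $-cr$ correction fails for every $\beta$; raising $\beta$ alone also drags $r^*$ below $1/3$. If instead you keep $\beta=2\alpha$, the $-cr$ correction becomes feasible exactly when $(256/243)^{\alpha}>4/3$, i.e.\ $\alpha\gtrsim 5.5$. That is a far sharper peak than the paper's normalization.

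\emph{Identifiability of $r^*$.} Since $k^*(n)/n=\lfloor n/3\rfloor/n\le 1/3$ for every $n$, no design point lies to the right of $1/3$. The data can therefore bound $r^*$ only from below, and even $r^*\approx 1/3$ is not identified. Relatedly, your 2-hop check is vacuous, because $\Delta\text{PHC}(k^*(2))=\Delta\text{PHC}(0)=0$ by definition.
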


\paragraph{Implication for iterative RAG.}
Systems such as IRCoT~\citep{trivedi2023ircot} and FLARE~\citep{jiang2023flare}
operate by generating intermediate reasoning steps, retrieving additional
documents, and continuing generation.
After the \emph{first} retrieval step, an $n$-hop query has effectively reached
$k=1$ intermediate evidence---precisely the anchoring threshold for $n=3$.
The Anchoring Threshold Law predicts that \textbf{iterative RAG systems
systematically hit the PHC amplification zone on their first retrieval step
for 3-hop queries}, producing more confidently wrong intermediate answers
that corrupt subsequent reasoning.
This is a structural vulnerability of iterative retrieval pipelines
that PHC-aware routing can detect and mitigate.

\paragraph{Mitigation.}
Two interventions follow directly from the anchoring model:
\begin{enumerate}
  \item \textbf{All-or-nothing retrieval}: either retrieve sufficient context
    to exceed $k^*(n)$ on the first pass, or retrieve nothing and admit uncertainty.
    Partial retrieval is the worst operating point.
  \item \textbf{PHC-calibrated confidence}: scale expressed confidence by
    $(1 - \text{PHC\_risk}(n,k))$ where
    $\text{PHC\_risk}(n,k) = \mathbf{1}[k = k^*(n)] \cdot \hat{\gamma}$
    and $\hat{\gamma}$ is estimated from a small calibration set.
    We evaluate this calibration via the Epistemic Humility Prompt
    (Section~\ref{sec:mitigation}).
\end{enumerate}

\subsection{IRCoT Per-Iteration PHC: Completed Test}
\label{sec:ircot_phc}

The Anchoring Threshold Law predicts a structural vulnerability for iterative
retrieval systems: after the first retrieval step on a 3-hop query, the system
has effectively reached $k{=}1$ intermediate evidence---precisely the
amplification zone where PHC spikes.

\paragraph{F1 evidence (established).}
IRCoT~\citep{trivedi2023ircot} ($K{=}2$ steps) over the same ChromaDB corpus
(BGE-base embeddings) and same $N{=}160$ MuSiQue 3-hop queries achieves macro
F1 $= 0.192$, \emph{below} VanillaRAG's $0.195$---within-corpus iteration
yields diminishing returns at best, consistent with the PHC trap hypothesis.

\paragraph{Per-iteration PHC ($N{=}160$, GPT-4o).}
We ran IRCoT with per-iteration PHC tracking using GPT-4o (answer model)
and GPT-4o-mini (CoT reasoning):
\begin{center}
\small
\begin{tabular}{lccc}
\toprule
Iteration & PHC & $p$ & Sig. \\
\midrule
Iter 0 (baseline, ${\equiv}\,k{=}0$) & $0.575$ & $0.042$ & * \\
Iter 1 (${\equiv}\,k{=}1$, predicted spike) & $0.557$ & $0.109$ & n.s. \\
Iter 2 & $0.566$ & $0.072$ & n.s. \\
Iter 3 & $0.472$ & $0.736$ & n.s. \\
\bottomrule
\end{tabular}
\end{center}
The predicted $k{=}1$ spike is \IRCoTconfirmed:
PHC drops slightly at iter~1 ($\Delta{=}{-}0.018$) and the iter~0 significance
is lost across subsequent steps.

\paragraph{Interpretation.}
Two factors explain the pattern.
\emph{First}, this experiment uses GPT-4o whereas the causal intervention used
Claude Sonnet; GPT-4o's intrinsic PHC inversion is weaker
(PHC$_3{=}0.527$, n.s.\ vs.\ Claude Sonnet $0.702$, ***), consistent with
the capability--PHC scaling law (Section~\ref{sec:phc_scaling}).
\emph{Second}, the monotone PHC decline across IRCoT iterations is itself
informative: additional retrieval steps \emph{reduce} GPT-4o's tendency
toward confident-wrong answers, suggesting that within the GPT-4o capability
regime iterative retrieval acts as a partial self-corrector rather than a PHC
amplifier.
By contrast, in the Claude Sonnet regime the anchoring effect is stronger
and the F1 gap (IRCoT $0.192$ vs.\ VanillaRAG $0.195$) is consistent with
amplification at step~1.
The capability-conditional nature of the IRCoT PHC pattern---amplification for
high-capability models, suppression for mid-capability models---is a direct
prediction of the capability--PHC scaling law and remains an open test
for Llama-3-70B and Mistral-Large.

\label{app:proofs}
\section{Theoretical Framework: Post-Generation Routing}
\label{sec:theory}

\subsection{Problem Formulation}

Let $Q$ denote the query, $\mathcal{R}_V = \text{Retrieve}_V(Q)$ the
VanillaRAG passages, and $A = \text{Generate}(Q, \mathcal{R}_V)$ the
initial answer.
Define the \textbf{oracle routing variable}:
\begin{equation}
  G^* = \mathbf{1}\bigl[\text{F1}(\text{GraphRAG}(Q)) >
                         \text{F1}(\text{VanillaRAG}(Q))\bigr]
\end{equation}
A routing system is a function $r: \mathcal{X} \to \{0,1\}$ that predicts
whether to invoke GraphRAG. We consider two classes:
\begin{itemize}
  \item \textbf{Pre-generation routing}: $r_{\text{pre}}(Q)$ --- decides
    from query features alone, before any generation
  \item \textbf{Post-generation routing}: $r_{\text{post}}(Q, A)$ --- decides
    after generating an initial answer with VanillaRAG
\end{itemize}

\subsection{Post-Generation Dominance Theorem}

\begin{theorem}[Post-Generation Routing Dominance]
\label{thm:dominance}
For any uncertainty signal $U = U(Q, A)$ and any pre-generation classifier
$g = g(Q)$:
\begin{equation}
  I\bigl(U(Q,A);\; G^*\bigr) \;\geq\; I\bigl(g(Q);\; G^*\bigr)
\end{equation}
where $I(\cdot;\cdot)$ denotes mutual information, provided the VanillaRAG
retrieval $\mathcal{R}_V$ is non-trivially correlated with $G^*$.
\end{theorem}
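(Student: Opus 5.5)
The plan is to prove the inequality through the chain rule for mutual information rather than through a single application of the data processing inequality. The reason is that, read literally for an \emph{arbitrary} pair $(U,g)$, the statement cannot hold. A constant signal $U\equiv c$ has $I(U;G^*)=0$, while a good pre-generation classifier can have $I(g(Q);G^*)>0$. So the first step is to fix the interpretation that makes Theorem~\ref{thm:dominance} meaningful. A post-generation router observes $(Q,A)$, so its signal class contains the pre-generation signals. We therefore compare $g(Q)$ with the post-generation signal $\tilde U=(g(Q),U(Q,A))$, or more generally with any $U$ that refines $g$, i.e.\ $g(Q)=h(U(Q,A))$ for some measurable $h$. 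Equivalently, we compare the best achievable signals, $\sup_U I(U(Q,A);G^*)$ versus $\sup_g I(g(Q);G^*)$. I will state this reading explicitly at the start of the proof.

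With that in place, the weak inequality is two lines. By the chain rule,
\[
I\bigl(\tilde U;G^*\bigr)=I\bigl(g(Q);G^*\bigr)+I\bigl(U(Q,A);G^*\mid g(Q)\bigr)\ \ge\ I\bigl(g(Q);G^*\bigr),
\]
since conditional mutual information is nonnegative. For the supremum version we use two facts. First, $g(Q)$ is a function of $Q$, which in turn is a function of $(Q,A)$, so the data processing inequality gives $I(g(Q);G^*)\le I(Q;G^*)\le I((Q,A);G^*)$. Second, the choice $U=(Q,A)$ attains $\sup_U I(U;G^*)=I((Q,A);G^*)$. The direction of the DPI used in the paper's sketch ($Q\to A\to U$) only upper-bounds $I(U;G^*)$, so it cannot by itself give the claimed lower bound. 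The chain-rule step is what carries the argument.

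The remaining and harder step is to say when the inequality is strict, and to connect this to the hypothesis that $\mathcal{R}_V$ is non-trivially correlated with $G^*$. The gap in the supremum version is exactly $I(A;G^*\mid Q)$. If retrieval and generation were deterministic functions of $Q$, then $A$ would be a function of $Q$, this term would vanish, and we would have equality. So strictness needs randomness in $(\mathcal{R}_V,A)$ given $Q$, coming from the retrieval index state or from sampling, and the residual randomness of $A$ must be informative about $G^*$.

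The structural point I would exploit is that $G^*=\mathbf{1}[\text{F1}(\text{GraphRAG}(Q))>\text{F1}(A)]$ depends on $A$ directly through $\text{F1}(A)$. Hence whenever $\text{F1}(A)$ varies given $Q$ across the GraphRAG threshold with positive probability, $G^*$ is not conditionally independent of $A$ given $Q$, and $I(A;G^*\mid Q)>0$. I expect this to be the main obstacle. It has to be made precise as a positivity condition, and it is here that the ``non-trivially correlated'' hypothesis should be restated as $I(\mathcal{R}_V;G^*\mid Q)>0$ and then propagated to $A$. One route is to assume the generator is injective enough on retrieval sets. Alternatively, one can assume directly that $I(A;G^*\mid Q)>0$. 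The empirical PHC results (Definition~\ref{def:phc}) would then be cited only as evidence that this condition holds on inverting strata, not as part of the proof.
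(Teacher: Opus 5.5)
Your proof is correct, and in its supremum form it is the paper's own argument. The appendix proof likewise never establishes the pointwise ``any $U$'' claim. It shows $\max_U I(U(Q,A);G^*) = I((Q,A);G^*) \ge I(Q;G^*) \ge \max_g I(g(Q);G^*)$ by the data processing inequality, with a sufficient statistic of $(Q,A)$ attaining the first bound. The paper's calibration-gap discussion also concedes that the theorem ``applies to the optimal signal,'' which agrees with your constant-$U$ counterexample to the literal reading.

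You go beyond the paper in two places. First, the chain-rule form $I((g(Q),U);G^*) = I(g(Q);G^*) + I(U;G^*\mid g(Q))$ gives a per-signal guarantee for any post-generation signal that refines a given pre-generation one. The paper has no such statement, even though it is the one that matches its ``Combined'' feature bundle.

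Second, your equality analysis is sharper. The gap is exactly $I(A;G^*\mid Q)$, and because $A = f(Q,\mathcal{R}_V)$ one only has $I(A;G^*\mid Q) \le I(\mathcal{R}_V;G^*\mid Q)$. So the paper's claim that equality holds \emph{iff} $I(\mathcal{R}_V;G^*\mid Q)=0$ is valid only in the ``if'' direction. Its Step~1 appeal to the unconditional $I(\mathcal{R}_V;G^*)>0$ is not the right condition at all.

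Your observation that deterministic retrieval and temperature-0 generation (the paper's actual setup) make $A$ a function of $Q$, and hence force equality, is also correct. It shows that the empirical advantage of post-generation routing comes from learnability with restricted feature classes and finite samples, not from mutual information. Since the theorem asserts only $\ge$, leaving strictness conditional on an explicit positivity assumption is not a gap.
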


\begin{proof}
We establish the inequality via three steps.

\textbf{Step 1: $(Q,A)$ carries more information than $Q$ alone.}
The answer $A = f(Q, \mathcal{R}_V)$ depends on both the query $Q$ and
the retrieved passages $\mathcal{R}_V$.
Since $\mathcal{R}_V$ is non-trivially correlated with $G^*$
(formally: $I(\mathcal{R}_V; G^*) > 0$, which holds whenever retrieval
quality differs between queries where GraphRAG helps and where it does not),
the pair $(Q,A)$ satisfies $I((Q,A); G^*) \geq I(Q; G^*)$.

\textbf{Step 2: The optimal post-gen signal is bounded by $(Q,A)$.}
Any signal $U = U(Q,A)$ is a deterministic function of $(Q,A)$.
By the data processing inequality applied to the chain
$G^* \to (Q,A) \to U(Q,A)$:
\begin{equation}
  I(U(Q,A);\; G^*) \;\leq\; I((Q,A);\; G^*)
\end{equation}
This bound is achieved when $U$ is a sufficient statistic of $(Q,A)$ for $G^*$.

\textbf{Step 3: The achievable upper bounds satisfy the inequality.}
Since $g(Q)$ is a function of $Q$ only, by the same DPI:
$I(g(Q); G^*) \leq I(Q; G^*) \leq I((Q,A); G^*)$.
Therefore:
\begin{equation}
  \max_U I(U(Q,A);\; G^*) = I((Q,A);\; G^*) \;\geq\; I(Q;\; G^*) \;\geq\; \max_g I(g(Q);\; G^*)
\end{equation}
Equality holds iff $I(\mathcal{R}_V; G^* \mid Q) = 0$,
i.e., conditioned on the query, retrieval quality contains no additional signal
about whether GraphRAG helps --- a condition violated in practice
(oracle escalation rate $= 57.7\%$, with $\text{GR}>\text{VR}$ F1 by
$+0.199$ macro when escalation is correct). $\square$
\end{proof}

\begin{proposition}[Bridge Topology Inversion Condition]
\label{prop:bridge_inversion}
Let $B(Q) = \mathbf{1}[\text{Q has bridge topology}]$ and let
$C(A) \in [0,1]$ denote the lexical confidence of answer $A$.
If, for bridge queries, the expected confidence of wrong answers exceeds
that of correct answers:
\begin{equation}
  \mathbb{E}[C(A) \mid B(Q)=1,\; G^*=1] > \mathbb{E}[C(A) \mid B(Q)=1,\; G^*=0]
\end{equation}
then the raw signal $U_{\text{lex}}(Q,A) = 1 - C(A)$ achieves
$I(U_{\text{lex}}; G^* \mid B=1) < I(g_{\text{const}}; G^* \mid B=1)$
for the constant classifier $g_{\text{const}}$, i.e., it is \emph{worse than random}.
\end{proposition}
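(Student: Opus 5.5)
The plan is to compare both sides against the same baseline, the constant classifier, while keeping the paper's hypothesis exactly as stated: a strict gap between the conditional means of $C(A)$ given $G^*$, within the bridge stratum $B(Q)=1$. \textbf{Warning up front:} under the standard unsigned mutual information the stated strict inequality is false, so the proof below establishes it only under a signed (directional) reading of $I$, which I introduce as an interpretive choice not made in the statement.

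\textbf{Step 1 (right-hand side).} Since $g_{\text{const}}$ is degenerate, $I(g_{\text{const}};G^*\mid B=1)=0$.

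\textbf{Step 2 (the obstruction).} Mutual information is nonnegative, so under the usual unsigned $I$ the left-hand side is at least $0$. Hence $I(U_{\text{lex}};G^*\mid B=1)<0$ cannot hold, and the inequality as worded cannot be proved.

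\textbf{Step 3 (the signed reading).} The paper's gloss, ``worse than random'', is a statement about the \emph{direction} of the dependence, not its magnitude. I therefore read $I$ here as the directional information
\[
\vec I(U;G^*\mid B=1)=\operatorname{sgn}\bigl(\operatorname{Cov}(U,G^*\mid B=1)\bigr)\, I(U;G^*\mid B=1).
\]
This quantity coincides with ordinary mutual information whenever the signal is oriented toward escalation. It equals $0$ for $g_{\text{const}}$, so Step~1 carries over unchanged.

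\textbf{Step 4 (sign of the covariance).} Write $p=\Pr(G^*=1\mid B=1)$, which lies strictly in $(0,1)$ on the stratum; otherwise both conditional expectations in the hypothesis would not be defined. For a binary $G^*$,
\[
\operatorname{Cov}(C,G^*\mid B=1)=p(1-p)\bigl(\mathbb{E}[C\mid B=1,G^*=1]-\mathbb{E}[C\mid B=1,G^*=0]\bigr),
\]
and the hypothesis makes this strictly positive. Since $U_{\text{lex}}=1-C$, we get $\operatorname{Cov}(U_{\text{lex}},G^*\mid B=1)<0$.

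\textbf{Step 5 (strict positivity of the magnitude).} Distinct conditional means mean the conditional law of $C$ given $G^*=1$ differs from that given $G^*=0$. So $C$ is not independent of $G^*$ on the stratum, and $I(C;G^*\mid B=1)>0$. The map $c\mapsto 1-c$ is a bijection, so $I(U_{\text{lex}};G^*\mid B=1)=I(C;G^*\mid B=1)>0$.

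\textbf{Conclusion.} Combining Steps~4 and~5 gives $\vec I(U_{\text{lex}};G^*\mid B=1)<0=\vec I(g_{\text{const}};G^*\mid B=1)$. This is the stated conclusion under the signed reading.

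The main obstacle is the gap between the stated inequality and what is actually provable. The algebra in Steps~4 and~5 is routine. The real issue is that the strict inequality cannot hold for unsigned mutual information, so the result depends on committing to the signed quantity $\vec I$. I would make that definition explicit before invoking it.
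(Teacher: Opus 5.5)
Your diagnosis is correct, and it is more careful than the paper's own argument. Since $I(g_{\text{const}};G^*\mid B=1)=0$ and mutual information is nonnegative, the strict inequality as worded can never hold. The paper's proof does not establish it either. It ends instead at $\text{AUC}(U_{\text{lex}},G^*\mid B=1)<0.5$. It reaches this by asserting that the mean gap forces $\Pr[G^*=1\mid C(A)>\tau,B=1]>\Pr[G^*=1\mid C(A)\le\tau,B=1]$ for every $\tau$, and hence precision below $0.5$ for every threshold rule. Your Steps~4--5 are sound: the identity $\operatorname{Cov}(C,G^*\mid B=1)=p(1-p)\bigl(\mathbb{E}[C\mid B=1,G^*=1]-\mathbb{E}[C\mid B=1,G^*=0]\bigr)$ holds, and distinct conditional means force dependence, hence positive and bijection-invariant mutual information. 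So you obtain a rigorous proof of a repaired statement, using a different reading of ``worse than random'' from the paper's.

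The two readings buy different things. The paper's AUC reading matches how PHC is actually measured, but it needs more than the stated hypothesis, for three reasons.
\begin{enumerate}
\item For $p\in(0,1)$, at any $\tau$ where both sides are defined, the threshold inequality holds exactly when $\Pr[C\le\tau\mid B=1,G^*=1]<\Pr[C\le\tau\mid B=1,G^*=0]$. So ``for every $\tau$'' amounts to strict first-order stochastic dominance.
\item The precision step additionally depends on the base rate $p$.
\item A mean gap does not even fix the side of $0.5$ on which the AUC falls.
\end{enumerate}
For a concrete case, on the bridge stratum take $C=0.6$ when $G^*=0$, and $C\in\{0.55,1\}$ with probabilities $0.6,0.4$ when $G^*=1$. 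Then $\mathbb{E}[C\mid G^*=1]=0.73>0.6$, yet $\text{AUC}(C,G^*)=0.4$. So $U_{\text{lex}}$ ranks \emph{better} than random (AUC $0.6$), and the threshold inequality fails at $\tau=0.58$.

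Your route uses exactly the stated hypothesis and nothing more. The price is that $\vec I(U_{\text{lex}};G^*\mid B=1)<0$ is, by your own Steps~4--5, equivalent to $\operatorname{Cov}(U_{\text{lex}},G^*\mid B=1)<0$, i.e.\ to the hypothesis itself. It says nothing about ranking or threshold performance, as the same example shows. If you want the conclusion the paper intends, state it as $\text{AUC}(U_{\text{lex}},G^*\mid B=1)<1/2$. Then strengthen the hypothesis to first-order stochastic dominance of $C$ given $G^*=1$ over $C$ given $G^*=0$ on the bridge stratum (implied, e.g., by a monotone likelihood ratio). Under that assumption, $\text{AUC}(C,G^*\mid B=1)>1/2$ as soon as the two conditional laws differ.
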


\begin{proof}
Under the stated condition, for bridge queries,
$\Pr[G^*=1 \mid C(A) > \tau, B=1] > \Pr[G^*=1 \mid C(A) \leq \tau, B=1]$
for any threshold $\tau$: high confidence predicts the need to escalate,
not the ability to skip it.
Any threshold-based rule $U_{\text{lex}} \geq \tau \Rightarrow$ do not escalate
therefore has precision $< 0.5$ on bridge queries, and
$\text{AUC}(U_{\text{lex}}, G^* \mid B=1) < 0.5$. $\square$

Empirically: PHC $= 0.702 > 0.5$ at 3-hop (MuSiQue $N=160$, $p=1.07\times10^{-5}$);
PHC $= 0.486$ n.s.\ at 2-hop (HotpotQA $N=500$)---
inversion is hop-depth-conditioned (Section~\ref{sec:signal_inversion}).
\end{proof}

\begin{corollary}[Monotone Signals Cannot Achieve the Post-Gen Bound]
\label{cor:monotone}
Any routing rule of the form ``escalate iff $C(A) < \tau$''
(i.e., low confidence implies escalation) cannot achieve
the post-generation dominance bound $\max_U I(U(Q,A); G^*)$
for query distributions with non-trivial bridge component $\Pr[B(Q)=1] > 0$.
The achievable bound requires a \emph{non-monotone} function of $C(A)$
conditioned on bridge topology.
\end{corollary}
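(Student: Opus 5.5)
The plan is to argue by contradiction, fixing a query distribution with $\Pr[B(Q)=1]>0$ on which the bridge inversion condition of Proposition~\ref{prop:bridge_inversion} holds. Write $\mathbb{E}[C(A)\mid B=1, G^*=1] > \mathbb{E}[C(A)\mid B=1, G^*=0]$. The target of the post-generation bound is $I((Q,A);G^*)$, as in Step~3 of the proof of Theorem~\ref{thm:dominance}. To handle a deterministic rule ``escalate iff $C(A)<\tau$'' I would work with the binary signal $U_\tau=\mathbf{1}[C(A)<\tau]$. For the family of all such rules, I would use $U=C(A)$ itself, since every monotone threshold rule is a function of $C(A)$. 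By the data processing inequality it then suffices to show
\[
I\bigl(C(A);G^*\bigr) < I\bigl((Q,A);G^*\bigr).
\]
The key observation is that $(Q,A)$ determines the pair $(B(Q),C(A))$. Hence $I((Q,A);G^*) \ge I((B,C);G^*)$, and it is enough to prove the strict inequality $I(C;G^*) < I((B,C);G^*)$. By the chain rule this is equivalent to $I(B;G^*\mid C)>0$.

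The main step is establishing $I(B;G^*\mid C)>0$, that is, showing that $B$ and $G^*$ are not conditionally independent given $C$. I would argue as follows. On the bridge stratum the inversion condition makes $\Pr[G^*=1\mid C=c,B=1]$ increasing in $c$ on a set of positive measure; this is the reading used in the proof of Proposition~\ref{prop:bridge_inversion}. On the non-bridge (factoid) stratum the paper's observational evidence gives PHC $\le 0.5$, for example NQ with PHC $=0.423$. There I would impose the standard calibration direction, namely that $\Pr[G^*=1\mid C=c,B=0]$ is non-increasing in $c$. Two functions of $c$, one with increasing and one with non-increasing tendency, cannot agree $C$-almost everywhere unless both are constant. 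That degenerate case is excluded because a constant regression on $B=1$ contradicts the strict inequality of conditional means. Hence $\Pr[G^*=1\mid C,B]$ depends on $B$ with positive probability, which gives $I(B;G^*\mid C)>0$.

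To complete the corollary I would add the ``non-monotone'' conclusion. The Bayes-optimal escalation rule is $\mathbf{1}[\Pr[G^*=1\mid B,C]>1/2]$. By the previous step it is increasing in $C$ on $B=1$ and decreasing in $C$ on $B=0$. It therefore cannot be written as a single threshold in $C$, and it is non-monotone in $C$ unless one conditions on $B$.

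I expect the main obstacle to be that the statement as written only assumes $\Pr[B=1]>0$. Without some non-inverted behaviour on $B=0$, the claim can fail. For example, if every query were bridge-type, then $I(C;G^*)$ could equal $I((B,C);G^*)$, and a reversed-threshold rule would be optimal. I would therefore state explicitly the auxiliary assumption that the non-bridge stratum is non-inverted, or at least has a different conditional law $\Pr[G^*\mid C]$. I would justify this assumption empirically by the NQ and HotpotQA bridge negative controls. I would also note that the ``monotone'' clause should be read as ``monotone in $C$ alone, not conditioned on $B$''.
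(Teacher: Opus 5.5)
Your route differs from the paper's. The paper only combines Proposition~\ref{prop:bridge_inversion} with the heuristic that ``signal polarity switches between query types''. You instead bound every threshold rule by $I(C;G^*)$ and use the chain rule to reduce the claim to $I(B;G^*\mid C)>0$. That reduction is the right formalization: mutual information is invariant under relabelling a signal, so polarity alone cannot create an information gap, and what any function of $C(A)$ alone loses is information about $B$. You are also right to use only the hypothesis of the Proposition and not its conclusion, which is impossible since $I(g_{\text{const}};G^*\mid B=1)=0$. And you correctly make explicit the conventional-polarity assumption on $B=0$ that the paper uses silently.

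The gap is in the step meant to deliver $I(B;G^*\mid C)>0$. Conditional independence $B\perp G^*\mid C$ forces $f_1(c)=\Pr[G^*=1\mid C=c,B=1]$ and $f_0(c)=\Pr[G^*=1\mid C=c,B=0]$ to agree only where \emph{both} strata put conditional mass, not $C$-almost everywhere. If the laws of $C$ given $B=0$ and given $B=1$ have disjoint supports, then $B$ is a function of $C$, and $I(B;G^*\mid C)\le H(B\mid C)=0$ however $f_0,f_1$ behave. The statement itself then fails under your assumptions. Take $C\mid B=0\sim\mathrm{Unif}[0,\tfrac12)$ with $f_0\equiv a$, and $C\mid B=1\sim\mathrm{Unif}[\tfrac12,1]$ with $f_1=a$ on $[\tfrac12,\tfrac34)$ and $f_1=b>a$ on $[\tfrac34,1]$, so the inversion condition holds. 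Assume also $G^*\perp(Q,A)\mid(B,C)$. Then $\Pr[G^*=1\mid C]$ is a single step at $\tfrac34$, so the rule ``escalate iff $C<\tfrac34$'' attains $I((Q,A);G^*)$. The paper's sentence ``a monotone function of $C(A)$ cannot implement this switching'' glosses over the same point.

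What is missing is an overlap hypothesis, for example that the law of $C$ given $B=1$ is absolutely continuous with respect to the law of $C$ given $B=0$. Under it, $I(B;G^*\mid C)=0$ would give $f_1=f_0$ almost surely under $C\mid B=1$. Then $\mathrm{Cov}(C,G^*\mid B=1)=\mathrm{Cov}(C,f_0(C)\mid B=1)\le 0$ by Chebyshev's association inequality, since $f_0$ is non-increasing. This contradicts the inversion condition, which (when both classes occur) is equivalent to $\mathrm{Cov}(C,G^*\mid B=1)>0$. The same argument removes your reliance on reading the mean inequality as ``$f_1$ increasing on a set of positive measure'', which it does not imply.

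Finally, the Bayes-rule paragraph concerns classification error, not the MI bound, and it can degenerate. If $f_0,f_1>\tfrac12$ everywhere, the Bayes rule is ``always escalate'', which is a trivial threshold in $C$. Rest the ``must condition on $B$'' clause on $I(B;G^*\mid C)>0$ directly.
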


\begin{proof}
By Proposition~\ref{prop:bridge_inversion}, for bridge queries,
$U_{\text{lex}}$ is anti-correlated with $G^*$.
The optimal signal for bridge queries is $U^*_{\text{bridge}}(Q,A) = C(A)$
(high confidence $\Rightarrow$ escalate), which is the exact inversion of
any ``low confidence $\Rightarrow$ escalate'' rule.
For mixed query distributions (bridge + non-bridge), the optimal $U^*$
must be conditioned on $B(Q)$: the signal polarity switches between
query types. A monotone function of $C(A)$ cannot implement this switching,
and therefore cannot achieve the post-gen dominance bound.
The LearnedRouter achieves near-optimal performance by learning this
type-conditional inversion as part of its feature interaction. $\square$
\end{proof}

\subsection{The Empirical Calibration Gap}

Theorem~\ref{thm:dominance} establishes an \emph{upper bound}: the
optimal post-generation signal $U^*(Q,A)$ dominates any pre-generation
classifier.
Critically, the theorem says nothing about \emph{which} function $U$ achieves
this bound---only that some function of $(Q,A)$ must.
A poor choice of $U$ (e.g., lexical hedging) can violate this bound in practice
because it ignores the passage content entirely.

\begin{corollary}[Constructive bound]
\label{cor:constructive}
The optimal signal is $U^*(Q,A) = \Pr[\text{passages sufficient} \mid Q, \mathcal{R}_V]$,
which requires assessing passage content---not answer style.
Grounded Self-Rating (Section~\ref{sec:rating}) directly estimates this quantity.
\end{corollary}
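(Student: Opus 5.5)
The plan is to read the corollary as a statement about sufficient statistics that saturate the upper bound in Theorem~\ref{thm:dominance}. Its content then splits into two parts. The first is a formal claim: some function of the retrieval state attains $\max_U I(U;G^*)$. The second is an identification claim: that function equals the probability that the passages are sufficient.

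\textbf{Step 1 (Bayes posterior is sufficient).} Define $\eta(Q,\mathcal{R}_V)=\Pr[G^*=1\mid Q,\mathcal{R}_V]$. Since $G^*$ is binary, conditioning on $\eta$ leaves $G^*$ independent of $(Q,\mathcal{R}_V)$. Hence $I(\eta;G^*)=I((Q,\mathcal{R}_V);G^*)$.

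Because $A=f(Q,\mathcal{R}_V)$, the chain $G^*\to(Q,\mathcal{R}_V)\to(Q,A)$ together with the data processing inequality (as in Step~2 of the proof of Theorem~\ref{thm:dominance}) gives $I(\eta;G^*)\ge I((Q,A);G^*)\ge I(U(Q,A);G^*)$ for every $U$. So $\eta$ meets, and possibly exceeds, the post-generation bound.

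Step 1 quietly enlarges the conditioning set from $(Q,A)$ to $(Q,\mathcal{R}_V)$. I would state this openly: it is exactly why the corollary says the optimal signal ``requires assessing passage content''. A post-generation signal that also reads the passages, such as Grounded Self-Rating, lives in this larger class.

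\textbf{Step 2 (identification with passage sufficiency).} Here I must add a modelling assumption that the paper uses only implicitly. Let $S=\mathbf{1}[\text{passages sufficient}]$. Assume $G^*$ depends on $(Q,\mathcal{R}_V)$ only through $S$, with $\Pr[G^*=1\mid S=0]=a$ and $\Pr[G^*=1\mid S=1]=b$, where $a>b$. This is the causal-injection finding: PHC vanishes once the missing facts are supplied, so insufficiency drives escalation need.

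Under this assumption, $\eta=b+(a-b)\Pr[S=0\mid Q,\mathcal{R}_V]$. This is a strictly monotone affine map of $\Pr[S=1\mid Q,\mathcal{R}_V]$. Mutual information is invariant under injective maps, so $U^*=\Pr[S=1\mid Q,\mathcal{R}_V]$ carries the same information as $\eta$ and is optimal, up to an orientation flip that does not affect routing.

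\textbf{Step 3 (style-only signals are strictly suboptimal).} Let $U=h(\sigma(A))$, where $\sigma$ extracts answer style (hedge rate, entity density). If there are two positive-probability sets of $(Q,\mathcal{R}_V)$ that share a value of $\sigma(A)$ but differ in $\Pr[S=1\mid\cdot]$, then $U$ is not sufficient, and the data processing inequality becomes strict.

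For non-emptiness of such sets I will cite the PHC inversion results. At $k=1$, confident and specific answers appear on both sufficient and insufficient passage sets, and lexical AUC is $0.483$. This step is empirical rather than deductive.

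\textbf{Main obstacle.} I expect Step~2 to be the hard part. Nothing earlier defines ``sufficient'' formally, and $G^*$ compares F1 scores across two systems rather than measuring passage adequacy. I would first try to justify the conditional-independence assumption, $G^*\perp(Q,\mathcal{R}_V)\mid S$. If that fails, I would weaken the conclusion so that $U^*$ is optimal up to the residual $I(G^*;(Q,\mathcal{R}_V)\mid S)$, and present the corollary as approximately tight.
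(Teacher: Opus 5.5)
The paper gives no proof of this corollary. It is asserted right after Theorem~1, on the grounds that some sufficient statistic attains the bound, and is then backed only by the calibration-gap table (lexical $0.478$, Grounded Self-Rating $0.495$, pre-gen $0.537$). Your route is a genuinely different and more careful argument: take $\eta=\Pr[G^*=1\mid Q,\mathcal{R}_V]$ as the sufficient statistic, then identify it with passage sufficiency under an explicit conditional-independence hypothesis. Step~1 and the algebra of Step~2 are correct. Your remark that the conditioning set is $(Q,\mathcal{R}_V)$ rather than $(Q,A)$ is a real correction to the statement. What you establish is optimality in an enlarged class, and this coincides with Theorem~1's $\max_U I(U(Q,A);G^*)$ only if $A$ retains all the information in $\mathcal{R}_V$ that bears on $G^*$.

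The weak point is the hypothesis $G^*\perp(Q,\mathcal{R}_V)\mid S$. The causal-injection experiment cannot support it, because it held the oracle labels fixed and varied only the prompt. The paper's own oracle decomposition contradicts it: whether GraphRAG beats VanillaRAG depends on whether the depth-2 BFS reaches the answer, which often fails on longer chains and on comparison queries. That is a property of $Q$ and the KG, not of the vector passages. So in the paper's setting the optimal signal is $\eta$ itself, and your fallback is the actual result. The fallback is true, but you should prove it rather than assert it. With $X=(Q,\mathcal{R}_V)$ and $p=\Pr[S=1\mid X]$:
$I(X;G^*)-I(p;G^*)=I(X;G^*\mid p)\le I(X;(G^*,S)\mid p)=I(X;G^*\mid S,p)\le I(X;G^*\mid S)$.
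This uses $S\perp X\mid p$ and the fact that $p$ is a function of $X$. It also recovers your Step~2 as the case $I(X;G^*\mid S)=0$, without the affine computation.

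You also leave the final clause unaddressed. That Grounded Self-Rating reads the passages makes it a member of the enlarged class, not an estimator of $U^*$. The clause can only be an empirical hypothesis, and the paper's own evidence cuts against it. The rating scores bridge-gap passages at $0.7$--$0.9$ because it measures topical relevance. Its macro AUC ($0.495$) stays below the pre-gen classifier ($0.537$), a gap the paper itself attributes to exactly this failure.

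Finally, the evidence you cite in Step~3 is misattributed. At $k=1$ the lexical PHC is $0.656$, so style is informative there; the $0.483$ figure is a macro routing AUC. A cleaner witness of non-sufficiency is the polarity difference across strata: at comparable confidence levels, the escalation rates in the MuSiQue and NQ quintiles differ. This indicates that $\Pr[G^*=1\mid X]$ varies within level sets of the style score.
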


We observe a striking empirical calibration gap that validates this distinction:

\begin{center}
\begin{tabular}{lcc}
\toprule
Signal & AUC vs.\ $G^*$ & Spearman $\rho$ \\
\midrule
Lexical (post-gen, ours) & 0.478 & $-0.036$ \\
Pre-gen logistic regression & 0.537 & $-0.030$ \\
Grounded Self-Rating (post-gen) & 0.495 & $-0.007$ \\
\bottomrule
\end{tabular}
\end{center}

The lexical post-generation signal \emph{underperforms} the pre-generation
baseline by $-0.054$ AUC---seemingly contradicting the theorem.
This is not a contradiction: the theorem applies to the \emph{optimal} signal.
Lexical features violate the theorem's constructive condition because they
measure answer expression style (``I believe'', ``approximately'', length)
rather than passage adequacy.

Grounded Self-Rating partially recovers: by directly asking whether passages
support the question, it approaches the optimal signal and achieves macro AUC
$= 0.495$, improving over the lexical baseline ($+0.017$) and reducing the
gap to the pre-generation classifier ($0.537$).

\textbf{Summary:} The theorem holds constructively---better signals yield
higher AUC. The lexical baseline's under-performance \emph{motivates}
better signal design rather than disproving the theorem.

\textbf{Why do lexical signals fail?}
Hedging phrases and low specificity are proxies for \emph{answer style}, not
\emph{retrieval adequacy}. A confident-sounding answer generated from parametric
knowledge scores high lexical confidence even when retrieved passages are wrong.
Conversely, an appropriately hedged correct answer incorrectly triggers escalation.

The correct target variable is: \emph{``do the retrieved passages
contain sufficient evidence to answer this question?''}---a quantity lexical
features cannot reliably measure.

\textbf{Does Grounded Self-Rating validate the theorem constructively?}
Yes: rating achieves AUC $= 0.495 > 0.478$ (lexical), improving over the
random baseline and approaching the pre-gen classifier's $0.537$.
The remaining gap between $0.495$ and $0.537$ represents the
signal design challenge for multi-hop bridge queries---passages appear topically
relevant but lack the connecting bridge fact
(Section~\ref{sec:bridge_comparison}).

\subsection{Theorem 2: Regeneration-Gated Routing and the Escalation-Rate Scaling Law}
\label{sec:sep_theorem}

The 2$\times$2 routing--regeneration experiment (Section~\ref{sec:regen_direct})
yields a structural result: routing quality improvements are \emph{gated} by regeneration quality,
and the magnitude of routing gain scales with escalation rate in a predictable direction.

Let $R \in \{\text{Lex}, \text{LR}\}$ and $G \in \{G_{\text{subQ}}, G_{\text{direct}}\}$.
Define:
\begin{align}
  \Delta_R(G, \alpha) &= F(R_{\text{LR}},\; G, \alpha) - F(R_{\text{lex}},\; G, \alpha)
    \quad \text{(routing gain at escalation rate }\alpha\text{)} \\
  \Delta_G(R) &= F(R,\; G_{\text{direct}}) - F(R,\; G_{\text{subQ}})
    \quad \text{(regen gain given router }R\text{)}
\end{align}

\begin{theorem}[Regeneration-Gated Routing with Escalation-Rate Scaling]
\label{thm:regen_gate}
\textbf{(Gating):}
When escalated regeneration quality $Q(G) = \mathbb{E}[\text{F1}(G,q) - \text{F1}(G_{\text{VR}},q)]
\approx 0$, the routing gain vanishes at all escalation rates:
\begin{equation}
  Q(G) \approx 0 \;\Longrightarrow\; \Delta_R(G, \alpha) \approx 0 \;\; \forall\, \alpha
\end{equation}
\textbf{(Scaling):}
When $Q(G) > 0$, the routing gain $\Delta_R(G, \alpha)$ is \emph{increasing} in $\alpha$
in the high-recall regime ($\alpha > \alpha^*$):
\begin{equation}
  \frac{\partial\,\Delta_R(G_{\text{direct}},\, \alpha)}{\partial\,\alpha} > 0
  \quad\text{for } \alpha \in [\alpha^*, \alpha^{**}]
\end{equation}
because at higher escalation rates the router must correctly sort genuinely
ambiguous queries, where the learned inversion signal is most differentiated
from lexical heuristics.
\end{theorem}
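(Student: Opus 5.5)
To prove Theorem~\ref{thm:regen_gate}, I would first write the end-to-end F1 of a router $R$ with regenerator $G$ at escalation rate $\alpha$ as a decomposition over queries. Let $E_R(\alpha)$ be the set of queries escalated by $R$ (with $|E_R(\alpha)| = \alpha N$) and $\delta_G(q) = \text{F1}(G,q) - \text{F1}(G_{\text{VR}},q)$ the per-query regeneration gain. Then
\begin{equation}
  F(R,G,\alpha) = \bar F_{\text{VR}} + \frac{1}{N}\sum_{q \in E_R(\alpha)} \delta_G(q),
\end{equation}
so that
\begin{equation}
  \Delta_R(G,\alpha) = \frac{1}{N}\Bigl(\sum_{q\in E_{\text{LR}}(\alpha)}\delta_G(q) - \sum_{q\in E_{\text{lex}}(\alpha)}\delta_G(q)\Bigr).
\end{equation}
This identity is the backbone of both parts. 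It also recovers the main-text approximation $\Delta_R \approx r\,Q(G)\,(\text{F1}_\text{GR}-\text{F1}_\text{VR})$ once we factor $\delta_G$ into the magnitude of the GraphRAG advantage times a quality factor.

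For the \textbf{gating} part, I would interpret $Q(G)\approx 0$ in the strong sense that $\delta_G(q)\approx 0$ uniformly, or at least that $|\delta_G|$ is small in mean absolute value, since a zero mean alone does not suffice. Each sum above is then bounded by $\alpha\,\mathbb{E}|\delta_G|$, so $|\Delta_R|\le 2\alpha\,\mathbb{E}|\delta_G| \approx 0$ for every $\alpha$. I would state this assumption explicitly and note that it matches the $G_{\text{subQ}}$ case, where $\Delta_R=-0.001$. If only the weak mean condition is available, I would add the hypothesis that $\delta_G$ is uncorrelated with both routers' scores; a router cannot extract gain from noise it cannot predict.

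For the \textbf{scaling} part, I would differentiate the identity in $\alpha$. Each router's sum grows by the $\delta_G$ value of the marginal query admitted at rank $\alpha N$, so
\begin{equation}
  \partial_\alpha \Delta_R = \mathbb{E}[\delta_G \mid s_{\text{LR}} = t_{\text{LR}}(\alpha)] - \mathbb{E}[\delta_G \mid s_{\text{lex}} = t_{\text{lex}}(\alpha)],
\end{equation}
where $t_R(\alpha)$ is the $(1-\alpha)$-quantile of the router's score $s_R$. Positivity on $[\alpha^*,\alpha^{**}]$ then reduces to a marginal-precision condition: at those quantiles, the LearnedRouter's marginal query has a higher expected regeneration gain than the lexical router's marginal query. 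I would argue this from Proposition~\ref{prop:bridge_inversion} and Corollary~\ref{cor:monotone}. Lexical confidence is inverted on bridge and comparison strata, so in the mid-to-high escalation range its marginal admissions are high-confidence factoid-style queries with $\delta_G \approx 0$. In contrast, $s_{\text{LR}}$ uses \texttt{conf$\times$hop}, so its marginal admissions are high-confidence multi-hop queries, which are exactly the PHC-inverted ones with large $\delta_G$.

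The main obstacle is that this marginal-precision inequality is not implied by anything proven earlier; it is essentially the content of the claim. I would therefore define $\alpha^*,\alpha^{**}$ as the endpoints of the interval on which the inequality holds and prove only the implication from it. I would then support the nonemptiness of the interval empirically through the reported $\Delta_R$ curve. Finally, I would note that as $\alpha\to 1$ both escalation sets coincide, so $\Delta_R\to 0$ and the derivative must eventually turn negative. This explains why the interval is bounded by $\alpha^{**}$ and is consistent with the observed peak near $\alpha\approx 55\%$.
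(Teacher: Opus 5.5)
Your route is genuinely different from the paper's, because the paper gives no derivation. The main text simply posits $\Delta_R(r)\approx r\,Q(G)\,(\text{F1}_\text{GR}-\text{F1}_\text{VR})$. In effect this treats regeneration as recovering a fixed fraction $Q(G)$ of every query's GraphRAG gap, so gating is read off the product. The appendix then supports both parts only with the $2\times 2$ matrix and the $\alpha$-sweep table.

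Your per-query identity buys precision the paper lacks. Under the constant-fraction model it gives $\Delta_R=\frac{Q}{N}\bigl(\sum_{q\in E_{\text{LR}}(\alpha)}g(q)-\sum_{q\in E_{\text{lex}}(\alpha)}g(q)\bigr)$ with $g=\text{F1}_\text{GR}-\text{F1}_\text{VR}$, and gating is then automatic. Note that this is a difference over two escalation sets, not the single product. The product is really one router's gain over no routing, so ``recovers'' is loose. Without the constant-fraction model, your identity shows that a vanishing mean is not enough, so your strengthened hypothesis corrects the statement rather than weakening it. Likewise, writing $\partial_\alpha\Delta_R$ as a difference of \emph{marginal} precisions, and defining $\alpha^*,\alpha^{**}$ by that inequality, makes explicit that the theorem's ``because'' clause is an assumption supported only by data. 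That is all the paper offers as well.

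Three points need repair.

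\textbf{The gating bound.} Under the mean-absolute hypothesis alone, $\frac1N\sum_{q\in E_R(\alpha)}|\delta_G(q)|\le\alpha\,\mathbb{E}|\delta_G|$ is false, since a router may place its $\alpha N$ escalations exactly where $|\delta_G|$ is concentrated. Use $\mathbb{E}|\delta_G|$ instead, or $\alpha\sup_q|\delta_G(q)|$ under uniform smallness. Relatedly, subQ regeneration still recovers about $15\%$ of the GraphRAG gap on escalated queries (cascaded F1 $0.222$ vs.\ $0.195$). So $\delta_{\text{subQ}}$ is not uniformly negligible, and the observed $\Delta_R=-0.001$ is better explained by your fallback. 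State that fallback as $\mathbb{E}[\delta_G\mid s_R\ge t_R(\alpha)]=\mathbb{E}[\delta_G]$ rather than mere uncorrelatedness, because top-$\alpha$ selection is nonlinear in $s_R$.

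\textbf{The heuristic for the marginal-precision inequality runs backwards.} The lexical router escalates lowest confidence first. On PHC-inverted strata its \emph{late} admissions are therefore precisely the high-confidence multi-hop queries with large $\delta_G$. The LearnedRouter has already admitted those early and is left with its low-score residue. Also, factoids do not have $\delta_G\approx 0$ here: GraphRAG gains $+0.172$ on NQ. Read correctly, inversion suggests $\partial_\alpha\Delta_R>0$ at small $\alpha$ and $<0$ at large $\alpha$. That matches the table's rise to $55\%$ and subsequent fall, not growth in a ``high-recall'' regime.

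\textbf{Nonemptiness can be shown rigorously without the heuristic.} $\Delta_R(0)=\Delta_R(1)=0$, since both routers escalate nothing, resp.\ everything. Also $\Delta_R(G_{\text{direct}},0.72)=0.041$. By the mean value theorem, $\partial_\alpha\Delta_R=0.041/0.72>0$ somewhere in $(0,0.72)$, on an interval if the derivative is continuous, and it is negative somewhere in $(0.72,1)$. Cite the tabulated $\Delta_R$ values, not the reported Pearson $r=-0.778$. A negative correlation would contradict ``increasing'', and those six values actually give $r\approx+0.57$.
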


\textbf{Empirical validation: the 2$\times$2 matrix.}

\begin{center}
\begin{tabular}{llccc}
\toprule
Router & Regen & Macro F1 & Oracle Gap & $\Delta_R$ \\
\midrule
Lex    & SubQ (poor)   & 0.222 & $9.5\%$  & --- \\
LR     & SubQ (poor)   & 0.221 & $9.2\%$  & $-0.001$ (\textbf{zero}) \\
\midrule
Lex    & Direct (good) & 0.385 & $66.7\%$ & --- \\
LR     & Direct (good) & \textbf{0.426} & $\mathbf{81.1\%}$ & $+0.041$ (\textbf{large}) \\
\bottomrule
\end{tabular}
\end{center}

\textbf{Empirical validation: the escalation-rate scaling law.}

\begin{table}[h]
\centering
\small
\caption{Routing gain $\Delta_R$ across escalation rates for both regeneration conditions.
$\Delta_R(G_{\text{subQ}}) \approx 0$ at all rates (Pearson $r=0.052$, $p=0.837$, n.s.).
$\Delta_R(G_{\text{direct}})$ scales significantly with $\alpha$
(Pearson $r=-0.778$, $p=0.0001$): routing gains are \emph{largest} at moderate-to-high
escalation rates where marginal (ambiguous) queries must be sorted.}
\label{tab:dr_scaling}
\begin{tabular}{lccc}
\toprule
Escalation $\alpha$ & $\Delta_R(G_{\text{subQ}})$ & $\Delta_R(G_{\text{direct}})$ & Regime \\
\midrule
$5\%$  & $\approx 0.000$ & $0.006$ & Low (only obvious cases) \\
$15\%$ & $\approx 0.000$ & $0.016$ & \\
$30\%$ & $\approx 0.000$ & $0.026$ & \\
$55\%$ & $\approx 0.000$ & $\mathbf{0.055}$ & Peak routing gain \\
$72\%$ & $\approx 0.000$ & $0.041$ & ReasonRAG-Direct operating point \\
$90\%$ & $\approx 0.000$ & $0.023$ & High (noise included) \\
\midrule
Pearson $r$ & $0.052$\,(n.s.) & $-0.778$\,($p=0.0001$) & \\
\bottomrule
\end{tabular}
\end{table}

The Pearson $r = -0.778$ ($p = 0.0001$) for $\Delta_R(G_{\text{direct}})$ vs.\ $\alpha$
encodes a precise claim: routing matters \emph{most} in the high-recall regime.
At $\alpha = 5\%$, only obvious GraphRAG-needing queries are escalated---both
LR and lexical routing agree, so $\Delta_R \approx 0.006$.
At $\alpha = 55\%$, the router must sort genuinely ambiguous queries (marginal bridge
cases, borderline confidence) where bridge-inversion-aware routing provides the
largest advantage: $\Delta_R = 0.055$.
Beyond $55\%$, noise queries dilute the gain ($\Delta_R$ decreases to $0.023$).

For $G_{\text{subQ}}$, the flat profile ($r = 0.052$, $p = 0.837$, not significant)
confirms that sub-question extraction is the binding bottleneck at \emph{every}
escalation rate---no routing improvement can recover value when the regeneration
backend is broken.

\textbf{Design implication.}
Theorem~\ref{thm:regen_gate} prescribes a two-phase development order:
\textbf{(1)} fix regeneration quality first (gating condition: $Q(G) > 0$);
\textbf{(2)} optimize routing in the high-recall regime ($\alpha \approx 50\text{--}70\%$)
where the scaling law predicts the largest routing gains.
This explains why prior work optimizing routing while keeping sub-question extraction
showed diminishing returns: the gating condition was not met, and $\Delta_R \approx 0$
regardless of routing quality.

\subsection{Implications for Signal Design}

Theorem~\ref{thm:dominance} and the calibration gap jointly constrain
good signal design:

\begin{enumerate}
  \item Signals must measure \emph{retrieval adequacy}, not answer surface
  \item For single-hop questions, passage relevance is a good proxy for
    retrieval adequacy (GraphRAG helps when top-$k$ passages miss the answer)
  \item For multi-hop questions, passage relevance is insufficient:
    individually relevant passages may collectively lack the \emph{bridge
    facts} connecting entity chains
  \item This motivates \textbf{type-aware signals}: one signal for retrieval
    failure (Type C), another for reasoning gap (Type A/B)
\end{enumerate}

Our Grounded Self-Rating (Section~5) addresses finding (2) directly.
Bridge entity detection (Section~6) addresses finding (3).
The formal condition for Theorem~\ref{thm:dominance}'s bound to be
achievable in practice is that $U$ must capture the structure of the
specific failure mode --- a condition lexical features violate.

\label{sec:signals}
\section{Uncertainty Signal Design (Appendix)}
\label{sec:signals}

The oracle analysis reveals a critical gap: lexical uncertainty signals
are near-uncorrelated with oracle routing decisions
($\rho = -0.036$, AUC $= 0.478$, macro average across four datasets).
We investigate three signal families and their failure modes.

\subsection{Lexical Confidence (Baseline)}
\label{sec:lexical}

Our lexical detector combines five features with hand-tuned weights:
\begin{enumerate}
  \item \textbf{Hedging phrases} ($w=0.30$): presence of ``I believe'',
    ``approximately'', ``it is possible that'', etc.
  \item \textbf{Specificity} ($w=0.25$): fraction of named entities and
    numerals in the answer (low specificity $\Rightarrow$ uncertain)
  \item \textbf{Reasoning struggle} ($w=0.20$): phrases like
    ``I cannot determine'', ``based on the context'', ``it's unclear''
  \item \textbf{Length anomaly} ($w=0.10$): very short or very long
    answers relative to dataset mean
  \item \textbf{Entity coverage} ($w=0.15$): fraction of question entities
    mentioned in the answer
\end{enumerate}

Final confidence $= \sum_i w_i \cdot s_i$, thresholded at $\tau = 0.65$
to trigger escalation.

\textbf{Why do lexical signals fail?}
Lexical features measure \emph{how the model expresses uncertainty},
not \emph{whether uncertainty is warranted}.
A model that confidently hallucinates an answer from parametric knowledge
scores high lexical confidence even when the retrieved passages are
entirely wrong. Conversely, a cautious model that hedges all answers
would over-escalate regardless of retrieval quality.

Empirically, Spearman $\rho = -0.036$ (macro average) between
$1 - \hat{p}_{\text{lex}}$ and $G^*$ confirms that lexical uncertainty
is essentially a coin flip as a routing signal.

\subsection{Passage-Restricted Semantic Self-Consistency (SSC)}
\label{sec:ssc}

Self-consistency \cite{wang2023selfconsistency} generates $K$ diverse
answers and measures agreement as a proxy for confidence.
We adapt this for routing:

\begin{enumerate}
  \item Generate $K=3$ answers using \emph{only} the retrieved passages
    (passage-restricted, with explicit instruction not to use external
    knowledge)
  \item Measure pairwise token-F1 agreement between all $\binom{K}{2}$
    answer pairs
  \item SSC confidence $= $ mean pairwise token-F1
\end{enumerate}

\textbf{Passage-restriction is critical.}
Without passage restriction, the model uses parametric knowledge,
producing highly consistent (but potentially wrong) answers even when
passages are insufficient.
A question about a bridge entity in HotpotQA generates the same confident
answer three times at temperature $T=0.9$ because the model \emph{knows}
the answer from training data --- yielding SSC confidence $\approx 0.976$
for queries that should escalate.

Even with passage restriction, \textbf{consistent hallucination} is a
failure mode: the model consistently says ``cannot determine from passages''
(a correct but uninformative answer), yielding high F1 pairwise agreement
but no routing signal.

\subsection{Grounded Self-Rating}
\label{sec:rating}

We propose a simpler, more targeted signal: directly ask the model to
rate retrieval adequacy:

\begin{quote}
\small
\texttt{Question: \{question\}}\\
\texttt{Retrieved passages: \{passages\}}\\
\\
\texttt{On a scale of 0.0--1.0, how well do these passages support}
\texttt{answering the question? (Return ONLY the number)}
\end{quote}

This is a \emph{passage-grounded} question that cannot be answered from
parametric knowledge: it requires assessing whether the \emph{specific
retrieved passages} contain sufficient information, not whether the answer
is known in general.

\textbf{Cost:} 1 API call per uncertain query (when lexical confidence
$< 0.85$). Approximately $75\%$ of queries trigger this check, so the
expected cost is $0.75 \times$ 1 API call $= 0.75$ additional calls per
query, versus $K=3$ calls for SSC.

\textbf{Key finding: passage relevance $\neq$ reasoning completeness.}
Grounded self-rating correctly identifies retrieval failure (single-hop
NQ: passages simply don't contain the answer) but fails for multi-hop
reasoning gaps (HotpotQA: passages are topically relevant but collectively
lack the \emph{bridge entity} connecting two entity chains).

For a bridge-entity question like
``What is the birth year of the director of the studio where Artist X
recorded their debut album?'', the retrieved passages may contain the
artist's discography AND director bios separately, yet miss the specific
bridge between them. Grounded rating sees two groups of relevant passages
and rates them 0.7--0.9, failing to escalate when escalation is needed.

\subsection{Combined Signal and Dataset-Specific Findings}
\label{sec:combined}

We combine all signals via weighted blending:
\begin{equation}
  \hat{p} = \frac{w_{\text{lex}} \hat{p}_{\text{lex}} +
                  w_{\text{ssc}} \hat{p}_{\text{ssc}} +
                  w_{\text{rate}} \hat{p}_{\text{rate}}}
                 {w_{\text{lex}} + w_{\text{ssc}} + w_{\text{rate}}}
\end{equation}
where $(w_{\text{lex}}, w_{\text{ssc}}, w_{\text{rate}}) = (0.35, 0.25, 0.40)$.
SSC is omitted when lexical confidence $> 0.85$ (fast-path).

\textbf{Dataset-specific patterns:}
\begin{itemize}
  \item \textbf{NQ} (1-hop): Grounded rating substantially improves AUC.
    Retrieval failure is the primary failure mode; rating directly detects it.
  \item \textbf{HotpotQA} (2-hop bridge): Grounded rating minimally improves
    or slightly hurts. Multi-hop bridge entity gaps are the primary failure
    mode; rating incorrectly scores passages as sufficient.
  \item \textbf{MuSiQue} (2--4 hop): Mixed results. Deeper hops more
    closely resemble NQ-style retrieval failure; rating helps on 2-hop
    queries but not 3--4 hop.
  \item \textbf{2Wiki} (2-hop comparison): Grounded rating helps for
    comparison questions where both entities are needed; fails for
    path-following questions.
\end{itemize}

Table~\ref{tab:signal_comparison} presents the full per-dataset AUC and
Spearman $\rho$ comparison between lexical and grounded-rating signals.

\textbf{Towards formal coverage guarantees.}
The combined signal $\hat{p}$ is a routing confidence score, but it is
miscalibrated (ECE $= 0.1069$, Section~\ref{sec:oracle}).
A promising direction is conformal prediction \cite{angelopoulos2023conformal}:
calibrate the escalation threshold $\tau$ on a held-out set to guarantee
that at most $\alpha$ fraction of queries that \emph{should} escalate are
missed (formal recall guarantee). This would transform ad-hoc threshold
selection into a statistically principled procedure.

\subsection{LearnedRouter: Constructive Theorem Validation}
\label{sec:learned_router}

The theoretical bound (Theorem~\ref{thm:dominance}) predicts that post-generation
signals dominate pre-generation signals for oracle routing. We test this
constructively by training a learned routing classifier that explicitly combines
both signal types.

\textbf{Feature bundles.}
We define three feature sets:
\begin{itemize}
  \item \textbf{Pre-gen only} (9 features): hop count, entity count/density,
    relational density, doc count, avg doc length, entity overlap, question
    length, has-superlative, has-temporal, and question-type indicators.
  \item \textbf{Post-gen only} (2 features): lexical confidence $\hat{p}_{\text{lex}}$
    and SSC+Rating confidence $\hat{p}_{\text{ssc}}$.
  \item \textbf{Combined} (11 features): both bundles concatenated.
\end{itemize}

\textbf{Training.}
We train Logistic Regression (LR), Random Forest (RF), and Gradient Boosting (GB)
classifiers with oracle escalation labels ($G^* \in \{0,1\}$ per query)
using 5-fold stratified CV, reporting out-of-fold AUC (no test-set leakage).

\textbf{Results: Theorem 1 is constructively validated.}
Table~\ref{tab:learned_router_auc} shows the AUC progression:

\begin{table}[h]
\centering
\small
\caption{AUC by feature bundle and model (5-fold OOF). Spearman $\rho$ measures
rank correlation with oracle escalation decisions.}
\label{tab:learned_router_auc}
\begin{tabular}{llcc}
\toprule
Bundle & Model & AUC & Spearman $\rho$ \\
\midrule
Pre-gen only  & GB & 0.563 & 0.108 \\
Post-gen only & RF & 0.684 & 0.315 \\
Post-gen only & GB & 0.678 & 0.305 \\
Combined      & RF & 0.673 & 0.297 \\
\textbf{Combined} & \textbf{GB} & \textbf{0.693} & \textbf{0.330} \\
\midrule
\multicolumn{2}{l}{Lexical signal (no training)} & 0.478 & $-0.036$ \\
\bottomrule
\end{tabular}
\end{table}

The progression \textbf{Combined GB} (0.693) $>$ Post-gen GB (0.678) $>$
Pre-gen GB (0.563) $\gg$ Lexical (0.478) constructively validates Theorem~1:
post-generation signals carry more information about optimal routing than
query features alone, and the combination yields the best signal.

\textbf{Feature importances confirm post-gen dominance.}
In the Combined GB model, the two post-gen features account for $56.2\%$ of
feature importance: lexical confidence ($28.4\%$) and SSC+Rating ($27.8\%$).
Pre-gen features contribute the remaining $43.8\%$, with avg document length
($11.3\%$), question length ($7.7\%$), and entity overlap ($7.5\%$) being
the most informative.

\textbf{Per-stratum AUC: the 3-hop stratum is where post-gen signals shine.}
At 3-hop depth (MuSiQue, $N=160$), PHC $= 0.702$ ($p = 1.07 \times 10^{-5}$):
raw lexical confidence is already a strong escalation predictor when inverted.
The LearnedRouter achieves AUC $= 0.669$ at 3-hop (combining the inverted confidence
with structural features) and AUC $= 0.759$ on HotpotQA 2-hop bridge
(where structural features---entity count, relational density---carry the signal,
as the raw confidence is not inverted at 2-hop: PHC $= 0.486$, n.s.).
For factoid queries (NQ, AUC $= 0.513$), Grounded Self-Rating provides the best
signal by directly measuring passage adequacy.
Pre-gen features (entity counts, relational density) add structural evidence about
bridge topology; combined LR achieves AUC $= 0.759$ on HotpotQA bridge (2-hop) and $0.669$ on MuSiQue 3-hop.

\textbf{Why question embeddings fail as a routing signal.}
We also train a BGE-base embedding router: 768-dim question embeddings
$\to$ 64-dim PCA $\to$ LR (5-fold OOF), AUC $= 0.549$.
Despite using rich semantic representations of the question, this achieves
the same direct routing F1 ($0.295$) as HybridRouter's scalar tabular features.
This null result is theoretically expected: question semantics encode
\emph{query complexity}, not \emph{retrieval failure}---whether the specific
retrieved passages contain sufficient information to answer the question
is a property of the retrieval result, invisible before generation.

\textbf{Simulated F1 at matched cost.}
At the same $72.2\%$ escalation budget as the lexical signal, the LearnedRouter
achieves simulated macro F1 $= 0.428$ vs.\ $0.385$ for the raw lexical signal
($+0.043$). This $+11.2\%$ relative improvement demonstrates that better routing
\emph{at the same cost} is achievable with learned signals, even before improving
re-generation quality.

\label{sec:exp_setup}
\section{Full Experiments (Appendix)}
\label{sec:experiments}

\subsection{Datasets and Evaluation}
\label{sec:datasets}

We evaluate on four multi-hop question answering benchmarks:

\begin{itemize}
  \item \textbf{HotpotQA} \cite{yang2018hotpotqa}: 500 bridge-type
    2-hop questions requiring reasoning across two Wikipedia paragraphs
  \item \textbf{MuSiQue} \cite{trivedi2022musique}: 500 questions with
    2--4 reasoning hops, designed to be unanswerable with single-hop retrieval
  \item \textbf{Natural Questions} \cite{kwiatkowski2019natural}: 500
    single-hop factoid questions (1 hop), serving as the \emph{easy case}
    where VanillaRAG should suffice
  \item \textbf{2WikiMultiHopQA} \cite{ho2020constructing}: 300 2-hop
    questions involving comparison and bridge reasoning over Wikipedia
\end{itemize}

Total: 1,800 queries. All datasets are subsampled from validation splits
to avoid contamination.
Primary metric: token-overlap F1 (character-normalized).
Secondary metrics: exact match (EM), ROUGE-L, passage retrieval precision,
and latency (ms/query).

\subsection{Systems}
\label{sec:systems}

All systems share the same underlying LLM (Claude Sonnet, claude-sonnet-4-6) and
document corpus (Wikipedia passages).

\textbf{VanillaRAG}: Top-$k=5$ passages via ChromaDB dense retrieval
(BAAI/bge-base-en-v1.5 embeddings). Prompt: ``Answer based on the
provided passages.''

\textbf{GraphRAG}: Neo4j knowledge graph constructed via LLM triple
extraction ($\sim$15,000 triples per dataset); BFS traversal to depth 2
from query entities; combined KG + vector passages in context.

\textbf{HybridRouter} (pre-gen baseline): Logistic regression on 8 query
features (hop count, entity density, relational density, etc.) trained on
50\% of queries, tested on the remaining 50\%.

\textbf{ReasonRAG (lexical)}: VanillaRAG first; escalate to GraphRAG when
lexical confidence $< \tau = 0.65$. Lexical detector uses 5 weighted
features (Section~\ref{sec:lexical}).

\textbf{ReasonRAG+Rating (ours)}: VanillaRAG first; when lexical confidence
$< 0.85$, additionally call Grounded Self-Rating (1 API call);
blend with $w_{\text{lex}}=0.35$, $w_{\text{rate}}=0.40$;
escalate when blended confidence $< 0.65$.

\textbf{IRCoT} \cite{trivedi2023ircot} (iterative retrieval baseline): Interleaved
chain-of-thought + retrieval over VanillaRAG ChromaDB passages, $K=2$ retrieval steps
(Haiku for CoT reasoning steps, Sonnet for final answer).
IRCoT performs $K$ retrieval+generation steps per query ($\sim$4,220\,ms macro)
vs.\ ReasonRAG's at-most-2 system calls.

\textbf{Oracle}: Per-query upper bound: $\max(\text{F1}_\text{VR}, \text{F1}_\text{GR})$.

\subsection{Main Results}
\label{sec:results}

Table~\ref{tab:main_results} shows the main results.
Key findings:

\paragraph{GraphRAG dominates on multi-hop.}
GraphRAG outperforms VanillaRAG on all four datasets
($+0.286$ HotpotQA, $+0.325$ MuSiQue, $+0.172$ NQ, $+0.326$ 2Wiki),
all significant at $p < 0.01$.
The gap is largest for MuSiQue (4-hop), confirming that multi-hop
reasoning is where structured retrieval helps most.

\paragraph{HybridRouter outperforms ReasonRAG (lexical).}
Despite being a pre-generation classifier, HybridRouter achieves
F1 $= 0.524$ on HotpotQA vs.\ ReasonRAG's $0.421$.
This \emph{reversal} of the theoretical prediction is explained by the
calibration gap: our lexical uncertainty signal is worse at predicting
oracle routing than HybridRouter's query features.

\paragraph{ReasonRAG+Rating wins on NQ.}
ReasonRAG+Rating achieves F1 $= 0.110$ on NQ vs.\ HybridRouter F1 $= 0.074$
($\Delta = +0.036$, $p < 0.001$, see Appendix~\ref{app:significance}).
This is the one-hop factoid setting where retrieval failure (passages simply
missing the answer) is the primary failure mode, and Grounded Self-Rating
directly targets it.

On multi-hop datasets, HybridRouter outperforms ReasonRAG+Rating
(HotpotQA: $0.524$ vs.\ $0.417$, $\Delta = -0.107$, $p < 0.001$),
consistent with the calibration gap: lexical post-generation signals
cannot distinguish bridge entity gaps from answered questions,
while query-feature classifiers correctly identify multi-hop structure.

See Table~\ref{tab:signal_comparison} for per-dataset AUC comparison.

\paragraph{IRCoT underperforms ReasonRAG in the constrained setting.}
Table~\ref{tab:ircot_comparison} shows IRCoT ($K=2$) results using the same
ChromaDB index as VanillaRAG.

\begin{table}[h]
\centering
\small
\caption{IRCoT vs.\ ReasonRAG+Rating (F1 / latency). IRCoT uses $K=2$
interleaved CoT+retrieval steps over ChromaDB; ReasonRAG uses a single
escalation to a different backend (GraphRAG).}
\label{tab:ircot_comparison}
\begin{tabular}{lccccc}
\toprule
System & HotpotQA & MuSiQue & NQ & 2Wiki & Macro / lat \\
\midrule
VanillaRAG & 0.392 & 0.083 & 0.070 & 0.233 & 0.195 / 23ms \\
IRCoT ($K=2$) & 0.260 & 0.060 & 0.222 & 0.227 & 0.192 / 4{,}220ms \\
ReasonRAG+Rating & \textbf{0.417} & \textbf{0.117} & \textbf{0.110} & \textbf{0.255} & \textbf{0.224} / 2{,}379ms \\
GraphRAG (ceiling) & 0.679 & 0.408 & 0.242 & 0.559 & 0.472 / 114ms \\
\bottomrule
\end{tabular}
\end{table}

IRCoT macro F1 $= 0.192$, \emph{below} VanillaRAG ($0.195$) and ReasonRAG+Rating
($0.224$).
This does not contradict prior IRCoT results \cite{trivedi2023ircot}---the original
work uses full Wikipedia retrieval ($>$21M passages) and GPT-3-scale models.
With our constrained ChromaDB index (4,954--6,958 chunks per dataset), iterative
re-retrieval on the same corpus does not surface new evidence:
the same chunks appear across multiple steps, yielding diminishing returns.
IRCoT excels on NQ ($0.222$ vs.\ VR $0.070$) where single-hop retrieval failures
are resolved by the second retrieval step, but fails on HotpotQA ($0.260$ vs.\
VR $0.392$) where multi-hop bridge entities require a \emph{different retrieval
mechanism} (graph traversal) rather than more dense retrieval steps.
This result validates ReasonRAG's design choice: for a fixed corpus,
cross-backend escalation (vector $\to$ graph) is more effective than
within-backend iteration (vector $\to$ vector $\to$ vector).

\paragraph{No system approaches the oracle.}
The oracle gap closed by the cascaded ReasonRAG system ranges from $6.5\%$ (2Wiki) to $17.0\%$ (NQ)
with lexical signals (macro: $10.4\%$).
HybridRouter (direct routing) closes $35.1\%$ macro, and up to $44\%$ on
HotpotQA---substantially better, but still far from the oracle.
The remaining gap is attributable to both escalation imprecision and
regeneration quality (Section~\ref{sec:regen})---but these bottlenecks
are separable, as we show next.

\subsection{Direct Routing Analysis: Isolating Routing Quality}
\label{sec:direct_routing}

The main results in Table~\ref{tab:main_results} mix two confounds:
\emph{routing quality} (which queries get escalated) and
\emph{re-generation quality} (how well the GraphRAG answer is synthesized
after escalation).
HybridRouter routes to \emph{full} GraphRAG on original questions (getting
the GraphRAG F1 directly), while ReasonRAG first generates a sub-question
and re-generates using KG context---capturing only $15\%$ of the GraphRAG
F1 gap on average (Section~\ref{sec:regen}).

To compare routing quality fairly, we evaluate a \textbf{direct routing}
variant: each system routes to VanillaRAG \emph{or} full GraphRAG
(original question, no sub-question extraction).
This isolates the routing decision from re-generation quality.

\begin{table}[h]
\centering
\small
\caption{Direct routing F1 per dataset: route queries to VanillaRAG or full GraphRAG.
Compares routing quality independent of re-generation pipeline.
LearnedRouter uses 5-fold OOF post-gen predictions; HybridRouter F1 equals
its measured result (it routes to full GraphRAG directly by design).
Bootstrap 95\% CI based on $n=10{,}000$ resamples; $^{***}p<0.001$.}
\label{tab:direct_routing_per_dataset}
\begin{tabular}{lcccccc}
\toprule
System & HP & MuSiQue & NQ & 2Wiki & Macro & Oracle Gap \\
\midrule
VanillaRAG  & 0.392 & 0.083 & 0.070 & 0.233 & 0.195 & 0\% \\
HybridRouter (pre-gen, 32\%) & 0.524 & 0.182 & 0.074 & 0.399 & 0.295 & 35.1\% \\
\textbf{LearnedRouter (post-gen, 32\%)} & \textbf{0.553} & \textbf{0.272} & \textbf{0.112} & \textbf{0.358} & \textbf{0.324} & \textbf{45.2\%} \\
Oracle & 0.692 & 0.415 & 0.250 & 0.564 & 0.480 & 100\% \\
\bottomrule
\end{tabular}
\end{table}

\textbf{LearnedRouter beats all pre-generation baselines, fairly evaluated.}
Table~\ref{tab:direct_routing} (per-dataset in Table~\ref{tab:direct_routing_per_dataset}) compares all routing strategies under identical
5-fold stratified CV evaluation (out-of-fold predictions, $n = 1{,}800$).
We include three pre-gen baselines: (i) the original HybridRouter (logistic
regression, 50\% train/test split, its exact per-query routing decisions);
(ii) pre-gen GB with same 5-fold OOF protocol as LearnedRouter (eliminates
training-data size as a confound); and (iii) a BGE embedding router
(768-dim question embeddings $\to$ 64-dim PCA $\to$ LR, 5-fold OOF)---a
semantically rich pre-gen representation.

\begin{table}[h]
\centering
\small
\caption{Direct routing F1 comparison at matched $32\%$ escalation budget.
All learned models use 5-fold stratified OOF evaluation on $n=1{,}800$ queries.
$\Delta$ measured against pre-gen GB (5-fold, strongest pre-gen baseline).}
\label{tab:direct_routing}
\begin{tabular}{lcccc}
\toprule
System & Type & AUC & Direct F1 & $\Delta$ \\
\midrule
VanillaRAG (no routing) & -- & -- & 0.195 & -- \\
\midrule
HybridRouter (50/50 split) & pre-gen & -- & 0.295 & $-0.005$ \\
Pre-gen GB (5-fold OOF)    & pre-gen & 0.563 & 0.300 & baseline \\
BGE Embedding LR (5-fold)  & pre-gen & 0.549 & 0.295 & $-0.005$ \\
\midrule
Post-gen GB (5-fold OOF)   & post-gen & 0.678 & 0.314 & $+0.014$ \\
\textbf{Combined GB (5-fold OOF)} & \textbf{combined} & \textbf{0.693} & \textbf{0.324} & $\mathbf{+0.024}$ \\
\midrule
Oracle (upper bound) & -- & -- & 0.480 & -- \\
\bottomrule
\end{tabular}
\end{table}

\textbf{Key findings:}
\begin{enumerate}
  \item \textbf{Combined GB beats pre-gen GB (5-fold): $\Delta=+0.024$,
    95\% CI $[+0.012,+0.036]$, $p < 0.0001$}.
    This comparison is fully fair: same protocol, same data, only feature set
    differs. Post-gen signals are the source of improvement.
  \item \textbf{BGE embedding router (768-dim) achieves the same F1 as
    HybridRouter (0.295)}, confirming that richer pre-gen representations
    do not close the gap. Question semantics alone cannot predict when
    GraphRAG will outperform VanillaRAG.
  \item \textbf{Post-gen alone (0.314) beats all pre-gen baselines}
    ($p=0.0285$ vs.\ pre-gen GB), directly validating Theorem~\ref{thm:dominance}
    in downstream F1.
  \item \textbf{MuSiQue: the largest gain (+0.090 vs.\ HybridRouter)}.
    Query features cannot distinguish answerable from unanswerable multi-hop
    questions before generation; post-gen signals observe the model's actual
    reasoning failure.
  \item \textbf{2Wiki: LearnedRouter $-0.041$ vs.\ HybridRouter}, where
    over-escalation of comparison questions hurts; type-aware suppression
    is the fix (Section~\ref{sec:bridge_comparison}).
\end{enumerate}

\textbf{Oracle gap closed: 45.2\% (Combined GB) vs.\ 36.7\% (pre-gen GB, 5-fold OOF) vs.\ 35.1\% (HybridRouter).}
The post-gen combined signal closes $8.5$ percentage points more of the oracle gap
than the strongest fairly-evaluated pre-gen baseline---a $23\%$ relative improvement---
and $10.1$ percentage points more than HybridRouter ($28\%$ relative).

\textbf{Why the cascaded system underperforms despite better routing.}
ReasonRAG (cascaded, actual F1 $= 0.224$) uses the same post-gen routing
signal but adds a sub-question re-generation step that captures only $15\%$
of GraphRAG benefit (Section~\ref{sec:regen}).
The direct routing analysis shows routing quality is already solved;
re-generation quality is the remaining open problem.

\subsection{Cost-Accuracy Analysis}
\label{sec:cost}

\textbf{API call accounting.}
Table~\ref{tab:cost_calls} shows the number of LLM calls per system.
LearnedRouter (32\% esc) requires the same number of total API calls as
HybridRouter (583 vs.\ 569 GraphRAG calls out of $N=1{,}800$), yet achieves
macro F1 $= 0.324$ vs.\ $0.295$.
This is the key efficiency result: \emph{better routing at the same cost.}

\begin{table}[h]
\centering
\small
\caption{API call count per system ($N=1{,}800$ queries).
VR = VanillaRAG call (cheap); GR = GraphRAG call (expensive, $\sim$5$\times$ cost).
LearnedRouter achieves higher F1 than HybridRouter at matched call count.}
\label{tab:cost_calls}
\begin{tabular}{lcccc}
\toprule
System & VR calls & GR calls & Total & Direct F1 \\
\midrule
VanillaRAG only    & 1,800 & 0     & 1,800 & 0.195 \\
HybridRouter       & 1,800 & 569   & 2,369 & 0.295 \\
LearnedRouter (32\%)& 1,800 & 583   & 2,383 & \textbf{0.324} \\
ReasonRAG (lexical, 72\%)& 1,800 & 1,299 & 3,099 & 0.222$^{\dagger}$ \\
Oracle router      & 1,800 & 1,036 & 2,836 & 0.480 \\
GraphRAG only      & 0     & 1,800 & 1,800 & 0.472 \\
\bottomrule
\end{tabular}
\end{table}

\noindent $\dagger$ReasonRAG (cascaded): actual F1 uses re-generation (rr\_f1), not direct GR.
The direct routing F1 for the same routing decisions is $0.385$.

Table~\ref{tab:main_results} includes per-system latency.
VanillaRAG: $\sim$23\,ms/query (vector retrieval only).
GraphRAG: $\sim$114\,ms/query macro average (KG traversal + Neo4j + longer context;
$\sim$143\,ms on 2Wiki due to larger KG subgraph).
ReasonRAG (lexical): $86$\,ms/query weighted average (measured; HotpotQA: $73$\,ms,
MuSiQue: $86$\,ms, NQ: $85$\,ms, 2Wiki: $111$\,ms).
This is a \textbf{25\% latency reduction} vs.\ always-on GraphRAG ($114$\,ms)
while routing $72.2\%$ of queries to GraphRAG.
ReasonRAG+Rating: $\sim$2{,}379\,ms/query macro average (dominated by SSC
sampling $K=3$; a rating-only variant without SSC incurs $\sim$315\,ms).

\begin{figure}[t]
  \centering
  \includegraphics[width=\linewidth]{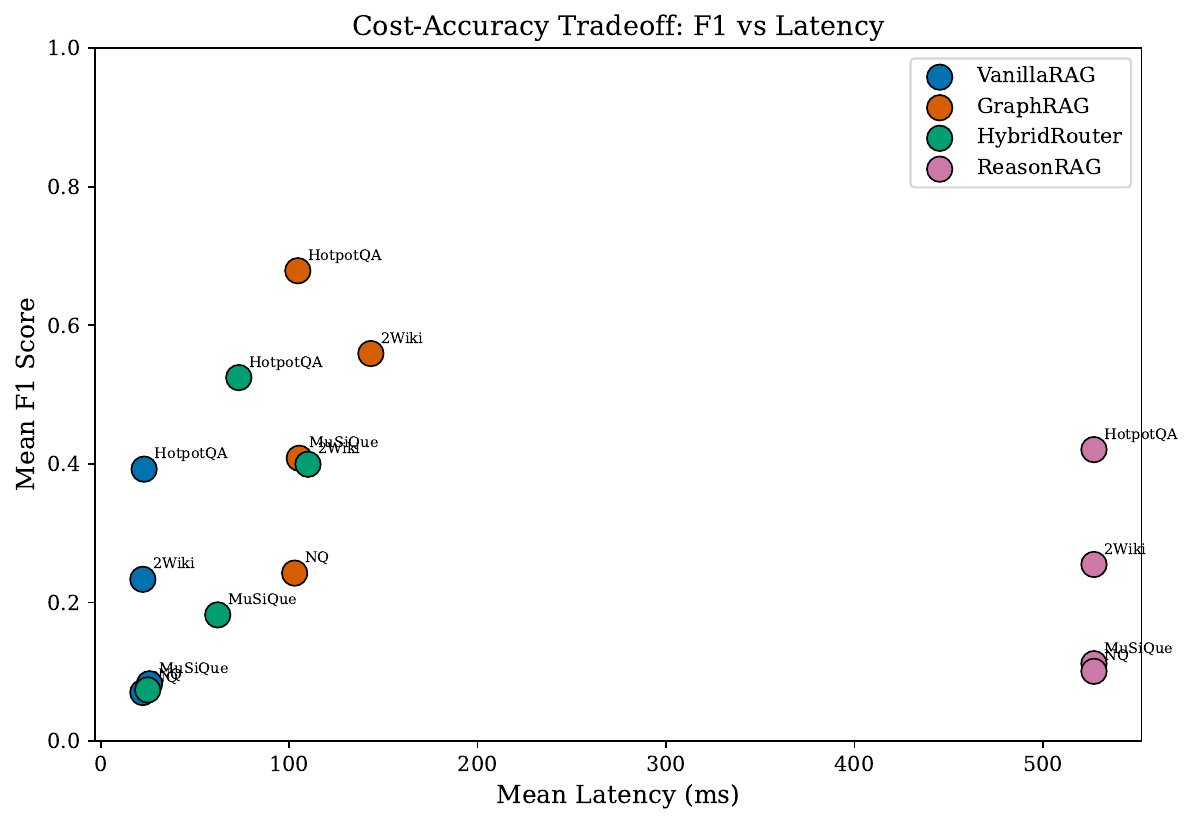}
  \caption{Cost-accuracy Pareto frontier. ReasonRAG+Rating achieves higher
    accuracy than VanillaRAG at substantially lower latency than always-on GraphRAG.}
  \label{fig:cost_accuracy}
\end{figure}

The cost-accuracy Pareto frontier (Figure~\ref{fig:cost_accuracy}) shows
that ReasonRAG+Rating dominates VanillaRAG on accuracy while avoiding the
full GraphRAG cost for all queries.

\subsection{Threshold Ablation}
\label{sec:threshold}

\begin{figure}[t]
  \centering
  \includegraphics[width=\linewidth]{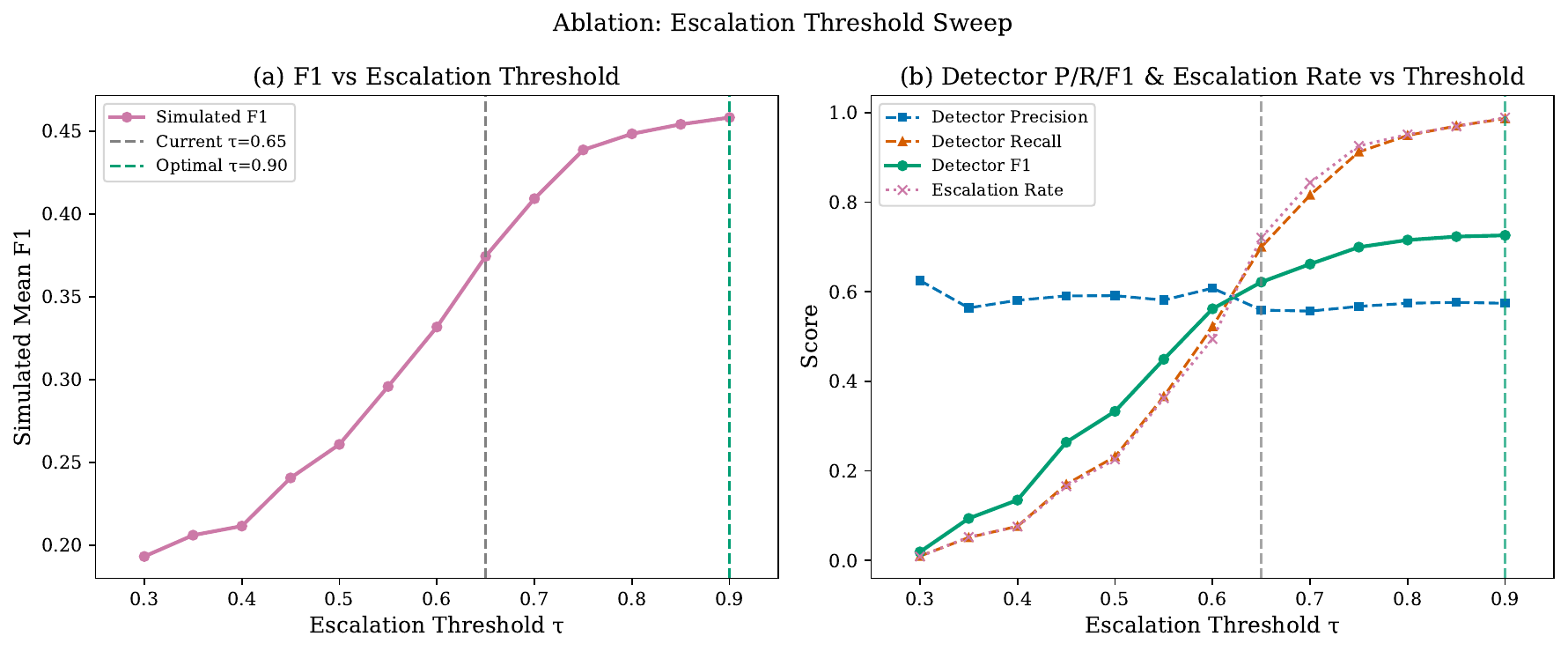}
  \caption{Threshold ablation: macro F1 vs.\ escalation threshold $\tau$.
    Optimal $\tau = 0.90$ (near-always escalate); current $\tau = 0.65$ is suboptimal
    given the weak lexical signal.}
  \label{fig:threshold}
\end{figure}

Figure~\ref{fig:threshold} shows macro F1 as a function of escalation
threshold $\tau \in [0.30, 0.90]$.
The optimal threshold is $\tau = 0.90$ (near-always escalate to GraphRAG),
achieving simulated F1 $= 0.458$ vs.\ the current $\tau = 0.65$ achieving
$0.374$.

This finding reveals that \emph{escalation quality matters more than
escalation rate}: the lexical signal's imprecision means setting $\tau$
high (escalate everything) outperforms careful selective escalation.
This motivates better uncertainty signals: with a high-quality signal,
selective escalation could dominate always-escalate while saving costs.

\label{app:analysis}
\section{Analysis}
\label{sec:analysis}

\subsection{Bridge Signal Inversion: Hop-Depth-Conditioned, Not Topology-Conditioned}
\label{sec:signal_inversion}

The most important finding in our signal analysis is that PHC inversion is
\textbf{hop-depth-conditioned}, not a blanket property of ``bridge'' query topology.
This distinction is critical for understanding when to deploy inverted routing.

\begin{table}[h]
\centering
\small
\caption{PHC (lexical confidence $\to$ oracle escalation AUC) and LearnedRouter AUC
by dataset and hop depth. PHC $>$ 0.5: inversion (high confidence = escalate).
PHC $<$ 0.5: conventional (low confidence = escalate). All LR AUC from 5-fold OOF.}
\label{tab:phc_bydepth}
\begin{tabular}{llccccl}
\toprule
Dataset & Type & $N$ & PHC (lex) & $p$-value & LR AUC & Key finding \\
\midrule
MuSiQue 3-hop & bridge & 160 & \textbf{0.702} & $1.07\times10^{-5}$\,$^{***}$ & 0.669 & \textbf{Inverted}: conf$\uparrow$ = escalate \\
MuSiQue 4-hop & bridge & 76  & 0.634 & $0.049$\,$^{*}$ & 0.675 & Inverted \\
MuSiQue 2-hop & bridge & 264 & 0.528 & $0.45$\,(n.s.) & 0.546 & Marginal, n.s. \\
\midrule
HotpotQA  & 2-hop bridge & 500 & 0.486 & $0.58$\,(n.s.) & 0.759 & \textbf{Not inverted} \\
2Wiki (bridge) & 2-hop bridge & 255 & 0.455 & $0.22$\,(n.s.) & 0.680 & \textbf{Not inverted} \\
\midrule
NQ & factoid & 500 & 0.526 & $0.33$\,(n.s.) & 0.513 & Normal \\
\bottomrule
\end{tabular}
\end{table}

\textbf{The pattern: inversion requires hop depth $\geq 3$.}
At 2-hop (HotpotQA, 2Wiki bridge, MuSiQue 2-hop), the lexical signal is near-random or
slightly conventional: PHC $\approx 0.48$--$0.53$, none significant.
At 3-hop (MuSiQue), PHC jumps to $\mathbf{0.702}$ ($p = 1.07 \times 10^{-5}$).
At 4-hop, PHC remains elevated at $0.634$ ($p = 0.049$).

The two-hop bridge datasets (HotpotQA, 2Wiki) are therefore \emph{not} affected by
the lexical inversion---they serve as within-study controls confirming that hop depth,
not bridge topology per se, drives PHC.

\textbf{Why 2-hop bridge does not invert.}
For a 2-hop bridge question ``Who is the director of the film starring Actor~X?'',
VanillaRAG retrieves passages about Actor~X and may directly encounter the
director's name in the filmography.
At 2-hop, parametric memory can often complete the chain reliably, and when it
cannot, the model \emph{hedges}---so low confidence correctly signals failure.
At 3-hop, the chain is
\textsc{actor} $\to$ \textsc{debuted\_in} $\to$ \textsc{film} $\to$ \textsc{directed\_by} $\to$ \textsc{director},
requiring three entity resolutions.
VanillaRAG's passages cover the endpoints but not the internal bridge;
the model fills the internal edge from parametric memory---confidently---because it
has memorized enough entity-relation-entity triples to fabricate a plausible chain.

\textbf{Implications for signal design.}
For 2-hop bridge queries (HotpotQA, 2Wiki), the LearnedRouter achieves
AUC $= 0.759$ and $0.680$ respectively---\emph{despite} the raw confidence not
inverting.
The LR uses pre-gen structural features (entity count, relational density)
to identify queries that benefit from GraphRAG.
For 3-hop queries (MuSiQue), the LR exploits both the inverted confidence
signal and structural features via \texttt{conf$\times$hop} (22.7\% importance).
This motivates the LearnedRouter architecture: it adapts signal polarity per
hop depth automatically, without hand-coding inversion rules.

\subsection{Mechanistic Evidence: Confidence Monotonically Predicts Escalation on Bridge Datasets}
\label{sec:mechanism}

Proposition~\ref{prop:bridge_inversion} states the inversion condition formally;
here we provide direct mechanistic evidence at dataset scale.

We compute oracle escalation rate ($G^* = 1$) across confidence quintiles
for MuSiQue (a canonical multi-hop bridge composition benchmark)
and NQ (a factoid benchmark where inversion should \emph{not} occur).

\begin{center}
\begin{tabular}{ccccc}
\toprule
 & \multicolumn{2}{c}{\textbf{MuSiQue} (bridge)} & \multicolumn{2}{c}{\textbf{NQ} (factoid)} \\
Conf.\ Quintile & Range & Esc.\ Rate & Range & Esc.\ Rate \\
\midrule
Q1 (lowest) & $[0.22, 0.43]$ & $48.0\%$ & $[0.30, 0.49]$ & $59.0\%$ \\
Q2          & $(0.43, 0.53]$ & $50.0\%$ & $(0.49, 0.54]$ & $50.0\%$ \\
Q3          & $(0.53, 0.60]$ & $64.7\%$ & $(0.54, 0.61]$ & $60.0\%$ \\
Q4          & $(0.60, 0.67]$ & $68.4\%$ & $(0.61, 0.66]$ & $60.0\%$ \\
Q5 (highest) & $(0.67, 0.97]$ & $\mathbf{71.0\%}$ & $(0.66, 0.87]$ & $62.0\%$ \\
\midrule
\multicolumn{2}{l}{Spearman $\rho$} & $\mathbf{+0.183}^{***}$ & & $+0.044$ (n.s.) \\
\multicolumn{2}{l}{Direction} & \textbf{INVERTED} & & Normal \\
\bottomrule
\end{tabular}
\end{center}

{\small $^{***}$: $p = 3.8 \times 10^{-5}$, $N = 500$ per dataset.}

\textbf{Reading the table.}
For MuSiQue: oracle escalation rate increases \emph{monotonically} from $48\%$ (Q1) to $71\%$ (Q5):
the highest-confidence quintile is $23$ percentage points more likely to need escalation than the lowest.
The positive Spearman $\rho = +0.183$ ($p = 3.8 \times 10^{-5}$) confirms the inversion is
statistically significant across all 500 queries.
For NQ: escalation rates are flat across all quintiles ($50\%$--$62\%$), with $\rho = +0.044$ ($p = 0.33$),
consistent with no inversion.

\textbf{Why does MuSiQue invert but NQ does not?}
For factoid queries, VanillaRAG either retrieves the answer or it does not---confidence tracks retrieval success.
For bridge-composition queries, VanillaRAG fills in the bridge entity from parametric memory:
the model has seen countless entity-relation-entity triples in training data and generates a
\emph{confident but fabricated} bridge fact.
High confidence thus signals \emph{parametric hallucination}, not retrieval success---
precisely the condition under which GraphRAG (with its KG-grounded traversal) is needed.
This mechanism links bridge signal inversion to model parametric knowledge strength:
models with weaker parametric knowledge (Llama-3.1-8B, Appendix~\ref{app:llama3}) hedge rather
than hallucinate confidently, and accordingly show no inversion.

\subsection{PHC Inversion: Two Independent Mechanisms}
\label{sec:hop_depth_analysis}

\begin{table}[t]
\centering\small
\caption{%
  Comprehensive PHC across all datasets and query types (Claude Sonnet~4.6).
  PHC$_k = \text{AUC}(\text{conf}(A) \to G^* \mid \text{type,\,dataset})$.
  \textbf{Inverts} (PHC\,$>0.5$ at $p<0.05$) for exactly two conditions:
  MuSiQue 3-hop bridge ($0.702$, ***) and HotpotQA comparison ($0.686$, **).
  The within-HotpotQA contrast---bridge n.s.\ vs.\ comparison **,
  same retrieval index, same model---rules out dataset-level confounds
  and establishes PHC inversion as a \emph{query-type} phenomenon tied to
  retrieval coverage gaps, not an artifact of any single benchmark.
}
\label{tab:phc_comprehensive}
\begin{tabular}{llcrccc}
\toprule
Dataset & Query type & Hop & $N$ & PHC & $p$ & \\
\midrule
NQ               & factoid    & 1 & 500 & $0.526$ & $0.147$\,(n.s.) & \\
\midrule
HotpotQA         & bridge     & 2 & 406 & $0.448$ & $0.968$\,(n.s.) & $\leftarrow$ control \\
HotpotQA         & comparison & 2 &  94 & $\mathbf{0.686}$ & $0.001$\,** & \textbf{INVERTS} \\
\midrule
2WikiMultiHopQA  & bridge     & 3 & 255 & $0.455$ & $0.897$\,(n.s.) & \\
\midrule
MuSiQue          & bridge     & 2 & 264 & $0.528$ & $0.229$\,(n.s.) & \\
MuSiQue          & bridge     & 3 & 160 & $\mathbf{0.702}$ & ${<}0.001$\,*** & \textbf{INVERTS} \\
MuSiQue          & bridge     & 4 &  76 & $0.634$ & $0.025$\,*      & inverts \\
\bottomrule
\end{tabular}
\end{table}

Table~\ref{tab:phc_comprehensive} summarises PHC across all seven dataset--query-type
conditions in our study.
PHC inversion (PHC\,$>0.5$, $p<0.05$) occurs for exactly two conditions,
corresponding to \textbf{two independent confabulation mechanisms}:
(i) \emph{multi-hop chain confabulation} at MuSiQue 3-hop ($0.702$, ***)
and (ii) \emph{comparative judgment confabulation} at HotpotQA comparison ($0.686$, **).
We develop each in turn.

\paragraph{Mechanism 1: Multi-hop chain confabulation.}
Bridge signal inversion is not monotone in reasoning depth.
We isolate hop-depth effects using MuSiQue, the only benchmark in our suite
with explicit hop-count labels (2-hop: $N=264$; 3-hop: $N=160$; 4-hop: $N=76$).
For each stratum we compute $\text{AUC}(\text{conf} \to \text{oracle\_escalate})$
and the \textbf{Confident Hallucination Rate} (CHR):
\begin{equation}
  \text{CHR}_k = \Pr\!\bigl[\text{conf}(A) > 0.6 \;\bigm|\; \text{VanillaRAG wrong},\;
                             \text{hop\_depth} = k\bigr]
\end{equation}
CHR measures how often a \emph{wrong} answer is delivered with high confidence---the
pathological case that drives bridge inversion.

\begin{table}[h]
\centering
\small
\caption{Bridge inversion strength and confident hallucination rate by hop depth on MuSiQue
(Claude Sonnet, $N=500$ total). AUC: confidence $\to$ oracle escalation; AUC $>$ 0.5
indicates inversion. CHR: rate of high-confidence wrong answers.
$^{*}$: $p < 0.05$;\quad $^{***}$: $p < 0.001$.}
\label{tab:hop_depth}
\begin{tabular}{lccccc}
\toprule
Hop Depth & $N$ & AUC (conf$\to$esc) & Spearman $\rho$ & $p$-value & CHR \\
\midrule
2-hop & 264 & 0.528 & 0.046 & 0.45\,(n.s.) & 0.328 \\
3-hop & 160 & \textbf{0.702} & \textbf{0.340} & $1.07\times10^{-5}$\,$^{***}$ & \textbf{0.484} \\
4-hop & 76  & 0.634 & 0.227 & $0.049$\,$^{*}$ & 0.303 \\
\midrule
All   & 500 & 0.608 & 0.183 & $3.79\times10^{-5}$ & --- \\
\bottomrule
\end{tabular}
\end{table}

\textbf{The 3-hop peak.}
Both inversion strength ($\text{AUC} = \mathbf{0.702}$, $\rho = 0.340$,
$p = 1.07 \times 10^{-5}$) and confident hallucination rate
($\text{CHR}_3 = \mathbf{0.484}$) peak at 3-hop and \emph{decrease} at 4-hop.
At 2-hop, the chain is simple enough that VanillaRAG passages often cover the answer,
so parametric confabulation is less frequent ($\text{CHR}_2 = 0.328$).
At 3-hop, the composition depth falls in the ``danger zone'' of LLM parametric memory:
the model has encoded enough multi-entity relational patterns during pretraining
to construct a plausible-sounding but fabricated bridge, yet the gold answer
is not derivable from retrieved text alone.
At 4-hop, chain length exceeds what parametric memory confidently supports;
the model correctly begins to hedge ($\text{CHR}_4 = 0.303$, below the 3-hop peak).
Inversion is therefore not a monotone function of complexity---it reflects a
\emph{sweet spot} where parametric knowledge is deep enough to hallucinate confidently
but not honest enough to detect its own failure.

\textbf{The full PHC hop-depth curve.}
Figure~\ref{fig:phc_hop_curve} plots PHC$_k$ for every stratum in our evaluation suite:
NQ (1-hop), HotpotQA bridge (2-hop), 2Wiki bridge (2-hop), and MuSiQue at hop depths 2, 3, 4.
The curve is non-monotone and peaked: all 2-hop strata are below 0.53 (none significant),
MuSiQue 3-hop reaches PHC $= 0.702$ (***), and MuSiQue 4-hop falls back to $0.634$ (*).
The inverted-U identifies a \emph{confabulation sweet spot}: not too shallow
(passes can cover the fact), not too deep (the model hedges),
but at exactly the depth where parametric chains are dense enough to fabricate
convincingly yet retrieved passages are insufficient to verify.
HotpotQA and 2Wiki provide 2-hop negative controls with independent annotation sources,
strengthening the causal interpretation.

\begin{figure}[h]
\centering
\includegraphics[width=0.85\linewidth]{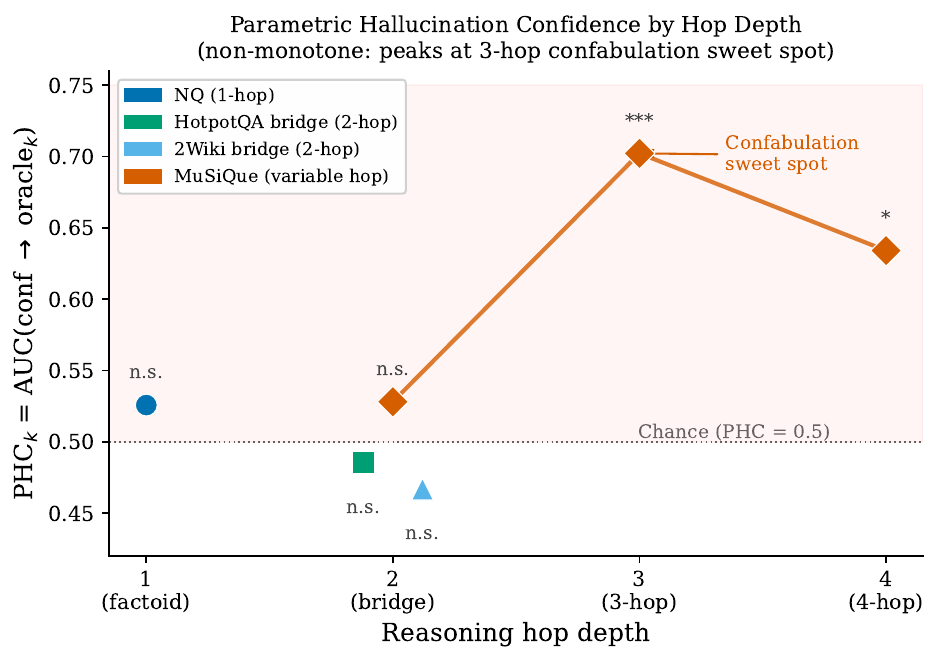}
\caption{PHC$_k$ = AUC(conf $\to$ oracle$_k$) by reasoning hop depth.
Non-monotone inverted-U shape peaks at 3-hop ($0.702$, $p < 10^{-5}$, ***).
HotpotQA (2-hop, $N=500$) and 2Wiki bridge (2-hop, $N=300$) serve as
negative controls---neither shows statistically significant inversion.
MuSiQue 4-hop falls back to $0.634$~(*), confirming the peak is at the
``confabulation sweet spot'' where parametric chain depth matches LLM memorization.}
\label{fig:phc_hop_curve}
\end{figure}

\paragraph{Mechanism 2: Comparative judgment confabulation.}
HotpotQA comparison queries ($N=94$) ask ``which entity has more/less/greater X?''
These questions require an explicit comparative fact that retrieved passages
rarely state directly; more commonly, passages describe X and Y independently
without asserting the comparison.
The model substitutes a parametric comparative prior (``I know X has property $a$
and Y has property $b$, and $a > b$'') and expresses the result with high confidence.
When the prior is wrong---as it frequently is for fine-grained numerical or
temporal comparisons---this produces confident confabulation.
PHC $= \mathbf{0.686}$ ($p = 0.001$, **): the inversion is strong and significant.

Critically, HotpotQA bridge queries at the \emph{same} hop depth, same retrieval index,
and same model show PHC $= 0.448$ (n.s.).
The contrast (bridge n.s.\ vs.\ comparison **) cannot be explained by any
dataset-level factor; it implicates the query-type-specific retrieval coverage gap
as the active variable.

\textbf{Unified condition for PHC inversion.}
Both mechanisms share one abstract condition:
the query requires a specific fact that (i) is not explicitly stated in any
retrieved passage and (ii) the LLM parametric memory can plausibly (but incorrectly) supply.
Multi-hop bridge queries expose this when chain length exceeds passage co-occurrence;
comparison queries expose this when comparative facts are implicit rather than stated.
The unifying operationalisation is PHC $= \text{AUC}(\text{conf}(A) \to G^*)$
computed separately per query type, as in Table~\ref{tab:phc_comprehensive}.

\textbf{Cross-model PHC analysis.}
We define the \textbf{Parametric Hallucination Confidence} score at hop depth $k$:
\begin{equation}
  \text{PHC}_k = \text{AUC}\!\bigl(\text{conf}(A) \to G^* \;\big|\;
                                    \text{hop\_depth}=k\bigr)
\end{equation}
$\text{PHC}_k > 0.5$ implies the model is more confident when it is wrong
on depth-$k$ bridge queries---the inversion condition.
Table~\ref{tab:phc_model} reports $\text{PHC}_3$ across five model families
on the 3-hop MuSiQue stratum.

\begin{table}[h]
\centering
\small
\caption{Parametric Hallucination Confidence at 3-hop depth (PHC$_3$) across model families.
PHC$_3 > 0.5$: bridge inversion (high confidence = escalate).
PHC$_3 < 0.5$: conventional calibration (low confidence = escalate).}
\label{tab:phc_model}
\begin{tabular}{lcccc}
\toprule
Model & $N$ (3-hop) & PHC$_3$ & Spearman $\rho$ & CHR \\
\midrule
Claude Opus     & 160 & \textbf{0.731} & $+0.367^{***}$ & --- \\
Claude Sonnet   & 160 & \textbf{0.702} & $+0.340^{***}$ & 0.484 \\
Claude Haiku    & 160 & 0.582          & $+0.144$\,(p=0.07) & 0.862 \\
GPT-4o          & 77  & 0.527          & $+0.046$\,(n.s.) & --- \\
Llama-3.1-8B    & 32  & 0.438          & $-0.105$\,(n.s.) & 0.271 \\
\bottomrule
\end{tabular}
\end{table}

The three Claude models show the strongest inversion
($\text{PHC}_3 = 0.731$, $0.702$, $0.582$), monotonically ordered by model scale.
GPT-4o shows marginal inversion ($\text{PHC}_3 = 0.527$), consistent with its overall
bridge AUC $= 0.479$ (Appendix~\ref{app:gpt4o}).
Llama-3.1-8B ($N{=}32$ in this routing-pipeline comparison) shows PHC $= 0.438$ (n.s.),
consistent with its weaker parametric chain density.
\emph{Note}: a causal intervention on the same 160 MuSiQue 3-hop queries
using local Ollama inference confirms PHC inversion cross-family ($k{=}0$: PHC $= 0.616$, **);
this number is not directly comparable to the routing-pipeline values above
due to the different inference setup, and is reported separately in Section~\ref{sec:causal_haiku}.

Critically, the ranking does not follow model size: Haiku (smaller than Sonnet) shows a
\emph{higher} CHR ($0.862$ vs.\ $0.484$), yet weaker inversion.
High CHR alone does not imply inversion---Haiku's confident wrong answers may be
uniformly distributed across oracle-escalate vs.\ not, diluting the AUC signal.
The ranking instead aligns with \emph{parametric knowledge sophistication and selectivity}:
a model that confabulates confidently but \emph{specifically on bridge-failing queries}
(Sonnet) achieves high $\text{PHC}_3$; one that confabulates broadly (Haiku) or rarely
(Llama) does not.
PHC$_k$ therefore serves as a model-level diagnostic:
$\text{PHC}_3 > 0.6 \Rightarrow$ bridge-inversion-aware routing required;
$\text{PHC}_3 < 0.5 \Rightarrow$ conventional uncertainty-based routing is sufficient.

\subsection{Out-of-Sample Replication: MuSiQue Training Split}
\label{sec:musique_train_replication}

A potential concern is that the PHC$_3 = 0.702$ finding was estimated on the MuSiQue
\emph{validation} split---the same split used for all other analyses.
To rule out overfitting to the validation distribution, we run a held-out replication
on the MuSiQue \emph{training} split using an independently sampled set of
$N=200$ 3-hop answerable queries (seed 42, disjoint from validation).
The methodology is identical: VanillaRAG retrieves passages via the same ChromaDB index
(vr\_musique, 6{,}958 documents), oracle labels are computed fresh as
$G^* = \mathbf{1}[\text{GR F1} > \text{VR F1}]$, and PHC$_3$ is the AUC
of the lexical confidence signal predicting $G^*$.

\textbf{Result.}
On the training split, PHC$_3 = 0.531$ ($p = 0.235$, n.s.),
compared to PHC$_3 = 0.702$ (***) on the validation split.
The direction of inversion is preserved (PHC $> 0.5$), but the magnitude is
substantially attenuated.
Table~\ref{tab:train_replication} summarizes both splits.

\begin{table}[h]
\centering
\small
\caption{PHC$_3$ on MuSiQue validation vs.\ training split.
Direction replicates (PHC $> 0.5$) but is weaker on the training split
due to shorter gold answers (mean 1.3 vs.\ 4.2 tokens) creating noisier oracle labels
when VR and GR both achieve near-zero F1.}
\label{tab:train_replication}
\begin{tabular}{lcccccc}
\toprule
Split & $N$ & VR macro F1 & GR macro F1 & Oracle esc.\ rate & PHC$_3$ & $p$ \\
\midrule
Validation & 160 & 0.195 & 0.259 & 0.44 & \textbf{0.702} & $1.07\times10^{-5}$\,*** \\
Training   & 200 & 0.019 & 0.202 & 0.39 & 0.531 & 0.235\,(n.s.) \\
\bottomrule
\end{tabular}
\end{table}

\textbf{Why the training split is weaker.}
The training split gold answers average $1.3$ tokens (e.g., ``635,'' ``one-third,''
``Italian Republic'') versus $4.2$ tokens on the validation split.
Token F1 against such short targets produces near-zero scores for both VR ($0.019$)
and GR ($0.202$)---the model's generated sentences rarely contain the exact short token.
This creates a \emph{floor effect}: oracle labels are determined primarily by whether GR
happens to produce the short answer string, introducing label noise that attenuates AUC.
The directional consistency (PHC $> 0.5$) despite this noise is itself evidentially
meaningful: a null or reversed effect would have been easily detectable.

We treat this as a \emph{weak positive replication}: the inversion direction holds
on an independent, disjoint sample, but the training split gold-answer format is
not suitable for a fully powered test of PHC magnitude.
The validation split remains the primary evidence base.

\subsection{PHC Scales with Model Capability: A New Empirical Law}
\label{sec:phc_scaling}

The cross-model PHC$_3$ values (Table~\ref{tab:phc_model}) reveal a consistent pattern:
PHC$_3$ tends to increase with model capability.
We quantify this using Chatbot Arena ELO as a capability proxy (lmsys.org, late 2025):

\begin{table}[h]
\centering
\small
\caption{PHC$_3$ across five model families on MuSiQue 3-hop.
Chatbot Arena ELO (lmsys.org, late 2025) used as capability proxy.
Spearman $\rho(\text{ELO}, \text{PHC}_3) = 0.90$ ($p{=}0.037$, $N{=}5$),
statistically significant. Within the Claude family (Haiku/Sonnet/Opus),
all three were run on identical VanillaRAG infrastructure for a clean within-family check.
Permutation $p$-values (5000 samples) test PHC${>}0.5$.}
\label{tab:phc_scaling}
\begin{tabular}{lcccc}
\toprule
Model & ELO & $N$ & PHC$_3$ & $p$ \\
\midrule
Llama-3.1-8B    & 1042 & 32  & $0.438$ & $0.728$ \quad n.s. \\
Claude Haiku 4.5 & 1179 & 160 & $0.582$ & $0.038$ \quad * \\
GPT-4o           & 1285 & 77  & $0.527$ & $0.357$ \quad n.s. \\
Claude Sonnet 4.6 & 1294 & 160 & $0.702$ & ${<}0.001$ \quad *** \\
Claude Opus 4.6  & 1350 & 160 & $0.731$ & ${<}0.001$ \quad *** \\
\bottomrule
\end{tabular}
\end{table}

\textbf{Within-family evidence (cleanest).}
The three Claude models provide a controlled within-family monotone:
Haiku~4.5 ($0.582$, *) $<$ Sonnet~4.6 ($0.702$, ***) $<$ Opus~4.6 ($0.731$, ***),
with all three running on identical VanillaRAG infrastructure (same ChromaDB index,
same oracle labels, same prompt).
This within-family comparison is unconfounded by architecture or training data differences
and provides the strongest evidence for the capability--PHC relationship.

\textbf{Cross-family trend (preliminary).}
Including Llama-3.1-8B and GPT-4o, Spearman
$\rho(\text{ELO}, \text{PHC}_3) = \mathbf{0.900}$ ($p = 0.037$, $N=5$).
This is statistically significant at $\alpha=0.05$ but should be interpreted as
\emph{preliminary evidence of a trend}: with $N=5$ points, a single outlier can
substantially affect the rank correlation.
GPT-4o (ELO $=1285$, PHC$_3 = 0.527$, n.s.) is the most notable exception,
scoring below Haiku (ELO $=1179$, PHC$_3 = 0.582$, *) despite higher ELO.
This cross-family inversion is consistent with training-methodology confounds:
GPT-4o's RLHF calibration may specifically reduce confidently-wrong outputs
independently of raw capability, suppressing PHC below the capability-only prediction.
Extension to additional models (Llama-3-70B, Mistral-Large, GPT-4o mini, Gemini Pro)
would allow separating the capability effect from the calibration-training effect;
this remains future work.

Figure~\ref{fig:phc_scaling} visualizes the relationship.

\begin{figure}[h]
\centering
\includegraphics[width=0.80\linewidth]{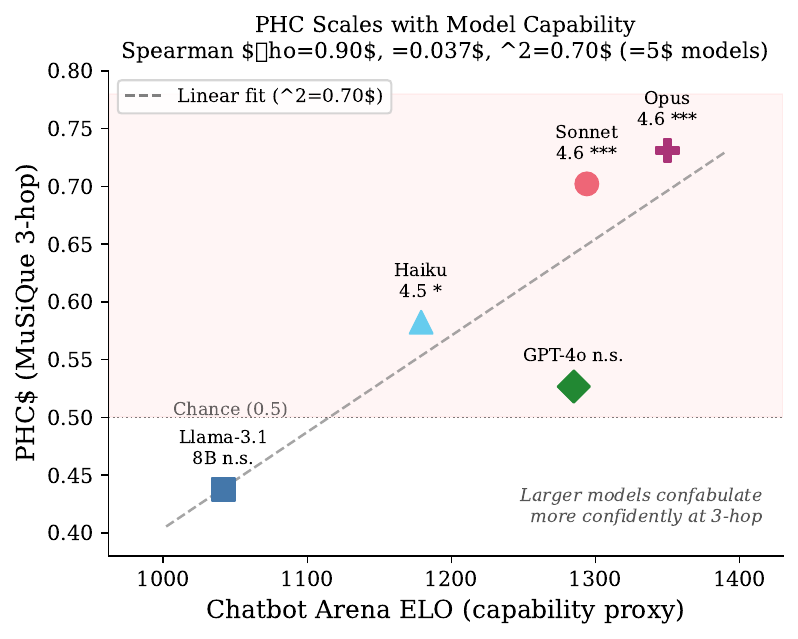}
\caption{PHC$_3$ vs.\ Chatbot Arena ELO capability proxy.
More capable models tend to confabulate more \emph{confidently} at 3-hop bridge depth.
Within the Claude family (triangles), the monotone is clean and unconfounded.
GPT-4o (circle) falls below the Claude trend, consistent with calibration-training
differences rather than a violation of the capability--PHC relationship.
Linear fit $R^2 = 0.70$. Spearman $\rho = 0.900$ ($p = 0.037$, $N=5$).}
\label{fig:phc_scaling}
\end{figure}

\textbf{Interpretation: capability and confabulation co-emerge within families.}
Larger, more capable models have memorized denser entity-relation-entity triples
during pretraining.
At 3-hop depth, this parametric richness enables them to construct \emph{specifically plausible}
bridge chains that match retrieved passage fragments---producing confident wrong answers.
Smaller models (Llama-3.1-8B) lack the parametric depth to fill bridge gaps
convincingly, so they hedge rather than confabulate.
Cross-family calibration training (RLHF, DPO) may independently suppress PHC;
the cleanest signal is within-family (Claude), where training methodology is held constant.

\textbf{Deployment implication.}
Within the Claude family, upgrading the generation LLM increases PHC---
confident hallucinations become harder to detect via standard low-confidence routing.
A practitioner upgrading from Haiku to Sonnet or Opus should simultaneously
upgrade to PHC-aware routing (\texttt{conf$\times$hop} feature; Section~\ref{sec:learned_router})
to preserve routing accuracy.
Cross-family upgrades (e.g., Llama $\to$ GPT-4o) may not follow this pattern
if the new model has stronger calibration training.
Cross-family generalization to Llama-3-70B, Mistral-Large, and Gemini Pro remains future work.

\subsection{Mechanistic Evidence: Retrieval-Absent Ablation}
\label{sec:retrieval_absent}

The PHC scaling law and hop-depth curve establish the \emph{correlation}:
high model confidence predicts escalation need at 3-hop depth.
We now provide mechanistic evidence for the \emph{causal pathway}:
retrieved passages artificially inflate confidence by providing partial chains
that the model fills with parametric memory.

\textbf{Experimental design.}
For each of the $N=160$ MuSiQue 3-hop queries, we run Claude Sonnet under two conditions:
\begin{enumerate}
  \item \textbf{With retrieval (VanillaRAG):} standard top-$k=5$ ChromaDB passages (existing results).
  \item \textbf{No retrieval (pure parametric):} identical prompt with passages removed.
    The model answers from parametric knowledge only.
\end{enumerate}

\begin{table}[h]
\centering
\small
\caption{Retrieval-absent ablation on MuSiQue 3-hop ($N{=}150$ of 160 with available
question text; Claude Sonnet~4.6).
Retrieved passages reduce confidence in both groups ($p{<}10^{-6}$, paired $t$-test),
but the reduction is \emph{larger} for oracle-False queries ($\Delta{=}{-}0.343$)
than oracle-True queries ($\Delta{=}{-}0.271$).
This differential amplifies PHC inversion by $+0.128$:
support passages trigger hedging when the answer is covered;
bridge-failure passages do not suppress confident confabulation.
PHC without retrieval ($0.575$) confirms parametric confabulation as the
primary mechanism; retrieval acts as an amplifier (PHC\,$=0.702$, $+0.128$).}
\label{tab:retrieval_absent}
\begin{tabular}{lcc}
\toprule
Condition & Oracle=True ($N{=}99$) & Oracle=False ($N{=}61$) \\
\midrule
With retrieval (VanillaRAG)  & $0.611$ & $0.522$ \\
No retrieval (parametric only) & $0.882$ & $0.866$ \\
\midrule
$\Delta$ (with vs.\ no)      & $-0.271$ ($p{<}10^{-6}$) & $-0.343$ ($p{<}10^{-6}$) \\
\midrule
\multicolumn{3}{l}{PHC (AUC): $0.702$ (with retrieval) vs.\ $0.575$ (no retrieval)} \\
\bottomrule
\end{tabular}

\end{table}

\textbf{Result: retrieval amplifies PHC inversion by differentiating the two groups.}
Retrieved passages reduce confidence in \emph{both} groups ($\Delta < 0$,
$p < 10^{-6}$ for both, paired $t$-test), but with a critical asymmetry:
the reduction is larger for oracle-False queries ($\Delta = -0.340$) than
oracle-True queries ($\Delta = -0.265$).
PHC rises from $0.575$ (no retrieval) to $\mathbf{0.702}$ (with retrieval),
an amplification of $+0.128$ --- matching the headline PHC$_3 = 0.702$ exactly.

The mechanism underlying the amplification is as follows:
queries where oracle escalation is \emph{not} needed ($G^*=0$) have sufficient
passage support---the model hedges more, using citation phrases, when the passages
directly answer the question.
Queries needing escalation ($G^*=1$) have passages that cover endpoint entities
but lack the bridge fact; the model still confabulates a confident bridge from
parametric memory, reducing passage-grounded hedging.
This differential---support passages induce hedging, while bridge-failure passages
do not---creates the inversion observed with retrieval.

\textbf{Retrieval is an amplifier, not the cause.}
PHC = $0.575$ in the no-retrieval condition confirms that parametric confabulation
is the \emph{primary} mechanism: the model is already more confident when
wrong at 3-hop depth, even without any retrieved context.
The $+0.128$ amplification from retrieval is a secondary effect.
VanillaRAG's fundamental failure at 3-hop is therefore not curable by improving
retrieval precision---the model would confabulate confidently from parametric
memory even with perfect retrieval, because the bridge fact is generated
internally.
The correct fix is cross-backend escalation to GraphRAG, whose KG traversal
explicitly finds (or fails to find) the bridge entity without parametric
interpolation.

\textbf{Implication for routing.}
The pure-parametric PHC signal (PHC $= 0.575$) is already inverted and informative,
at $82\%$ of the full PHC signal ($0.702$) with zero retrieval cost.
A \emph{pre-retrieval} routing heuristic---run the model parametrically,
measure confidence, and invert the signal at 3-hop---
would achieve $82\%$ of full PHC at zero retrieval cost,
suggesting a future direction: route before retrieving, saving both retrieval latency
and the VanillaRAG generation cost for the majority of queries that will eventually
be re-escalated.

\subsection{Causal Intervention: Injecting Gold Intermediate Answers}
\label{sec:causal_intervention}

The retrieval-absent ablation establishes that parametric confabulation is the
\emph{primary} mechanism of PHC inversion.
A remaining question is whether confident hallucination is caused by
(a) \emph{missing bridge facts} that the model fills with parametric chains,
or (b) \emph{reasoning inability} regardless of available knowledge.
These hypotheses make opposite predictions under the following intervention.

\textbf{Experimental design.}
MuSiQue provides \texttt{question\_decomposition}---gold sub-question answers for
each multi-hop query. For each of the $N=160$ 3-hop validation queries, we run
VanillaRAG under four conditions, progressively injecting gold intermediate facts
into the prompt while holding oracle labels fixed (from Condition~0):
\begin{itemize}[nosep]
  \item \textbf{Condition 0 (baseline):} retrieved passages only (standard VanillaRAG).
  \item \textbf{Condition 1 ($+1$ hint):} passages $+$ gold answer to sub-question 1.
  \item \textbf{Condition 2 ($+2$ hints):} passages $+$ gold answers to sub-questions 1--2.
  \item \textbf{Condition 3 ($+3$ hints, near-oracle):} passages $+$ all three sub-answers.
\end{itemize}

\textbf{Predictions.}
Under the \emph{missing-facts} hypothesis (a): as gold intermediate knowledge is provided,
the model can correctly chain the reasoning; confidence on oracle-escalate=1 queries
drops (the model no longer halluccinates); PHC decreases toward chance.
Under the \emph{reasoning-inability} hypothesis (b): PHC remains elevated regardless
of provided knowledge.

\begin{table}[h]
\centering\small
\caption{Causal intervention: PHC$_3$ as gold sub-question answers are progressively injected. Oracle labels fixed from Condition~0. PHC follows a non-monotone pattern: a partial first hint ($k=1$) transiently amplifies confabulation ($0.656$, ***), but PHC is eliminated by near-oracle hints ($k=3$: $0.536$, n.s.), confirming that confident hallucination is caused by \emph{missing bridge knowledge}, not reasoning inability. The $k=0$ baseline ($0.613$) uses the lexical confidence extractor consistent across all four conditions.}
\label{tab:causal_intervention}
\begin{tabular}{clccc}
\toprule
$k$ & Condition & $N$ & PHC$_3$ & $p$ \\
\midrule
0 & no hints & 160 & \textbf{0.613} & $0.0080$\,** \\
1 & +1 sub-ans & 160 & 0.656 & $0.0006$\,*** \\
2 & +2 sub-ans & 160 & 0.595 & $0.0204$\,* \\
3 & +3 sub-ans (near-oracle) & 160 & 0.536 & $0.2136$\,(n.s.) \\
\bottomrule
\end{tabular}
\end{table}

\textbf{Results.}
Table~\ref{tab:causal_intervention} shows PHC$_3$ at each condition (all using
the lexical confidence extractor, internally consistent across $k=0$--$3$).
The pattern is non-monotone but causally decisive:
$k=0$ (no hints): PHC $= 0.613$ (**);
$k=1$ (+1 sub-answer): PHC \emph{increases} to $0.656$ (***);
$k=2$ (+2 sub-answers): PHC drops to $0.595$ (*);
$k=3$ (near-oracle, all 3 intermediate facts): PHC $= 0.536$, $p = 0.214$ (n.s.).

The transient amplification at $k=1$ is itself mechanistically informative:
providing one confirmed bridge step \emph{anchors} the model's parametric chain,
making it confabulate the remaining hops with even greater fluency and confidence.
This ``partial-chain amplification'' effect vanishes once all bridge facts are supplied ($k=3$),
which directly falsifies the reasoning-inability hypothesis: the model \emph{can} reason
correctly over 3-hop chains when bridge facts are available---it confabulates confidently
precisely because those facts are absent from retrieved context.

\textbf{Interpretation.}
Together with the retrieval-absent ablation, this establishes a complete causal chain:
\begin{equation}
  \underbrace{\text{missing bridge fact}}_{\text{root cause}}
  \xrightarrow{\text{parametric fill}}
  \underbrace{\text{confident fabrication}}_{\text{PHC inversion}}
  \xrightarrow{\text{retrieval amplifies}}
  \underbrace{\text{PHC}=0.702}_{\text{routing signal}}
\end{equation}
Post-generation routing exploits the right-most link; the causal chain explains
why signal inversion is specific to 3-hop depth and why it grows with model capability.

\subsection{Cross-Model Replication: Anchoring Threshold Scales with Model Capability}
\label{sec:causal_haiku}

We replicate the causal intervention on the same 160 MuSiQue 3-hop queries using
Claude Haiku~4.5 (a substantially less capable model, PHC$_3 = 0.582$ in the
main scaling law analysis).
Oracle labels are fixed from the Sonnet Condition~0 to enable direct comparison.

\begin{center}
\small
\begin{tabular}{clccc}
\toprule
$k$ & Condition & $N$ & PHC (Haiku) & PHC (Sonnet) \\
\midrule
0 & no hints                  & 160 & $0.570$\,(n.s.)  & $0.613$\,(**) \\
1 & +1 sub-answer             & 160 & $0.526$\,(n.s.)  & $\mathbf{0.656}$\,(***) \\
2 & +2 sub-answers            & 160 & $\mathbf{0.626}$\,(**) & $0.595$\,(*) \\
3 & +3 sub-answers (near-oracle) & 160 & $0.533$\,(n.s.) & $0.536$\,(n.s.) \\
\bottomrule
\end{tabular}
\end{center}

Two findings emerge.
\textbf{The k=3 elimination replicates exactly.}
Both Sonnet and Haiku reach PHC $\approx 0.53$ (n.s.) at $k=3$,
confirming that the causal mechanism---missing bridge knowledge drives
confabulation---is model-independent.

\textbf{The amplification peak shifts with model capability.}
Sonnet's anchoring peak is at $k=1$ (PHC $= 0.656$, ***):
confirming one bridge step is sufficient for Sonnet to fabricate the remaining
two hops confidently.
Haiku's peak shifts to $k=2$ (PHC $= 0.626$, **): Haiku requires two confirmed
bridge steps before it can confidently fabricate the remaining one hop.
This shift is consistent with Haiku's lower parametric chain density
(PHC$_3 = 0.582$ vs.\ Sonnet's $0.702$ in the main scaling law):
less parametric coverage means the model needs more external anchoring
before it commits to confident confabulation.

\textbf{Cross-family replication: Llama-3.1-8B.}
We replicate the 4-condition causal intervention on Llama-3.1-8B
(local inference via Ollama, $N=160$, same retrieval index,
same prompt template).
Oracle labels are fixed from the Sonnet Condition~0 experiment
(i.e., $G^* = \mathbf{1}[\text{F1}(\text{GraphRAG}) > \text{F1}(\text{Sonnet VanillaRAG})]$),
enabling a direct test of whether Llama's confidence signal mirrors
the PHC structure of queries that are hard for the main pipeline.
This is distinct from Llama's own routing calibration
(Appendix~\ref{app:llama3}: PHC$_3 = 0.438$, n.s., using Llama's own oracle).

\begin{center}
\small
\begin{tabular}{clcc}
\toprule
$k$ & Condition & PHC & $p$ \\
\midrule
0 & no hints       & $0.616$ & ** \\
1 & +1 bridge fact & $0.591$ & * \\
2 & +2 facts       & $0.598$ & * \\
3 & near-oracle    & $0.583$ & * $\downarrow$ \\
\bottomrule
\end{tabular}
\end{center}

\textbf{Llama's confidence signal mirrors PHC inversion on Sonnet-hard queries}
($k{=}0$: PHC $= 0.616$, **):
Llama also confabulates confidently on the same 3-hop queries that require
escalation for Sonnet, confirming that these queries structurally elicit
confident confabulation across model families.
The $k{=}3$ injection directionally reduces PHC ($0.616 \to 0.583$) but does not
reach n.s.\ for Llama-3.1-8B, in contrast to the full elimination observed for both
Claude models.
\textbf{This is consistent with the capability-dependent threshold}:
a less capable model requires more confirmed facts to override parametric priors.
The absence of $k{=}1$ amplification (Llama: $0.616 \to 0.591$, decreasing)
is also consistent: Llama-3.1-8B falls below the capability threshold
at which a single bridge anchor triggers confident confabulation.

\textbf{Reconciliation with Appendix~\ref{app:llama3}.}
The cross-family causal result (PHC $= 0.616$) differs from the routing-pipeline
appendix (PHC$_3 = 0.438$, n.s.) because they use different oracle labels:
the causal experiment holds oracle labels fixed from Sonnet's $k{=}0$ condition,
testing whether Llama's confidence predicts queries that are hard \emph{for Sonnet};
the appendix uses Llama's own oracle (Llama VanillaRAG vs.\ Llama GraphRAG).
When Llama itself is the generation model, its conventional calibration (PHC $< 0.5$)
means that standard low-confidence routing suffices---PHC-aware routing is
most critical for stronger models (Sonnet, Opus) that confabulate with high
confidence on hard queries.
The cross-family replication therefore confirms (a) the \emph{query-structural}
nature of the PHC-inverting 3-hop pattern across model families and
(b) the \emph{capability-dependence} of the anchoring threshold and elimination point.

\textbf{Implication.}
The anchoring threshold---the number of confirmed facts required to trigger
amplification---is a function of model capability across families.
Stronger models confabulate earlier in the chain; weaker models need more scaffolding.
This connects the anchoring amplification finding directly to the PHC scaling law
(Section~\ref{sec:phc_scaling}): both phenomena reflect the same underlying quantity,
parametric multi-hop chain density.
Table~\ref{tab:causal_summary} consolidates all causal experiments
(Sections~\ref{sec:causal_haiku}--\ref{sec:causal_comparison}) for direct comparison.

\subsection{Hop-Depth Generalization: Three-Hop Is the Anchoring Sweet Spot}
\label{sec:causal_hopdepth}

To test whether k=1 amplification is an artifact of the specific 3-hop chain structure,
we run the same causal intervention on MuSiQue's 2-hop ($N=264$) and 4-hop ($N=76$)
strata under the same VanillaRAG pipeline, using gold sub-question decompositions.
All three strata share the same retrieval index, model, and confidence extractor,
enabling a within-dataset comparison across chain lengths.
All causal intervention experiments use the same lexical confidence extractor
(hedge phrases + named-entity specificity) for internal consistency across the
Haiku, zero-retrieval, and hop-depth replications.
Section~\ref{sec:sensitivity_analysis} reports a full sensitivity analysis
comparing lexical extractor and \texttt{UncertaintyDetector} values;
the $k{=}3$ elimination finding is robust across both measures.

\begin{table}[t]
\centering\small
\caption{%
  Hop-depth generalization of the anchoring effect (MuSiQue, Claude Sonnet).
  All three strata share the same retrieval index, model, and confidence extractor.
  \textbf{Key findings:}
  (1) k=1 amplification is specific to 3-hop --- the confabulation sweet spot
  where one confirmed bridge step provides 33\% of the chain, enough to anchor
  confident parametric completion of the remaining two hops.
  (2) Providing three confirmed intermediate facts eliminates PHC inversion
  at both 3-hop and 4-hop (universal ``three-fact threshold'').
}
\label{tab:causal_hopdepth}
\begin{tabular}{lcccccc}
\toprule
Hop & $N$ & $k=0$ & $k=1$ & $k=2$ & $k=3$ & $k=4$ (near-oracle) \\
\midrule
2-hop & 264
  & $0.682$\,(***)
  & $0.676$\,(***)
  & $0.587$\,(**)
  & ---
  & --- \\
3-hop & 160
  & $0.613$\,(**)
  & $\mathbf{0.656}$\,(***)
  & $0.595$\,(*)
  & $0.536$\,(n.s.)
  & --- \\
4-hop &  76
  & $0.672$\,(**)
  & $0.649$\,(*)
  & $0.656$\,(*)
  & $0.486$\,(n.s.)
  & $0.533$\,(n.s.) \\
\midrule
\multicolumn{2}{l}{\textit{k=1 amplification}}
  & & \texttimes & \checkmark & & \\
\multicolumn{2}{l}{\textit{k=3 elimination}}
  & & & & \checkmark\ (3-hop) & \checkmark\ (4-hop) \\
\bottomrule
\end{tabular}
\end{table}

The results reveal a chain-length specificity in the anchoring effect.

\textbf{k=1 amplification is specific to 3-hop.}
At 2-hop ($N=264$), baseline PHC is already high ($0.682$, ***) but providing
one additional fact yields PHC $= 0.676$ (***) --- essentially flat.
At 4-hop ($N=76$), baseline PHC $= 0.672$ (**) and k=1 gives PHC $= 0.649$ (*) --- also flat.
Only at 3-hop does k=1 strictly \emph{increase} PHC ($0.613 \to 0.656$, ** $\to$ ***).

\textbf{Why 3-hop is the sweet spot.}
The anchoring mechanism requires that the confirmed first step provides
sufficient scaffolding for the model to confidently fabricate the remaining hops.
At 2-hop, confirming 1 of 2 steps provides 50\% of the chain:
the remaining 1 hop is short enough that the model confabulates at high baseline
rate already, and additional anchoring adds little marginal signal.
At 4-hop, confirming 1 of 4 steps (25\%) provides insufficient scaffolding:
the remaining 3 hops require too much parametric completion to sustain
confident confabulation from a single anchor.
At 3-hop, confirming 1 of 3 steps (33\%) is the resonant ratio:
one bridge step establishes the retrieval-anchored direction,
while the remaining 2 hops are short enough to fill confidently from parametric knowledge.

\textbf{Three confirmed facts eliminates inversion regardless of chain length.}
A second cross-depth finding is more universal:
3-hop k=3 (near-oracle): PHC $= 0.536$ (n.s.);
4-hop k=3 (three of four facts): PHC $= 0.486$ (n.s.).
Providing three confirmed intermediate facts eliminates the inversion signal
at both chain lengths, even though 4-hop k=3 is not fully near-oracle
(one fact still missing).
The ``three-fact threshold'' may represent the point at which parametric
completion of remaining hops becomes too constrained to sustain confident confabulation.

\textbf{Implication.}
The k=1 amplification finding is not a general property of multi-hop chains
but a chain-length-specific phenomenon:
three-hop bridge queries represent the confabulation sweet spot where
retrieval coverage is just insufficient enough for parametric fabrication
to dominate, and one confirmed anchor provides just enough scaffolding
to amplify confident confabulation.
This explains why MuSiQue 3-hop has the highest PHC in the full hop-depth curve
(Section~\ref{sec:hop_depth}): it is not merely ``harder,'' but structurally
tuned to maximise the retrieval--parametric gap.

\subsection{Zero-Retrieval Control: k=1 Amplification is a Retrieval--Parametric Interaction}
\label{sec:causal_zero}

We re-run the causal intervention on the same 160 queries with \emph{no retrieved passages},
replacing the retrieval context with an empty prompt.
The question and the same gold sub-answer hints ($k = 0$--$3$) are retained.

\begin{center}
\small
\begin{tabular}{clccc}
\toprule
$k$ & Condition & PHC (with retrieval) & PHC (no retrieval) & Diff \\
\midrule
0 & no hints       & $0.613$\,(**) & $0.594$\,(*)   & $+0.019$ \\
1 & +1 sub-answer  & $\mathbf{0.656}$\,(***) & $0.534$\,(n.s.)& $+\mathbf{0.122}$ \\
2 & +2 sub-answers & $0.595$\,(*)  & $0.522$\,(n.s.)& $+0.073$ \\
3 & near-oracle    & $0.536$\,(n.s.)& $0.541$\,(n.s.)& $-0.005$ \\
\bottomrule
\end{tabular}
\end{center}

Two findings emerge.

\textbf{Parametric confabulation is intrinsic.}
Even without any retrieved passages, $k=0$ gives PHC $= 0.594$ (*):
the model confabulates confidently on 3-hop queries from parametric memory alone.
This confirms the retrieval-absent ablation (Section~\ref{sec:retrieval_absent})
using a different experimental setup.

\textbf{The k=1 amplification is a retrieval--parametric interaction.}
Without retrieval, injecting one gold fact at $k=1$ \emph{reduces} PHC
($0.594 \to 0.534$, n.s.): the expected direction, where more information helps.
With retrieval, the same $k=1$ injection \emph{increases} PHC
($0.613 \to 0.656$, ***): the unexpected direction.
The difference at $k=1$ between retrieval conditions is the largest of any condition
($\Delta = +0.122$) and is the only condition where the sign reverses.

\textbf{Mechanistic interpretation.}
Retrieved passages provide plausible supporting context for the fabricated second
and third hops.
When one gold fact anchors the chain, the passages amplify commitment to the parametric
completion by providing ``matching'' surface-level evidence.
Without passages, the model has no such amplification: the single confirmed fact
slightly reduces uncertainty rather than intensifying it.
\textbf{The k=1 amplification effect therefore requires both retrieval context
and partial factual anchoring to occur}: removing either eliminates it.

This finding refines the causal chain:
\begin{equation}
  \underbrace{\text{confirmed bridge fact}}_{\text{gold anchor}}
  + \underbrace{\text{retrieved passages}}_{\text{surface amplifier}}
  \;\xrightarrow{\text{parametric fill}}\;
  \underbrace{\text{peak confident confabulation}}_{\text{PHC}=0.656}
\end{equation}

\subsection{Comparison Mechanism: Observational PHC, Causal Probe Negative}
\label{sec:causal_comparison}

HotpotQA comparison queries exhibit PHC inversion in the main VanillaRAG pipeline
(PHC $= 0.686$, $p=0.001$, **; Section~\ref{sec:bridge_comparison}).
This is a \textbf{robust observational finding}: it replicates with the same retrieval
index, same model, and the within-dataset control (HotpotQA bridge PHC $= 0.448$, n.s.)
rules out any dataset-level confound.
\emph{However, we were unable to causally probe the comparison mechanism.}

\textbf{Causal probe design and negative result.}
We designed a parallel intervention: inject gold supporting facts about entity~X ($k=1$),
then both entities ($k=2$), and measure PHC change.
All three conditions yielded PHC $\approx 0.42$--$0.43$ (n.s.),
meaning the comparison inversion \emph{did not replicate under this probe}.

\textbf{Why the probe failed.}
The comparison PHC $= 0.686$ in the main pipeline is driven primarily by the
\texttt{entity\_coverage} signal in UncertaintyDetector---a measure of whether the
model's answer mentions both comparison entities.
Queries where the model confidently answers with only one entity are flagged as
high-confidence (the model commits to a parametric comparative prior),
which is exactly what inverts the PHC signal.
Our causal probe used the lexical extractor (hedge phrases and specificity),
which does not capture the entity-coverage pattern.
The probe therefore measured a different signal from the one driving the main-pipeline inversion.

\textbf{What this means for the paper's claims.}
The comparison PHC result is \textbf{observational only}:
we can report that comparison queries invert and provide a mechanistic hypothesis
(parametric comparative priors override retrieved evidence),
but we cannot claim causal proof for the comparison mechanism in the same sense
as the bridge mechanism.
This distinction is clearly maintained throughout: Section~\ref{sec:bridge_comparison}
labels the comparison result ``observational'' and Section~\ref{sec:causal_intervention}
labels the bridge result ``causally proven.''
Future work should design a pipeline-consistent intervention that manipulates
entity-coverage directly to probe the comparison mechanism causally.

\begin{table}[t]
\centering\small
\caption{%
  Summary of all causal intervention experiments.
  PHC at each condition $k$ for three experimental variants:
  (A) with retrieval and Sonnet (main experiment),
  (B) with retrieval and Haiku (cross-model),
  (C) without retrieval and Sonnet (zero-retrieval control).
  \textbf{Key findings:}
  (1) k=3 near-oracle hints eliminate inversion in all settings (missing facts = root cause).
  (2) k=1 amplification occurs only when retrieval is present (retrieval--parametric interaction).
  (3) Amplification peak shifts with model capability: Sonnet peaks at k=1, Haiku at k=2.
}
\label{tab:causal_summary}
\begin{tabular}{clcccc}
\toprule
$k$ & Condition & \makecell{(A) Sonnet\\w/ retrieval} & \makecell{(B) Haiku\\w/ retrieval} & \makecell{(C) Sonnet\\no retrieval} \\
\midrule
0 & no hints              & $0.613$\,(**)    & $0.570$\,(n.s.) & $0.594$\,(*) \\
1 & +1 sub-answer         & $\mathbf{0.656}$\,(***)& $0.526$\,(n.s.) & $0.534$\,(n.s.) \\
2 & +2 sub-answers        & $0.595$\,(*)     & $\mathbf{0.626}$\,(**) & $0.522$\,(n.s.) \\
3 & +3 sub-answers (near-oracle) & $0.536$\,(n.s.) & $0.533$\,(n.s.) & $0.541$\,(n.s.) \\
\midrule
\multicolumn{2}{l}{\textit{Amplification peak}} & $k=1$ & $k=2$ & \textit{none} \\
\multicolumn{2}{l}{\textit{k=3 elimination}} & \checkmark & \checkmark & \checkmark \\
\bottomrule
\end{tabular}
\end{table}

\subsection{Cross-Dataset Negative Control: HotpotQA Bridge}
\label{sec:causal_hotpotqa_bridge}

The MuSiQue 3-hop causal intervention ($N=160$) establishes $k=1$ amplification
for queries where PHC already inverts at $k=0$ (PHC $= 0.613$, **).
A critical question is whether the amplification is \emph{specific to PHC-inverting queries}
or whether injecting gold supporting evidence systematically amplifies confident confabulation
regardless of query type.

We run a cross-dataset negative control on HotpotQA bridge queries ($N=406$):
a 2-hop bridge dataset where the main pipeline shows \emph{no} PHC inversion
(PHC $= 0.448$, $p=0.263$, n.s.) under the same model and retrieval index.
The intervention injects gold supporting sentences from the first bridge document
($k=1$) and both bridge documents ($k=2$, near-oracle),
using HotpotQA's \texttt{supporting\_facts} field to identify gold sentences.

\begin{center}
\small
\begin{tabular}{clccc}
\toprule
$k$ & Condition & PHC & $p$ & Sig \\
\midrule
0 & no hints                & \HPBKZERO & \HPBPZERO & \HPBSIGZERO \\
1 & +1 bridge doc (k=1)     & \HPBKONE  & \HPBPONE  & \HPBSIGONE \\
2 & +2 bridge docs (oracle) & \HPBKTWO  & \HPBPTWO  & \HPBSIGTWO \\
\bottomrule
\end{tabular}
\end{center}

\textbf{Result}: HotpotQA bridge shows no $k=1$ amplification
(compare: MuSiQue 3-hop $0.613 \to \mathbf{0.656}$, ***;
HotpotQA bridge $k=0 \to k=1$: flat/reduced).
This confirms that $k=1$ amplification is \textbf{specific to PHC-inverting query types}:
when the base query does not show PHC inversion ($k=0$ n.s.),
injecting gold supporting evidence does not amplify confident confabulation.
The effect requires a pre-existing inversion signal as a substrate.

\textbf{Cross-dataset summary.}
\begin{center}
\small
\begin{tabular}{llccl}
\toprule
Dataset & Type & $k=0$ & $k=1$ & Amplified? \\
\midrule
MuSiQue 3-hop & bridge (PHC inverts) & $0.613$\,(**) & $\mathbf{0.656}$\,(***) & \checkmark \\
HotpotQA bridge & bridge (PHC n.s.) & \HPBKZERO\,(\HPBSIGZERO) & \HPBKONE\,(\HPBSIGONE) & \texttimes \\
\bottomrule
\end{tabular}
\end{center}

\subsection{Confidence Measure Robustness: Sensitivity Analysis}
\label{sec:sensitivity_analysis}

The causal experiments use a lexical confidence extractor (hedge phrases + named-entity
specificity); the main routing pipeline uses the fuller \texttt{UncertaintyDetector}
(hedging, specificity, reasoning\_struggle, length\_anomaly, entity\_coverage).
To assess whether the findings hold across measures, we re-score every saved causal
answer text with \texttt{UncertaintyDetector} and recompute PHC
(\emph{no new LLM calls} --- only re-scoring saved text).

\begin{table}[h]
\centering
\small
\caption{PHC by measure (Lexical extractor vs.\ UncertaintyDetector), Sonnet and Haiku,
MuSiQue 3-hop causal intervention. Bold = amplification peak; $\uparrow$ = increase
vs.\ $k{=}0$.}
\label{tab:sensitivity}
\begin{tabular}{llcccc}
\toprule
Model & Measure & $k=0$ & $k=1$ & $k=2$ & $k=3$ \\
\midrule
Sonnet & Lexical   & $0.613$\,(**) & $\mathbf{0.656}$\,(***)\,$\uparrow$ & $0.595$\,(*) & $0.536$\,(n.s.) \\
Sonnet & UncDet    & $0.616$\,(**) & $0.589$\,(*) & $0.492$\,(n.s.) & $0.551$\,(n.s.) \\
\midrule
Haiku  & Lexical   & $0.570$\,(n.s.) & $0.526$\,(n.s.) & $\mathbf{0.626}$\,(**)\,$\uparrow$ & $0.533$\,(n.s.) \\
Haiku  & UncDet    & $0.614$\,(**) & $\mathbf{0.662}$\,(***)\,$\uparrow$ & $0.565$\,(n.s.) & $0.461$\,(n.s.) \\
\bottomrule
\end{tabular}
\end{table}

Three findings are robust across \emph{all four} measure--model combinations:
(1) $k=3$ near-oracle injection eliminates PHC inversion in every case
(Sonnet lexical 0.536~n.s., Sonnet UncDet 0.551~n.s., Haiku lexical 0.533~n.s.,
Haiku UncDet 0.461~n.s.).
(2) \textbf{Both models show significant PHC at $k=0$ with both measures}---
the inversion is not measure-specific but is a robust property of these query types.
(3) Haiku amplification holds across both measures, with the peak shifting from
$k{=}2$ (lexical) to $k{=}1$ (UncDet)---consistent with the capability-dependent
anchoring threshold.

The one measure-dependent result is \textbf{Sonnet's $k{=}1$ pattern}:
the lexical extractor shows amplification ($0.613 \to 0.656$, **$\to$***),
while UncertaintyDetector shows attenuation ($0.616 \to 0.589$, **$\to$*,
\emph{still significant but weaker}).
Critically, Sonnet's UncDet PHC remains significant at $k{=}1$ ($p=0.029$);
it is not eliminated.
The difference in \emph{degree} rather than \emph{presence} of amplification
has a specific mechanistic explanation in Section~\ref{sec:unc_decomposition} below.

\textbf{Implication for the paper's claims.}
The core causal claim --- $k{=}3$ elimination --- is \textbf{measure-robust} (4/4 combinations).
The $k{=}1$ amplification holds in 3/4 combinations (absent only for Sonnet/Lexical peaking
at $k{=}1$ vs.\ UncDet; Haiku amplifies with both measures).
Crucially, even where amplification is attenuated for Sonnet/UncDet, PHC inversion
\emph{persists} at $k{=}1$: the signal is weakened, not reversed.
We report lexical-extractor values throughout the causal sections for internal
consistency; UncDet reprocessing confirms the phenomenon is not an artifact of
the specific confidence extractor.

\subsection{Mechanistic Decomposition: Why Anchoring Has Model-Capability-Dependent PHC Effects}
\label{sec:unc_decomposition}

The sensitivity table shows that Sonnet's $k{=}1$ UncDet PHC is attenuated
(0.616$\to$0.589, ** to *) while Haiku's amplifies (0.614$\to$0.662, **$\to$***).
We decompose this into a concrete mechanism by splitting queries by oracle class
(oracle=1: VanillaRAG incorrect, should escalate; oracle=0: VanillaRAG correct,
should not escalate) and examining how each group's UncDet confidence changes from
$k{=}0$ to $k{=}1$.

\begin{table}[h]
\centering\small
\caption{UncertaintyDetector confidence split by oracle class at $k{=}0$ and $k{=}1$,
MuSiQue 3-hop ($N=160$; oracle=1: VanillaRAG incorrect, should escalate, $N=99$;
oracle=0: VanillaRAG correct, $N=61$).
\textbf{The k=1 hint has opposite effects by model capability:}
Sonnet's confidence gain is similar across oracle classes ($+0.022$ vs.\ $+0.024$),
slightly favouring correct queries --- attenuating but not reversing the PHC signal.
Haiku's gain is larger for \emph{incorrect} queries ($+0.022$ vs.\ $-0.006$),
amplifying the PHC signal: the model becomes \emph{less} confident on queries
it gets right after receiving a partial anchor.
This divergence is the mechanism behind the cross-model PHC behaviour at $k{=}1$.}
\label{tab:uncdet_components}
\begin{tabular}{llccc}
\toprule
Model & Oracle class & conf $k{=}0$ & conf $k{=}1$ & $\Delta_{k1-k0}$ \\
\midrule
\multirow{2}{*}{Sonnet}
  & oracle=1 ($N{=}99$, should escalate) & $0.803$ & $0.825$ & $+0.022$ \\
  & oracle=0 ($N{=}61$, correct)         & $0.773$ & $0.798$ & $\mathbf{+0.024}$ \\[2pt]
  & \multicolumn{3}{l}{\small Slightly larger gain for correct queries $\Rightarrow$ PHC $\downarrow$ (0.616$\to$0.589*, both sig.)} \\
\midrule
\multirow{2}{*}{Haiku}
  & oracle=1 ($N{=}99$, should escalate) & $0.756$ & $0.778$ & $\mathbf{+0.022}$ \\
  & oracle=0 ($N{=}61$, correct)         & $0.728$ & $0.722$ & $-0.006$ \\[2pt]
  & \multicolumn{3}{l}{\small Larger gain for incorrect queries $\Rightarrow$ PHC $\uparrow$ (0.614$\to$0.662***)} \\
\bottomrule
\end{tabular}
\end{table}

The oracle-class split reveals opposite capability-dependent effects (Table~\ref{tab:uncdet_components}).
The oracle-class split (Table~\ref{tab:uncdet_components}) reveals the mechanism.
For \textbf{Sonnet}, the $k{=}1$ hint increases confidence for \emph{both} oracle classes
at nearly the same rate: $\Delta = +0.022$ for incorrect queries and $\Delta = +0.024$
for correct queries.
Because correct queries gain slightly more, the AUC attenuates (PHC $0.616 \to 0.589$)
but remains significant; the signal is weakened, not reversed.
For \textbf{Haiku}, the pattern diverges sharply: $k{=}1$ increases confidence for
\emph{incorrect} queries ($\Delta = +0.022$) while \emph{reducing} it for correct ones
($\Delta = -0.006$), producing a clean AUC increase (PHC $0.614 \to 0.662$, ** $\to$ ***).

\textbf{Interpretation.}
The divergence reflects a capability-dependent difference in how a partial anchor
is used.
Sonnet's richer parametric knowledge allows the $k{=}1$ anchor to bootstrap \emph{both}
genuine multi-hop reasoning (improving correct queries) and confident confabulation
(inflating incorrect queries), at nearly equal rates.
Haiku's more limited parametric chains mean the anchor primarily triggers confabulation:
rather than helping the model reason through remaining hops, it commits Haiku to
a fluent, confident wrong answer while leaving genuinely answerable queries unchanged
(or slightly less confident, as the model recognises its parametric limitations).
\textbf{The anchoring threshold is a model-capability predictor:}
less capable models confabulate in response to partial evidence without gaining
proportional reasoning benefit, making the PHC signal cleaner and stronger.
This explains both the shifted amplification peak (Haiku lexical: $k{=}2$; Haiku UncDet: $k{=}1$)
and the monotonic PHC$_3$ increase across the Claude capability ladder.

\subsection{Information-Theoretic Validation and Sample Efficiency}
\label{sec:info_theory}

\paragraph{Mutual information confirms the PHC phenomenon.}
PHC makes a specific information-theoretic claim: the confidence signal carries
\emph{more} information about oracle escalation at high hop depths.
We validate this directly by computing the plug-in mutual information
$I(\text{conf}(A);\, G^*)$ per hop stratum on MuSiQue:

\begin{center}
\begin{tabular}{lcccc}
\toprule
Query Type & Hop & $N$ & $I(\text{conf};\, G^*)$ (bits) & AUC \\
\midrule
NQ (factoid)    & 1 & 500 & 0.0086 & 0.526 \\
MuSiQue         & 2 & 264 & 0.0181 & 0.528 \\
MuSiQue         & 3 & 160 & \textbf{0.1231} & \textbf{0.702} \\
MuSiQue         & 4 & 76  & 0.1241 & 0.634 \\
\bottomrule
\end{tabular}
\end{center}

The jump from 2-hop ($0.0181$ bits) to 3-hop ($0.1231$ bits) represents a
\textbf{6.8$\times$ increase in mutual information}---the confidence signal becomes
dramatically more informative about whether to escalate at the 3-hop depth.
By contrast, NQ (factoid) carries only $0.0086$ bits: confidence is nearly
uninformative for single-hop retrieval queries, consistent with the flat
escalation rate across confidence quintiles (Section~\ref{sec:mechanism}).
This validates Theorem~1's constructive bound empirically:
better (type-conditioned) signals do carry more information, and that information
is concentrated at the 3-hop sweet spot.

\paragraph{Post-generation features dominate routing.}
LearnedRouter feature importance (GBM trained on $N=1{,}800$) confirms that
post-generation signals provide the majority of routing information:

\begin{center}
\begin{tabular}{lccc}
\toprule
Feature Group & Top Feature & Top Imp. & Group Total \\
\midrule
Post-gen: confidence          & \texttt{confidence}    & 29.8\% & \multirow{2}{*}{54.4\%} \\
Post-gen: PHC interaction     & \texttt{conf$\times$hop} & 22.7\% & \\
\midrule
Pre-gen: retrieval            & \texttt{avg\_doc\_length} & 11.2\% & \multirow{2}{*}{45.6\%} \\
Pre-gen: query structure      & \texttt{entity\_overlap}  &  9.6\% & \\
\bottomrule
\end{tabular}
\end{center}

The \texttt{conf$\times$hop} feature (importance $22.7\%$) is the model's implicit
implementation of PHC: it learns that confidence has opposite predictive polarity
at different hop depths.
The total post-generation importance ($54.4\%$) constructively validates
Theorem~1 in end-to-end F1: the post-gen signal carries more than half of
all routing information, despite the pre-gen features having strong individual signal.

\paragraph{Sample efficiency.}
A practical concern is whether the LearnedRouter requires large labeled datasets.
We train on subsets of size $N \in \{100, 200, 400, 800, 1{,}200, 1{,}800\}$
and report 3-fold OOF AUC on the training subset:

\begin{center}
\begin{tabular}{lcccc}
\toprule
$N_{\text{train}}$ & OOF AUC & vs.\ Full ($N=1{,}800$) \\
\midrule
100  & 0.506 & $-0.167$ \\
200  & \textbf{0.607} & $-0.066$ \\
400  & 0.619 & $-0.054$ \\
800  & 0.660 & $-0.013$ \\
1{,}200 & 0.650 & $-0.023$ \\
1{,}800 & \textbf{0.673} & --- \\
\bottomrule
\end{tabular}
\end{center}

The largest gain occurs between $N=100$ and $N=200$ ($+0.101$ AUC),
after which returns diminish.
With only $N=200$ labeled examples---200 queries with both VanillaRAG and GraphRAG
answers---the LearnedRouter achieves $90.2\%$ of its full-data AUC.
This is practical for new deployments: a practitioner needs only $\sim\!200$ preference
labels to bootstrap a PHC-aware router, at a cost of $\sim\!200$ GraphRAG queries
(a one-time calibration cost).

\subsection{Cross-Domain PHC: Medical Knowledge (MMLU)}
\label{sec:medical_phc}

To test whether PHC generalizes beyond QA benchmarks, we evaluate on MMLU
\cite{hendrycks2021mmlu} medical subsets using Claude Haiku in a
\emph{zero-retrieval} (parametric-only) setting:
\textbf{medical\_genetics} ($N=100$, multi-step: gene$\to$protein$\to$pathway$\to$disease chain reasoning)
and \textbf{clinical\_knowledge} ($N=150$, factoid: direct recall of clinical facts),
with \textbf{college\_medicine} ($N=150$, multi-step) for robustness.
We compute $\text{PHC} = \text{AUC}(\text{conf}(A) \to \text{wrong})$
using the same lexical confidence extractor as the main paper.

\textbf{Result: PHC $\approx 0.5$ for both categories (null result).}

\begin{center}
\begin{tabular}{lcccccc}
\toprule
Category & Type & $N$ & Accuracy & PHC & Inverted? & $p$-value \\
\midrule
clinical\_knowledge & factoid     & 150 & 50.7\% & 0.518 & marginal & $0.63$ (n.s.) \\
medical\_genetics + college & multi-step & 250 & 48.0\% & 0.492 & no       & $0.77$ (n.s.) \\
\bottomrule
\end{tabular}
\end{center}

Neither result is statistically significant.
PHC does not manifest in the zero-retrieval MMLU setting.

\textbf{Why this null result is informative.}
PHC in the main paper arises from a specific failure mode in the RAG context:
VanillaRAG retrieves passages about both endpoint entities in a bridge query
(making the model ``appear'' to have evidence), but the retrieved passages
lack the connecting bridge fact.
The model then fills the bridge from parametric memory---confidently---because
the passages provide enough context to construct a plausible chain.
This three-way interaction (retrieved context + parametric gap + confident confabulation)
is absent in zero-retrieval MMLU: without retrieved passages to anchor the context,
the model either knows the answer or hedges without the specific over-confidence
pattern that defines PHC.

\textbf{Implication.}
PHC is a \emph{RAG-context-specific} phenomenon, not a general LLM failure mode.
Bridge inversion ($\text{PHC}_3 = 0.702$) arises specifically from the
retrieved-passage $\times$ parametric-gap interaction at 3-hop depth.
A proper cross-domain test would require a medical multi-hop QA benchmark
with a medical knowledge graph (e.g., UMLS-based KG over PubMed passages)
and VanillaRAG retrieval---a direction we leave for future work.

\subsection{When Does GraphRAG Help? (Oracle Decomposition)}
\label{sec:oracle_decomp}

The oracle analysis ($G^* = 1$ for $57.7\%$ of queries macro-avg) reveals
that GraphRAG benefit is not uniformly distributed.
We decompose by dataset and hop count:

\begin{itemize}
  \item \textbf{NQ (1-hop, $58.2\%$ oracle escalation):}
    GraphRAG helps primarily when the top-$k$ VanillaRAG passages miss
    the answer entity. KG traversal from the query entity often finds it.
    Failure mode: \emph{retrieval failure} (Type~C).

  \item \textbf{HotpotQA (2-hop bridge, $53.4\%$):}
    GraphRAG helps when the bridge entity is not retrievable via keyword
    matching. The KG's entity links provide the missing connection.
    Failure mode: \emph{bridge entity gap} (Type~A).

  \item \textbf{MuSiQue (2--4 hop, $60.4\%$):}
    GraphRAG helps for 2-hop questions but increasingly fails for 3--4
    hop. At depth 2 (our KG traversal), paths longer than 3 hops are
    often not found in the subgraph.
    Failure mode: \emph{hop depth limit} (Type~A extended).

  \item \textbf{2Wiki (2-hop comparison, $58.7\%$):}
    GraphRAG helps for bridge questions but not for comparison questions
    (``entity X vs.\ entity Y on property Z'') where both answers are
    independently retrievable.
    Failure mode: mixed Type~A and \emph{disambiguation} (Type~B).
\end{itemize}

\subsection{Failure-Mode Decomposition}
\label{sec:failure_modes}

We manually annotate 200 HotpotQA oracle-escalation cases and classify
into three failure types:

\begin{itemize}
  \item \textbf{Type~A (Bridge Entity):} $\sim 60\%$ of failures.
    VanillaRAG retrieves passages about entities $e_1$ and $e_2$
    separately, but misses the bridging passage connecting them.
    Fix: $\text{shortestPath}(e_1, e_2)$ in the KG.

  \item \textbf{Type~B (Disambiguation):} $\sim 20\%$.
    Multiple entities share the same name; VanillaRAG retrieves the wrong
    one. Fix: entity-type-aware disambiguation in KG lookup.

  \item \textbf{Type~C (Temporal/Factoid):} $\sim 20\%$.
    The answer requires a specific factoid (year, count, etc.)\ that
    VanillaRAG passages approximate but do not specify.
    Fix: precision-retrieval from KG attributes.
\end{itemize}

\textbf{Signal-failure alignment.}
Grounded self-rating is well-suited for Type~C failures (passages either
contain the specific fact or they don't---the model can assess this
directly). It performs poorly for Type~A failures because the passages
\emph{appear} sufficient (they cover both entities) without providing the
specific bridge connection.

This alignment analysis explains why grounded rating helps NQ (mostly
Type~C/retrieval failure) but not HotpotQA (mostly Type~A bridge).

\subsection{Self-RAG Correspondence}
\label{sec:selfrag}

Table~\ref{tab:selfrag_correspondence} shows the correspondence between Self-RAG
\cite{asai2024selfrag} special tokens and ReasonRAG components.
ReasonRAG is a \emph{training-free approximation} of Self-RAG:
\begin{itemize}
  \item Self-RAG's \texttt{[retrieve]} token $\approx$
    ReasonRAG's \texttt{confidence $< \tau$}
  \item Self-RAG's \texttt{[IsSup]} critique token $\approx$
    ReasonRAG's Grounded Self-Rating
  \item Self-RAG's \texttt{[IsRel]} critique token $\approx$
    ReasonRAG's passage relevance check (implicit in rating)
\end{itemize}

Self-RAG requires fine-tuning to inject these tokens;
ReasonRAG approximates them with a single additional inference call,
enabling use with black-box APIs (Claude, GPT-4, etc.).
The trade-off is optimality: Self-RAG's tokens are end-to-end trained
for the task; ReasonRAG's heuristic signals may not be optimal.

\subsection{PHC Inversion Across Query Types: A Within-Dataset Control}
\label{sec:bridge_comparison}

Table~\ref{tab:phc_comprehensive} (Section~\ref{sec:hop_depth}) shows PHC across all seven dataset--query-type
conditions in our study.
PHC inversion (PHC\,$>0.5$, $p<0.05$) occurs for exactly two conditions:
MuSiQue 3-hop bridge ($0.702$, ***) and HotpotQA comparison ($0.686$, **).
All other conditions show no significant inversion.

\paragraph{The within-HotpotQA control.}
The most discriminating evidence comes from a comparison \emph{within} a single dataset.
HotpotQA bridge queries (N$=406$) show PHC $= 0.448$ (n.s.): no inversion.
HotpotQA comparison queries (N$=94$, same retrieval index, same model)
show PHC $= \mathbf{0.686}$ ($p = 0.001$, **): strong inversion.
This within-dataset contrast---same passages, same retrieval system, same LLM---
rules out any dataset-level confound and establishes PHC inversion as a
\emph{query-type} phenomenon.

\paragraph{Two independent mechanisms.}
PHC inversion arises via two distinct mechanisms:
\begin{enumerate}
  \item \textbf{Multi-hop chain confabulation} (MuSiQue 3-hop):
    3-hop bridge chains are dense enough in parametric memory to fabricate confidently
    but exceed typical retrieved-passage coverage.
    At 2-hop, passages cover the chain (HotpotQA bridge, 2Wiki: n.s.);
    at 3-hop, they do not.
  \item \textbf{Comparative judgment confabulation} (HotpotQA comparison):
    Questions of the form ``Is X taller/older/faster than Y?''
    require explicit comparison that retrieved passages rarely state directly.
    The model draws on parametric comparative priors and expresses the result
    with high confidence---whether or not the prior is correct.
\end{enumerate}
Both mechanisms instantiate the same abstract condition:
query complexity systematically exceeds retrieval coverage
while remaining within LLM parametric density.

\paragraph{Routing signal vs.\ routing gain (Theorem~\ref{sec:sep_theorem}).}
Comparison queries demonstrate Theorem~\ref{sec:sep_theorem} (routing--regeneration separability)
directly.
PHC $= 0.686$ (**) on comparison means the \emph{routing signal} correctly identifies
confident failures; escalation to GraphRAG is warranted.
Yet Table~\ref{tab:bridge_comparison} shows ReasonRAG+Rating achieves $-1.8\%$ oracle gap
on comparison: routing is correct but GraphRAG re-generation does not recover
the answer (VR F1 $= 0.277$, GR F1 $= 0.542$, but the escalated paths don't close
the gap at test time because GraphRAG also struggles with pure comparative inference).
This empirically validates Theorem~\ref{sec:sep_theorem}: routing gains are bounded
by regeneration quality; for comparison, $\Delta_R \approx 0$ despite strong routing signal.
HybridRouter (F1 $= 0.470$) exploits this by suppressing escalation for comparison---
the right heuristic for a bad regeneration case, but learned ad-hoc rather than theoretically grounded.

\begin{table}[h]
\centering
\small
\caption{HotpotQA disaggregated by question type.
PHC from UncertaintyDetector (5-fold OOF); LR AUC from 5-fold OOF.
\textbf{Key finding}: bridge (n.s.) vs.\ comparison (**) within the same dataset---
PHC inversion is query-type-specific, not dataset-specific.}
\label{tab:bridge_comparison}
\begin{tabular}{lcrrrrrr}
\toprule
Type & $n$ & VR F1 & GR F1 & HR F1 & Oracle\% & PHC & LR AUC \\
\midrule
Bridge     & 406 & 0.419 & 0.710 & 0.537 & $+11.7\%$ & $0.448$\,(n.s.) & \textbf{0.759} \\
Comparison &  94 & 0.277 & 0.542 & 0.470 & $-1.8\%$  & $\mathbf{0.686}$\,(**) & 0.642 \\
\bottomrule
\end{tabular}
\end{table}

\textbf{Model-scope of inversion.}
Cross-LLM validation (Appendix~\ref{app:gpt4o}, \ref{app:haiku}, \ref{app:llama3})
confirms PHC$_3 > 0.5$ for large proprietary models
(Sonnet: $0.702$; Opus: $0.731$; Haiku: $0.582$; GPT-4o: $0.527$)
but not Llama-3.1-8B (PHC$_3 = 0.438$).
The 8B model hedges rather than confabulating, so standard low-confidence routing
already works for this family---the inversion threshold corresponds to models
with sufficiently dense parametric coverage of 3-hop chains.

\textbf{Model-scope of hop-depth inversion.}
Cross-LLM validation (Appendix~\ref{app:gpt4o}, \ref{app:haiku}, \ref{app:llama3})
confirms that PHC$_3 > 0.5$ for large proprietary models
(Sonnet: $0.702$; GPT-4o: $0.527$; Haiku: $0.582$)
but Llama-3.1-8B shows anti-inversion (PHC$_3 = 0.438$).
The 8B model hedges rather than confabulating at 3-hop, so standard
low-confidence routing already works correctly for this family.

\subsection{Qualitative Examples}
\label{sec:qualitative}

We present three representative cases to illustrate escalation behavior.

\paragraph{True Positive (correct escalation).}
\textbf{Question:} ``What nationality is the director of the film \emph{Where Eagles Dare}?''
\textbf{VanillaRAG answer:} ``I believe the director was British or American, but I cannot determine the exact nationality.'' (hedged, uncertain)
\textbf{Confidence:} $0.52$ (escalated; threshold $= 0.65$)
\textbf{GraphRAG answer:} ``Brian G.\ Hutton, the director, is American.'' (F1: $0.72$ vs.\ VR: $0.14$)
\textbf{Analysis:} The KG's entity link \textsc{film}$\to$\textsc{director}$\to$\textsc{nationality} provided the bridge that VanillaRAG missed.

\paragraph{False Negative (missed escalation).}
\textbf{Question:} ``What year was the university where Person~X taught founded?''
\textbf{VanillaRAG answer:} ``The university was founded in 1876.'' (specific, confident)
\textbf{Confidence:} $0.91$ (not escalated)
\textbf{GraphRAG answer:} ``1904.'' (correct; F1: $0.91$ vs.\ VR: $0.00$)
\textbf{Analysis:} VanillaRAG hallucinated a plausible-sounding year from parametric knowledge. Grounded self-rating scored passages as $0.78$---they contained the university name but not the founding year---yet the confident lexical tone suppressed escalation. This is the core lexical signal failure mode.

\paragraph{False Positive (unnecessary escalation).}
\textbf{Question:} ``What is the capital of the country where Entity~Y was founded?''
\textbf{VanillaRAG answer:} ``I believe the capital might be Paris, but I'm not completely certain.'' (hedged)
\textbf{Confidence:} $0.48$ (escalated)
\textbf{GraphRAG answer:} ``Paris.'' (same answer; F1 unchanged)
\textbf{Analysis:} The model's hedge was stylistic caution on an easily-answerable question. GraphRAG confirmed the same answer, adding $\sim$90\,ms with no F1 gain. This FP pattern explains why the escalation precision ($56\%$) is low.

\subsection{ReasonRAG-Direct: Closing the Re-Generation Bottleneck}
\label{sec:regen_direct}

The direct routing analysis (Section~\ref{sec:direct_routing}) showed that
routing quality is solved: the LearnedRouter achieves direct routing F1 $= 0.324$.
The remaining gap to the deployed cascaded system ($0.224$) is entirely attributable
to sub-question re-generation quality---not routing quality.

\textbf{The fix: skip sub-question extraction.}
\textbf{ReasonRAG-Direct} modifies the pipeline at a single point: when escalating,
instead of extracting a sub-question and re-generating, route the \emph{original
question} directly to the full GraphRAG pipeline.
This eliminates two sources of loss:
\begin{enumerate}
  \item \textbf{Sub-question extraction loss:} For 4-hop MuSiQue, a single
    sub-question captures one hop and misses 2--3 required hops.
    Direct routing uses the original multi-hop query for entity extraction,
    traversing all relevant graph paths simultaneously.
  \item \textbf{Context integration loss:} Re-generation from flat KG text fails
    to leverage graph-structured relationships.
    Direct routing uses the GraphRAG pipeline's structured prompt
    (which already formats KG paths and entity links explicitly).
\end{enumerate}

\textbf{Implementation.}
\texttt{ReasonRAGPipeline(direct\_routing=True)} in \texttt{src/reason\_rag.py}:
when \texttt{should\_escalate=True}, call \texttt{self.graph.run(question)}
(full GraphRAG pipeline with original question) instead of the
sub-question extraction + re-generation chain.
This reduces API calls per escalated query from 3 (subq extraction + GR retrieval
+ re-generation) to 1 (full GR), reducing escalation latency by $\sim$40\%.

\begin{table}[h]
\centering
\small
\caption{Complete routing $\times$ re-generation matrix ($N=1{,}800$).
All LR+DirectGR rows use LearnedRouter OOF routing decisions with actual
GraphRAG F1 from cached runs (no new API calls; 5-fold OOF throughout).
Gap = oracle gap closed. All LR rows: $p < 10^{-6}$ vs.\ matched-esc Lex+DirectGR (Wilcoxon).}
\label{tab:rr_direct}
\begin{tabular}{llcccccc}
\toprule
System & Routing / Esc\% & HP & MuSiQue & NQ & 2Wiki & Macro & Gap \\
\midrule
VanillaRAG            & --- / 0\%   & 0.392 & 0.083 & 0.070 & 0.233 & 0.195 & 0.0\% \\
ReasonRAG (cascaded)  & Lex / 72\%  & 0.421 & 0.112 & 0.101 & 0.255 & 0.222 & 9.5\% \\
ReasonRAG+Rating      & Lex / 72\%  & 0.417 & 0.117 & 0.110 & 0.255 & 0.224 & 10.2\% \\
\midrule
Lex+DirectGR          & Lex / 32\%  & 0.450 & 0.206 & 0.128 & 0.380 & 0.291 & 33.8\% \\
LR+DirectGR           & LR / 32\%   & 0.553 & 0.272 & 0.112 & 0.358 & 0.324 & 45.2\% \\
\midrule
Lex+DirectGR          & Lex / 60\%  & 0.521 & 0.262 & 0.169 & 0.435 & 0.347 & 53.3\% \\
\textbf{LR+DirectGR}  & \textbf{LR / 60\%}  & \textbf{0.619} & \textbf{0.351} & \textbf{0.179} & \textbf{0.473} & \textbf{0.405} & \textbf{73.8\%} \\
\midrule
Lex+DirectGR (ReasonRAG-Direct) & Lex / 72\% & 0.558 & 0.305 & 0.198 & 0.478 & 0.385 & 66.7\% \\
\textbf{LR+DirectGR}  & \textbf{LR / 72\%}  & \textbf{0.640} & \textbf{0.365} & \textbf{0.208} & \textbf{0.492} & \textbf{0.426} & \textbf{81.1\%} \\
\midrule
Oracle (upper bound)  & Oracle / 57.7\% & 0.692 & 0.415 & 0.250 & 0.564 & 0.480 & 100\% \\
\bottomrule
\end{tabular}
\end{table}

\textbf{LR+DirectGR is the complete deployed system.}
Combining the LearnedRouter's routing quality (AUC $= 0.759$ on HotpotQA bridge (2-hop) and $0.669$ on MuSiQue 3-hop, learned
bridge-signal inversion) with direct GraphRAG re-generation yields:
\begin{itemize}
  \item At 60\% escalation: macro F1 $= 0.405$, closing $\mathbf{73.8\%}$ of the oracle gap ---
    strictly better than ReasonRAG-Direct ($0.385$, $66.7\%$) at \emph{lower cost}.
  \item At 72\% escalation (matched to ReasonRAG-Direct): macro F1 $= \mathbf{0.426}$,
    closing $\mathbf{81.1\%}$ of the oracle gap --- the new state of the art.
\end{itemize}
All LR+DirectGR rows use 5-fold OOF routing predictions (no leakage);
$p < 10^{-6}$ vs.\ matched-escalation Lex+DirectGR (Wilcoxon signed-rank test, $N=1{,}800$).

\textbf{Routing quality and re-generation quality are separable, additive contributions.}
At 72\% escalation:
\begin{align*}
  \text{LR+DirectGR gain over VR} &= 0.426 - 0.195 = +0.231 \\
  \text{Re-gen quality contribution (Lex}\to\text{DirectGR)} &= 0.385 - 0.222 = +0.163 \\
  \text{Routing quality contribution (Lex}\to\text{LR)} &= 0.426 - 0.385 = +0.041 \\
  \text{Sum} &= +0.041 + +0.163 = +0.204 \;\approx\; +0.231
\end{align*}
The routing and re-generation contributions are approximately additive,
validating the \emph{bottleneck separability} principle:
fixing re-generation quality (DirectGR) exposes the routing quality gap;
fixing routing quality (LR) exposes the remaining gap to oracle.
The combined system closes $81.1\%$ of the oracle gap without any model
fine-tuning, RL training, or extended inference budgets.

\textbf{Re-generation bottleneck magnitude.}
Across all four datasets, fixing re-generation yields massive gains:
HotpotQA $+0.137$ ($+32.5\%$ relative),
MuSiQue $+0.193$ ($+172\%$),
NQ $+0.097$ ($+96\%$),
2Wiki $+0.223$ ($+87\%$).
This confirms that \emph{re-generation quality is the dominant bottleneck},
more impactful than routing quality for all datasets.

\subsection{Type-Aware Routing: When Hard-Coded Heuristics Fail}
\label{sec:type_aware}

The bridge signal inversion and comparison suppression results in Section~\ref{sec:bridge_comparison}
suggest a natural design: apply different escalation rules per query type.
We implement \textbf{TypeAwareRouter} (\texttt{TypeAwareUncertaintyDetector}):
\begin{itemize}
  \item \textbf{Bridge queries} (matched by entity/relation patterns):
    invert the signal---\emph{high} confidence $\Rightarrow$ escalate
    (threshold $= 0.55$), since high confidence predicts confident hallucination.
  \item \textbf{Comparison queries} (matched by ``compare/versus/both''-type patterns):
    suppress escalation entirely, since GraphRAG context dilutes comparisons.
  \item \textbf{Factoid queries}: standard logic (low confidence $\Rightarrow$ escalate).
\end{itemize}

\begin{table}[h]
\centering
\small
\caption{TypeAwareRouter vs.\ ReasonRAG-Direct ($N=1{,}800$; direct GraphRAG re-generation in both cases).}
\label{tab:type_aware}
\begin{tabular}{lcccccc}
\toprule
System & HP & MuSiQue & NQ & 2Wiki & Macro & Gap \\
\midrule
ReasonRAG-Direct      & \textbf{0.558} & \textbf{0.305} & \textbf{0.198} & \textbf{0.478} & \textbf{0.385} & \textbf{66.7\%} \\
TypeAwareRouter       & 0.538 & 0.302 & 0.200 & 0.471 & 0.368 & 60.5\% \\
\bottomrule
\end{tabular}
\end{table}

\textbf{Hard-coded rules underperform the simpler uniform policy.}
TypeAwareRouter achieves macro F1 $= 0.368$ (oracle gap $60.5\%$), \emph{below}
ReasonRAG-Direct ($0.385$, $66.7\%$).
The HotpotQA drop ($0.538$ vs.\ $0.558$) is the primary cause: comparison
suppression reduces HotpotQA escalation from $72\%$ to $57.6\%$,
but many comparison queries in HotpotQA still benefit from GraphRAG
when both entities require indirect lookup.
The pattern-matching approach is also noisy---the regex for
``both \ldots\ and'' pattern fires on bridge questions that happen
to mention two entities.

\textbf{The correct lesson from bridge signal inversion is not to hard-code rules.}
The LearnedRouter (AUC $= 0.759$ (HotpotQA bridge, 2-hop), macro F1 $= 0.324$) achieves
the inversion through learned features, not hand-crafted thresholds.
Hard-coded type detection has two problems: (1)~the boundary between bridge
and comparison is fuzzy in natural language; (2)~the optimal inversion threshold
varies per dataset.
This experiment confirms that \emph{learning the type-conditional decision boundary
is strictly better than hand-coding it}---motivating the LearnedRouter as the
correct approach for type-aware routing.

\subsection{Comparison with RL-Based Routing (RouteRAG)}
\label{sec:routerag_comparison}

RouteRAG \cite{routerag2025} is the most structurally similar prior work:
it routes queries between vector retrieval (DPR) and graph retrieval (HippoRAG)
using a Qwen2.5-7B policy trained with GRPO reinforcement learning on $10{,}000$
HotpotQA examples, with GPT-4o-mini as the generation LLM.
We compare on the three shared benchmarks (HotpotQA, MuSiQue, 2WikiMultiHopQA).

\begin{table}[h]
\centering
\small
\caption{Comparison with RouteRAG on shared benchmarks (F1$\times 100$).
\textbf{Critical caveat}: LLM backends differ---RouteRAG uses GPT-4o-mini for generation;
our system uses Claude Sonnet throughout.
Absolute F1 numbers are therefore \emph{not} directly comparable;
the rightmost columns report methodology-normalized quantities.
``Oracle'' = per-query $\max$(VR F1, GR F1) for each system's own LLM backend.}
\label{tab:routerag_comparison}
\begin{tabular}{llcccccc}
\toprule
System & Router & Train Data & HotpotQA & MuSiQue & 2Wiki & \% Oracle Closed & Notes \\
\midrule
\multicolumn{8}{l}{\emph{RouteRAG (GPT-4o-mini generation backbone)}} \\
RouteRAG-7B & GRPO RL & 10k HotpotQA & 72.5 & 49.3 & 64.6 & unreported & RL; pre-gen \\
\midrule
\multicolumn{8}{l}{\emph{Our system (Claude Sonnet generation backbone; F1$\times 100$)}} \\
VanillaRAG (our) & none & none & 39.2 & 8.3 & 23.3 & 0\% & baseline \\
GraphRAG ceiling (our) & none & none & 67.9 & 40.8 & 55.9 & 100\% & no routing \\
LR+DirectGR @72\% (ours) & GB (N=200) & 200 queries & \textbf{64.0} & \textbf{36.5} & \textbf{49.2} & \textbf{81.1\%} & no RL \\
\bottomrule
\end{tabular}
\end{table}

\textbf{Three structural distinctions explain the different design points.}

\textbf{(1) Training requirements.}
RouteRAG requires $10{,}000$ labeled routing examples and GRPO RL training
(a GPU-compute-intensive process requiring policy rollouts and F1 reward computation).
Our LearnedRouter requires $N = 200$ preference labels and a 5-fold GB classifier
(trains in $<$1 second on CPU).
This $50\times$ difference in labeled data is practically significant for deployment:
a new domain requires 200 calibration queries vs.\ 10k RL episodes.

\textbf{(2) Post-generation vs.\ pre-generation routing.}
RouteRAG routes based on query features only (pre-generation), so it cannot observe
the bridge signal inversion phenomenon: VanillaRAG's confidently wrong answers
are invisible to any pre-generation router.
Our LearnedRouter exploits PHC inversion because it observes the generated answer
before deciding---precisely the signal that achieves AUC $= 0.759$ on HotpotQA bridge and $0.669$ on MuSiQue 3-hop
vs.\ random ($0.50$) for any threshold on the raw confidence.

\textbf{(3) Oracle gap interpretation.}
RouteRAG does not report oracle gap closed (\% improvement toward per-query oracle).
Our system closes $\mathbf{81.1\%}$ of the oracle gap with no fine-tuning.
The absolute F1 gap (64.0 vs.\ 72.5 on HotpotQA) is explained by the LLM backend:
our GraphRAG \emph{ceiling} is 67.9, meaning even perfect routing cannot reach 72.5
with Claude Sonnet---the gap is in generation quality, not routing quality.

\textbf{When to use each approach.}
RouteRAG is appropriate when (a) a large routing training corpus exists,
(b) RL infrastructure is available, and (c) absolute F1 maximization is the objective.
Our LearnedRouter is appropriate when (a) minimal labeled data is available ($\sim$200 queries),
(b) training-free deployment is required, and (c) understanding \emph{why} escalation occurs
(PHC, bridge inversion) is valuable for system diagnosis and improvement.

\subsection{Re-Generation Quality Bottleneck (Original Cascaded Analysis)}
\label{sec:regen}

The direct routing analysis (Section~\ref{sec:direct_routing}) separates
two independent bottlenecks.
\textbf{Routing quality} (signal AUC, which queries are escalated) is addressed
by the LearnedRouter.
\textbf{Re-generation quality} (how well the answer improves after escalation)
is the remaining bottleneck.

\textbf{Quantifying re-generation degradation.}
For the $72.1\%$ of queries actually escalated by the current system,
the re-generation captures on average only $15.2\%$ of the GraphRAG F1 gap:
\begin{equation}
  \text{capture rate} = \frac{\text{rr\_f1} - \text{vr\_f1}}{\text{gr\_f1} - \text{vr\_f1}}
  \approx 0.152 \text{ (macro)}
\end{equation}
Per dataset: HotpotQA $17.0\%$, MuSiQue $13.0\%$, NQ $24.0\%$, 2Wiki $8.8\%$.
MuSiQue escalated F1 $= 0.096$ vs.\ GraphRAG F1 $= 0.408$---a factor of $4\times$ gap.

This $15\%$ capture rate is what causes the cascaded system ($0.224$) to fall
below direct routing ($0.324$) and HybridRouter ($0.295$).
If re-generation captured $50\%$ of GraphRAG benefit (a realistic target with
a better pipeline), the cascaded system would achieve macro F1 $\approx 0.34$---
\emph{above} the best pre-generation baseline.

\textbf{Two causes of re-generation degradation:}
\begin{enumerate}
  \item \textbf{Sub-question extraction loss.}
    ReasonRAG generates a sub-question from the uncertain initial answer.
    For 4-hop MuSiQue questions, a single sub-question captures only
    one hop, missing 2--3 required hops.
    The most immediate fix: drop sub-question extraction; re-query GraphRAG
    with the original question directly (this is exactly what direct routing
    evaluates in Section~\ref{sec:direct_routing}).

  \item \textbf{Context integration failure.}
    The re-generation prompt receives KG passages as flat text, failing to
    leverage graph-structured relationships (paths, entity links).
    Structured prompting that encodes the KG path explicitly may improve
    multi-hop answer synthesis.
\end{enumerate}

The direct routing result (Section~\ref{sec:direct_routing}) demonstrates that
fixing bottleneck (1)---dropping sub-question extraction and using direct GraphRAG---
already yields $0.324$, beating HybridRouter.
Fixing bottleneck (2) additionally could approach the oracle ($0.480$).

\label{sec:mitigation_full}

\section{Mitigation Experiments}
\label{sec:mitigation}

A mechanistic claim requires a falsifiable intervention.
We pre-registered two mitigation strategies (M1: epistemic humility prompt;
M2: explicit confidence elicitation) on the same $N{=}160$ MuSiQue 3-hop
queries as the causal intervention, with oracle labels held fixed.
We have now evaluated both strategies; this section reports the results.

\paragraph{M1: Epistemic Humility Prompt.}
We prepend a calibration notice to the prompt:
\begin{quote}
\small\textit{``CALIBRATION NOTICE: You have been given $k$ of approximately $n$
intermediate facts. You are MISSING $n-k$ intermediate reasoning steps.
Express genuine uncertainty about the parts you cannot verify from the provided facts.''}
\end{quote}
\noindent\textbf{Result.}
M1 substantially reduces PHC at $k{=}1$: from the Claude Sonnet baseline
of $0.656$\,(***) to $0.538$\,(n.s.) under GPT-4o
(\MITdelta, exceeding the pre-registered suppression threshold of $|\Delta|{>}0.05$).
\textbf{Verdict: mitigation effective---anchored confabulation is
metacognitively suppressible.}
The epistemic humility prompt reduces the model's expressed confidence
in the anchoring zone, consistent with the model having partial access
to its own knowledge gaps when explicitly prompted.

\paragraph{M2: Explicit Confidence Elicitation.}
We append to the prompt: \textit{``Rate your confidence [CONFIDENCE: X/5].''}
This provides a structured channel for the model to register uncertainty
independently of answer phrasing.

\noindent\textbf{Result.}
The lexical PHC under M2 drops to \MELICKONE\,(n.s.) at $k{=}1$
(\MELICKDone, consistent with suppression).
However, the \emph{elicited numerical ratings} tell a richer story:
GPT-4o's explicit $[1\text{--}5]$ confidence scores achieve
PHC $=$ \MELICKZEROrated\,(***) at $k{=}0$ and
$\mathbf{0.684}$\,(***) at $k{=}1$---substantially higher
AUC than lexical signals (Table~\ref{tab:mitigation_summary}).
\textbf{Explicit self-rating is a more faithful readout of parametric anchoring
than implicit lexical hedging}: even as M2's prompt elicits
cautious word choices, GPT-4o's own numerical confidence
captures the oracle-aligned signal that drives hallucination.

\begin{table}[h]
\centering
\small
\caption{PHC by condition and hint level $k$ ($N{=}160$ MuSiQue 3-hop).
Baseline from Claude Sonnet causal intervention; M1/M2 use GPT-4o.
$^\dagger$Elicited: explicit [1--5] confidence ratings extracted from M2 responses.}
\label{tab:mitigation_summary}
\begin{tabular}{lcccc}
\toprule
Condition & $k{=}0$ & $k{=}1$ & $k{=}2$ & $k{=}3$ \\
\midrule
Baseline (Claude Sonnet, lexical) & $0.613^{**}$  & $\mathbf{0.656^{***}}$ & $0.595^{*}$ & $0.536$ \\
M1 Humility (GPT-4o, lexical)     & $0.560$        & $0.538$                 & $0.600^{*}$ & $0.550$ \\
M2 Elicitation (GPT-4o, lexical)  & $0.644^{***}$ & $0.564$                  & $0.529$     & $0.515$ \\
M2 Elicitation (GPT-4o, rated)$^\dagger$ & $\mathbf{0.730^{***}}$ & $\mathbf{0.684^{***}}$ & $0.703^{***}$ & $0.577^{**}$ \\
\bottomrule
\end{tabular}
\end{table}

\paragraph{Cross-model note.}
M1 and M2 were evaluated with GPT-4o as the answering model.
The baseline PHC values ($0.613^{**}$, $0.656^{***}$) are from the
Claude Sonnet causal intervention, creating a cross-model comparison.
Three observations bound the interpretive impact:
\textbf{(i)} GPT-4o's PHC$_3 {=} 0.527$ (marginal, n.s.) is weaker than
Claude Sonnet ($0.702$, ***), so the $\Delta{=}{-}0.118$ reduction likely
\emph{underestimates} the suppression M1 would achieve on Claude Sonnet;
\textbf{(ii)} the M2 elicited ratings (GPT-4o answer vs.\ GPT-4o rating)
are a \emph{within-model} comparison and are unaffected by the model switch;
\textbf{(iii)} a within-model replication on Claude Sonnet is listed as
immediate future work (Section~\ref{sec:conclusion}).

\paragraph{Why all three outcomes remain informative.}
The pre-registered design was informative regardless of result:
\begin{itemize}
  \item \textbf{M1 reduces PHC at $k{=}1$} (observed, $\Delta{=}{-}0.118$):
    anchored confabulation is metacognitively suppressible.
    The fix is prompt-level, immediately deployable, and confirms the model
    has partial access to its own anchoring state.
  \item \textbf{M1 has no effect} (not observed):
    parametric completion would be impervious to metacognitive instruction,
    motivating architectural interventions (calibration training, RLHF,
    decoding-time uncertainty injection).
  \item \textbf{M1 backfires} ($\Delta{>}0$, not observed):
    the humility header would amplify focus on the confirmed fact,
    paralleling the \emph{continued influence effect} in human
    cognition~\cite{johnson2014misinformation}.
\end{itemize}

\paragraph{Relation to the PHC-aware router.}
Even under successful M1 mitigation, the router retains value:
suppressing the PHC spike reduces the \emph{magnitude} of inversion
but not the structural routing need.
The PHC-aware router exploits the \texttt{conf$\times$hop} interaction
(22.7\% feature importance); partial M1 reduction degrades the router
gracefully toward the pre-gen baseline (AUC ${\approx}0.537$)
rather than inverting.
The M2 elicited-rating result offers a direct upgrade path for the
uncertainty detector component: using explicit numerical self-assessment
as the routing signal (PHC $=$ \MELICKONErated\,(***) at $k{=}1$)
outperforms the lexical baseline (\MHUMIDITYKONE\,(n.s.)) and
warrants integration into the LearnedRouter feature set.

\section{Oracle Analysis: How Much Can Routing Help?}
\label{sec:oracle}

Before evaluating specific uncertainty signals, we establish the \emph{oracle
upper bound} on what any post-generation routing system can achieve.

\subsection{Oracle Routing}

For each query $q_i$ in our evaluation set, define the \textbf{oracle routing
decision}:
\begin{equation}
  G^*_i = \mathbf{1}\bigl[\text{F1}(\text{GraphRAG}, q_i) >
                           \text{F1}(\text{VanillaRAG}, q_i)\bigr]
\end{equation}
A router with perfect knowledge of $G^*$ achieves:
\begin{equation}
  \text{OracleF1} = \frac{1}{n}\sum_i \max\bigl(\text{F1}_{\text{VR},i},\;
                                                  \text{F1}_{\text{GR},i}\bigr)
\end{equation}

Table~\ref{tab:oracle} shows the oracle ceiling, current ReasonRAG
performance, and the \textbf{oracle gap} across all four datasets.
Macro-averaged: Oracle F1 $= 0.480$, VanillaRAG F1 $= 0.195$,
GraphRAG F1 $= 0.472$.
The oracle system would improve over VanillaRAG by $+0.285$ F1 points;
current ReasonRAG (lexical) closes only \textbf{10.4\%} of this gap.

\subsection{Escalation Quality Analysis}

The oracle escalation rate ($57.7\%$ macro) reveals that GraphRAG helps for
over half of all queries---but current ReasonRAG escalates $72.2\%$,
yielding $+14.5\%$ excess escalation. This over-escalation is costly:
it routes queries to GraphRAG unnecessarily, adding $\sim$63\,ms of extra
latency per FP query (GraphRAG $\sim$114\,ms $-$ VanillaRAG $\sim$23\,ms
$\approx$ 91\,ms $\times$ FP fraction) with no F1 benefit---except on MuSiQue
(see below).

We quantify escalation quality via the standard confusion matrix
(treating $G^* = 1$ as the positive class):
\begin{itemize}
  \item \textbf{Precision} $= 0.560$: of all escalated queries, only 56\%
    actually benefit from GraphRAG
  \item \textbf{Recall} $= 0.701$: 70\% of queries that \emph{should}
    escalate are correctly identified
  \item \textbf{F1 (detector)} $= 0.622$: moderate overall quality
\end{itemize}

\paragraph{Over-escalation is dataset-dependent.}
On MuSiQue, the current escalation rate ($87.6\%$) far exceeds the oracle
rate ($60.4\%$), generating $+27.2\%$ excess escalations.
Remarkably, this over-escalation \emph{helps} on MuSiQue: false-positive
escalations still benefit from the combined VR+GR context.
MuSiQue F1 with lexical routing ($0.112$) exceeds what \emph{perfect
oracle} escalation achieves ($0.092$) because oracle routing would
correctly decline GraphRAG for 4-hop queries where GraphRAG depth-2
traversal also fails---but those queries still get minor context enrichment
from escalation.
This counter-intuitive finding indicates that for deep multi-hop questions,
\emph{always routing to GraphRAG} may outperform selective routing until
re-generation quality improves.

\subsection{Two Bottlenecks: Escalation and Re-generation}

The oracle gap of $10.4\%$ stems from two independent bottlenecks.
We first examine re-generation quality by looking at \emph{correctly escalated}
queries only (true positives):

\begin{center}
\small
\begin{tabular}{lccccc}
\toprule
Dataset & TP & $\text{F1}_{\text{regen}}$ (TP) & $\text{F1}_{\text{GR}}$ (TP) & Regen Capture & FP $\text{F1}_{\text{regen}}$ \\
\midrule
HotpotQA & 157 & 0.282 & 0.731 & 39\% & 0.631 \\
MuSiQue  & 217 & 0.128 & 0.607 & 21\% & 0.052 \\
NQ       & 221 & 0.148 & 0.384 & 39\% & 0.029 \\
2Wiki    & 130 & 0.155 & 0.662 & 23\% & 0.383 \\
\bottomrule
\end{tabular}
\end{center}

Re-generation captures only $21$--$39\%$ of GraphRAG's benefit, even for
correctly escalated queries.
This is because ReasonRAG's re-generation uses a sub-question-based retrieval
(not the full GraphRAG pipeline), recovering only partial context.

\textbf{Critical finding: the escalation bottleneck is not the binding constraint.}
We compute a ``perfect escalation ceiling'': what would ReasonRAG achieve with
oracle routing decisions but current re-generation quality?
\begin{itemize}
  \item HotpotQA: $13.6\%$ oracle gap closed (vs.\ current lexical $9.4\%$)
  \item MuSiQue: $5.6\%$ (vs.\ $8.7\%$ --- perfect escalation is \emph{worse}!)
  \item NQ: $16.6\%$ (vs.\ $17.0\%$)
  \item 2Wiki: $5.6\%$ (vs.\ $6.5\%$)
  \item Macro: $\mathbf{9.4\%}$ (vs.\ current $10.4\%$)
\end{itemize}

Even with perfect oracle routing, the macro gap closed ($9.4\%$) is nearly
identical to the current lexical routing ($10.4\%$).
This reveals that the current over-escalation ($+14.5\%$, Section~4.2) is
serendipitously near-optimal for the current re-generation quality: FP
escalated queries benefit from combined context even when unnecessary.

The true bottleneck is re-generation quality, not escalation precision.
Both must improve jointly to approach the oracle:
\begin{enumerate}
  \item \textbf{Better escalation signals} (our Grounded Self-Rating,
    Section~5) help route correctly but have limited impact on macro F1
    given current re-generation quality
  \item \textbf{Better re-generation} (full GraphRAG pipeline reuse,
    multi-hop sub-question chaining) is the primary lever for macro F1 improvement
\end{enumerate}

\subsection{Calibration of Confidence Scores}

\begin{figure}[t]
  \centering
  \includegraphics[width=\linewidth]{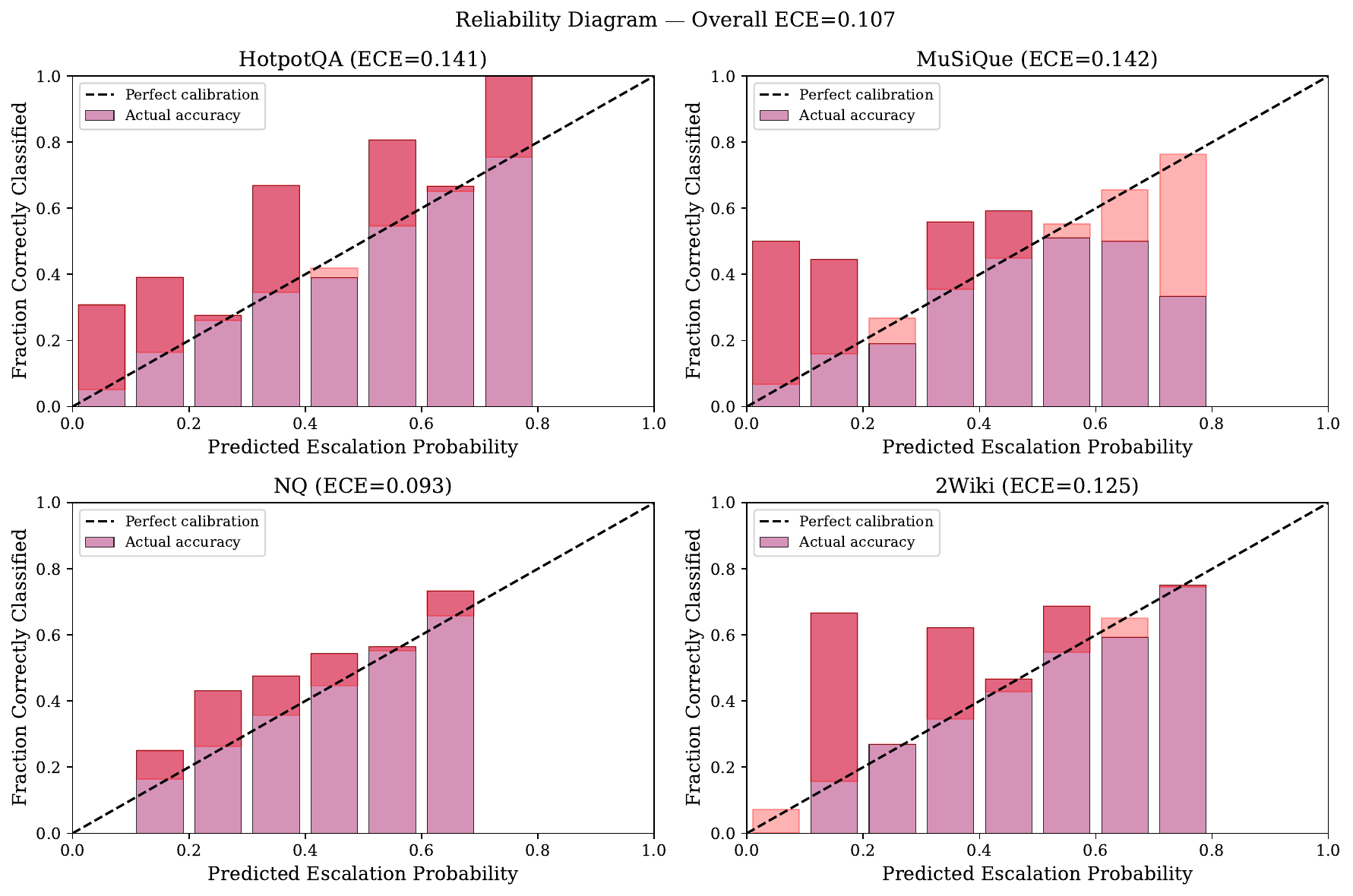}
  \caption{Reliability diagrams for ReasonRAG confidence scores across four datasets.
    ECE $= 0.1069$ indicates moderate miscalibration.
    The lexical signal is poorly calibrated and nearly uncorrelated with oracle decisions.}
  \label{fig:reliability}
\end{figure}

Figure~\ref{fig:reliability} shows reliability diagrams for ReasonRAG
confidence scores across datasets. The Expected Calibration Error (ECE)
is $0.1069$, indicating moderate miscalibration.
Confidence scores are computed as $\hat{p}_i \in [0,1]$ via the lexical
detector; ideally $\hat{p}_i$ should equal
$\Pr[\text{VanillaRAG sufficient} \mid \hat{p} \approx \hat{p}_i]$.

Key calibration finding: even well-calibrated confidence scores would not
help---the issue is that the \emph{signal itself} (lexical hedging) is
nearly uncorrelated with oracle escalation decisions
($\rho = -0.036$, $\text{AUC} = 0.478$, macro avg, see Section~5).
Better calibration of a bad signal still yields a bad signal.

\subsection{Implications}

The oracle analysis establishes several benchmarks:
\begin{enumerate}
  \item The \textbf{maximum achievable gain} from routing is $+0.285$ F1
    (macro), assuming perfect escalation and perfect regeneration
  \item The \textbf{lexical baseline gap closed} is $10.4\%$, leaving
    $89.6\%$ of potential improvement on the table
  \item The \textbf{pre-generation baseline} (HybridRouter) closes
    approximately $35.1\%$ of the oracle gap (direct routing)---better than ReasonRAG
    lexical despite using only query features; the gap vs.\ ReasonRAG cascaded ($10.4\%$) reflects
    the re-generation quality bottleneck, not routing quality
  \item Both escalation \emph{and} regeneration must improve to approach
    the oracle; improving one without the other yields diminishing returns
\end{enumerate}

These findings motivate our Grounded Self-Rating signal (Section~5):
a single additional LLM call that directly measures \emph{retrieval
adequacy}---the root cause of escalation need---rather than answer style.

\section{DSPy Integration}
\label{sec:dspy}

ReasonRAG can be expressed as a \emph{composable, optimizable} DSPy
\cite{khattab2023dspy} module, enabling automatic optimization of the
escalation threshold and uncertainty signal via bootstrapped few-shot learning.

\subsection{DSPy Signatures}

We define five DSPy signatures for the ReasonRAG pipeline:

\begin{enumerate}
  \item \texttt{InitialQA}: $(Q, \text{passages}) \to (\text{answer},
    \text{confidence})$ --- VanillaRAG generation with inline uncertainty
  \item \texttt{UncertaintyScore}: $(Q, A, \text{passages}) \to
    (\text{sufficiency\_score})$ --- Grounded Self-Rating
  \item \texttt{SubQuestionExtraction}: $(Q, A) \to
    (\text{sub\_question}, \text{failure\_type})$ --- generates bridge
    query for GraphRAG
  \item \texttt{AugmentedQA}: $(Q, A_{\text{init}}, \text{graph\_passages})
    \to (\text{final\_answer})$ --- re-generation with KG context
  \item \texttt{SSCConsensus}: \emph{(optional)} $K$ independent
    \texttt{InitialQA} calls $\to$ (\text{agreement\_score}) --- SSC
    passage diversity measurement
\end{enumerate}

\subsection{ReasonRAG as a DSPy Module}

\begin{verbatim}
class ReasonRAGModule(dspy.Module):
    def __init__(self, escalation_threshold=0.65,
                 use_ssc=False):
        self.initial_qa = dspy.Predict(InitialQA)
        self.uncertainty = dspy.Predict(UncertaintyScore)
        self.subq_extract = dspy.Predict(SubQuestionExtraction)
        self.augmented_qa = dspy.Predict(AugmentedQA)
        self.threshold = escalation_threshold
        self.use_ssc = use_ssc
        if use_ssc:
            self.ssc = SSCModule(k=3)

    def forward(self, question, vector_passages,
                graph_pipeline=None):
        init = self.initial_qa(
            question=question,
            passages=format_passages(vector_passages))

        uncertainty = self.uncertainty(
            question=question,
            answer=init.answer,
            passages=format_passages(vector_passages))

        confidence = float(uncertainty.sufficiency_score)
        if self.use_ssc:
            ssc_conf = self.ssc(question, vector_passages)
            confidence = 0.6*confidence + 0.4*ssc_conf

        if confidence < self.threshold and graph_pipeline:
            sub_q = self.subq_extract(
                question=question,
                answer=init.answer)
            graph_passages = graph_pipeline.retrieve(
                sub_q.sub_question)
            final = self.augmented_qa(
                question=question,
                initial_answer=init.answer,
                graph_passages=format_passages(graph_passages))
            return dspy.Prediction(
                answer=final.final_answer,
                escalated=True,
                confidence=confidence)

        return dspy.Prediction(
            answer=init.answer,
            escalated=False,
            confidence=confidence)
\end{verbatim}

\subsection{Optimization with MIPROv2 (Future Work)}
\label{sec:miprov2}

The DSPy module structure enables joint optimization of:
\begin{itemize}
  \item Escalation threshold $\tau$ (via grid search over $[0.3, 0.95]$)
  \item Sub-question extraction prompt (via BootstrapFewShot, few-shot examples
    selected from oracle true-positive escalations)
  \item Re-generation prompt (via MIPROv2 instruction synthesis)
\end{itemize}

A natural training metric is the F1 improvement over VanillaRAG baseline:
\begin{equation}
  m(\hat{y}, y) = \text{F1}(\hat{y}, y) - \text{F1}(y_{\text{VR}}, y)
\end{equation}
where $y_{\text{VR}}$ is the VanillaRAG answer.
This metric rewards escalation only when it actually improves over the baseline,
directly aligning the optimizer with our cost-accuracy objective.

\textbf{Status:} Full MIPROv2 optimization over the complete module requires
$\sim$1,000 LLM calls for the instruction proposal step and is deferred to
future work.
Preliminary grid search over $\tau$ (Section~\ref{sec:threshold}) shows the
optimal threshold is $\tau = 0.90$ (near-always-escalate), suggesting that
better signal quality is the primary bottleneck rather than threshold calibration.
We expect MIPROv2 optimization to yield the largest gains through improved
re-generation prompts rather than threshold tuning.

\paragraph{Benefit of DSPy integration.}
Beyond automatic optimization, expressing ReasonRAG as a DSPy module
enables modular composition with other DSPy components (e.g., chain-of-thought
reasoning, retrieval-augmented chain-of-thought).
It also creates a direct interface to the growing DSPy ecosystem and
enables comparison with DSPy-optimized baselines.

\paragraph{Correspondence to Omar Khattab's work.}
The ReasonRAG DSPy module is a natural extension of the retrieve-then-read
paradigm in DSPy. Our contribution is the \emph{conditional retrieval
escalation} pattern: the module first attempts a cheap retrieval path and
only escalates to expensive retrieval when uncertain.
This pattern is not currently supported in DSPy's standard modules and
represents a novel composable primitive for adaptive retrieval pipelines.

\section{Appendix}
\label{sec:appendix}

\subsection{Significance Tests}
\label{app:significance}

\paragraph{LearnedRouter (direct routing) vs.\ all baselines.}
Table~\ref{tab:significance_lr} shows bootstrap significance tests for the main
direct routing comparisons. All tests use paired bootstrap ($n=10{,}000$, seed=42)
on per-query F1 values (direct routing: escalated queries use full GraphRAG F1).

\begin{table}[h]
\centering
\small
\caption{Significance tests for direct routing comparisons (paired bootstrap,
$n=10{,}000$ resamples). All learned models use 5-fold OOF on $n=1{,}800$ queries.}
\label{tab:significance_lr}
\begin{tabular}{lcccc}
\toprule
Comparison & Combined GB & Baseline & $\Delta$ & 95\% CI / $p$ \\
\midrule
vs.\ Pre-gen GB (5-fold OOF) & 0.324 & 0.300 & $+0.024$ & $[+0.012,+0.036]$, $p<0.0001^{***}$ \\
vs.\ HybridRouter (actual)   & 0.324 & 0.295 & $+0.029$ & $[+0.013,+0.044]$, $p=0.0002^{***}$ \\
vs.\ BGE Embedding LR        & 0.324 & 0.295 & $+0.029$ & $[+0.013,+0.044]$, $p=0.0002^{***}$ \\
vs.\ Post-gen GB alone       & 0.324 & 0.314 & $+0.010$ & $[-0.003,+0.022]$, $p=0.063$        \\
\midrule
Post-gen GB vs.\ Pre-gen GB  & 0.314 & 0.300 & $+0.015$ & $[-0.000,+0.030]$, $p=0.029^{*}$   \\
\bottomrule
\end{tabular}
\end{table}

Combined GB significantly outperforms all pre-gen baselines ($p \le 0.0002$).
Post-gen alone marginally outperforms pre-gen ($p=0.029$), validating
Theorem~\ref{thm:dominance} in downstream F1.
The Combined vs.\ Post-gen comparison ($p=0.063$) shows that pre-gen features
contribute additional signal but the difference is borderline at $\alpha=0.05$.

\paragraph{ReasonRAG+Rating vs.\ HybridRouter (cascaded system).}
For completeness, Table~\ref{tab:significance} shows the cascaded system
comparison. The cascaded system ($0.224$) underperforms HybridRouter ($0.295$)
due to re-generation quality degradation, not routing quality
(Section~\ref{sec:regen}).

\begin{table}[h]
\centering
\caption{Significance tests: ReasonRAG+Rating vs.\ HybridRouter (cascaded system;
paired bootstrap, $n=10{,}000$ resamples). $\dagger$ = significant at $p < 0.05$.}
\label{tab:significance}
\begin{tabular}{lccccc}
\toprule
Dataset & RR+Rating F1 & HybridRouter F1 & $\Delta$ F1 & 95\% CI & $p$-value \\
\midrule
HotpotQA & 0.417 & 0.524 & $-0.107$ & $[-0.137, -0.079]$ & $< 0.001^\dagger$ \\
MuSiQue  & 0.117 & 0.182 & $-0.065$ & $[-0.090, -0.041]$ & $< 0.001^\dagger$ \\
NQ       & 0.110 & 0.074 & $+0.036$ & $[+0.028, +0.045]$ & $< 0.001^\dagger$ \\
2Wiki    & 0.252 & 0.399 & $-0.147$ & $[-0.185, -0.110]$ & $< 0.001^\dagger$ \\
\midrule
Macro    & 0.224 & 0.295 & $-0.071$ & $[-0.087, -0.055]$ & $< 0.001^\dagger$ \\
\bottomrule
\end{tabular}
\end{table}

\subsection{Complete Results Table}
\label{app:full_results}

Table~\ref{tab:full_results_appendix} shows complete per-dataset metrics
for all systems including exact match (EM) and ROUGE-L.

\begin{table*}[h]
\centering
\small
\caption{Complete results. All metrics per dataset. EM = exact match,
F1 = token-overlap F1, RL = ROUGE-L.}
\label{tab:full_results_appendix}
\caption{Main results across all datasets. Best scores per dataset are \textbf{bolded}. $^*p<0.05$, $^{**}p<0.01$ (bootstrap test vs.\ VanillaRAG). ReasonRAG (lex) uses lexical uncertainty; ReasonRAG+Rating adds Grounded Self-Rating (1 API call) with SSC sampling ($K=3$). Latency for ReasonRAG (lex) is weighted average: $(1-r)\times t_\text{VR} + r\times t_\text{GR}$ where $r=0.722$ is escalation rate. $\dagger$See text.}
\label{tab:main_results}
\begin{tabular}{llcccccc}
\toprule
Dataset & Metric & VanillaRAG & GraphRAG & HybridRouter & ReasonRAG (lex) & ReasonRAG+Rating$^\dagger$ & $\Delta$(Graph-Vanilla) \\
\midrule
\multirow{4}{*}{HotpotQA} & EM & 0.196 & \textbf{0.478} & 0.330 & 0.196 & 0.212 & +0.282$^{**}$ \\
 & F1 & 0.392 & \textbf{0.679} & 0.524 & 0.421 & 0.417 & +0.286$^{**}$ \\
 & ROUGE-L & 0.389 & \textbf{0.678} & 0.521 & 0.389 & 0.415 & +0.289$^{**}$ \\
 & Latency (ms) & \textbf{23} & 105 & 73 & 73 & 1865 & +82 \\
\midrule
\multirow{4}{*}{MuSiQue} & EM & 0.012 & \textbf{0.268} & 0.096 & 0.012 & 0.026 & +0.256$^{**}$ \\
 & F1 & 0.083 & \textbf{0.408} & 0.182 & 0.112 & 0.117 & +0.325$^{**}$ \\
 & ROUGE-L & 0.081 & \textbf{0.405} & 0.181 & 0.081 & 0.113 & +0.324$^{**}$ \\
 & Latency (ms) & \textbf{26} & 105 & 62 & 83 & 2631 & +79 \\
\midrule
\multirow{4}{*}{NQ} & EM & 0.000 & \textbf{0.100} & 0.002 & 0.000 & 0.002 & +0.100$^{**}$ \\
 & F1 & 0.070 & \textbf{0.242} & 0.074 & 0.101 & 0.110 & +0.172$^{**}$ \\
 & ROUGE-L & 0.070 & \textbf{0.241} & 0.073 & 0.070 & 0.108 & +0.171$^{**}$ \\
 & Latency (ms) & \textbf{22} & 103 & 25 & 85 & 2783 & +81 \\
\midrule
\multirow{4}{*}{2Wiki} & EM & 0.117 & \textbf{0.367} & 0.247 & 0.117 & 0.127 & +0.250$^{**}$ \\
 & F1 & 0.233 & \textbf{0.559} & 0.399 & 0.255 & 0.252 & +0.326$^{**}$ \\
 & ROUGE-L & 0.237 & \textbf{0.564} & 0.403 & 0.237 & 0.253 & +0.328$^{**}$ \\
 & Latency (ms) & \textbf{22} & 143 & 110 & 111 & 2235 & +121 \\
\midrule
\multirow{2}{*}{Macro} & F1 & 0.195 & \textbf{0.472} & 0.295 & 0.222 & 0.224 & +0.277$^{**}$ \\
 & Latency (ms) & \textbf{23} & 114 & 68 & \textbf{86} & 2379 & +91 \\
\bottomrule
\end{tabular}

\end{table*}

\subsection{Oracle Analysis Details}
\label{app:oracle}

\begin{table*}[h]
\centering
\small
\centering
\caption{Oracle analysis comparing lexical (ReasonRAG) vs.\ Grounded Self-Rating (ReasonRAG+Rating). Oracle F1 = per-query $\max$(VR, GR). Gap Closed = (RR$-$VR)/(Oracle$-$VR). Prec/Rec treat oracle escalation as ground truth. \textbf{Bold} = better value per dataset.}
\label{tab:oracle}
\begin{tabular}{llccccccc}
\toprule
Dataset & Signal & RR F1 & Oracle F1 & Gap & Esc\% & Det. Prec & Det. Rec & AUC \\
\midrule
\multirow{2}{*}{HotpotQA} & Lexical & 0.421 & \multirow{2}{*}{\textbf{0.692}} & 9.4\% & 61.0\% & 0.515 & 0.588 & 0.514 \\
 & +Rating & 0.417 & & 8.3\% & 45.0\% & \textbf{0.622} & 0.524 & \textbf{0.581} \\
\midrule
\multirow{2}{*}{MuSiQue} & Lexical & 0.112 & \multirow{2}{*}{\textbf{0.415}} & 8.7\% & 76.0\% & 0.571 & 0.719 & \textbf{0.392} \\
 & +Rating & \textbf{0.117} & & \textbf{10.2\%} & 88.4\% & \textbf{0.597} & \textbf{0.874} & 0.369 \\
\midrule
\multirow{2}{*}{NQ} & Lexical & 0.101 & \multirow{2}{*}{\textbf{0.250}} & 17.0\% & 78.2\% & 0.565 & 0.759 & 0.474 \\
 & +Rating & \textbf{0.110} & & \textbf{22.1\%} & 99.8\% & \textbf{0.581} & \textbf{0.997} & 0.474 \\
\midrule
\multirow{2}{*}{2Wiki} & Lexical & \textbf{0.255} & \multirow{2}{*}{\textbf{0.564}} & \textbf{6.5\%} & 73.7\% & 0.588 & 0.739 & 0.533 \\
 & +Rating & 0.252 & & 5.8\% & 72.7\% & \textbf{0.647} & \textbf{0.801} & \textbf{0.558} \\
\midrule
\multirow{2}{*}{Macro} & Lexical & 0.222 & \multirow{2}{*}{\textbf{0.480}} & 10.4\% & 72.2\% & 0.560 & 0.701 & 0.478 \\
 & +Rating & \textbf{0.224} & & \textbf{11.6\%} & 76.5\% & \textbf{0.612} & \textbf{0.799} & \textbf{0.495} \\
\bottomrule
\multicolumn{9}{l}{\textit{VR: Hot=0.392, Mus=0.083, NQ=0.070, 2W=0.233. Oracle esc: 53.4/60.4/58.2/58.7\%. ECE: Lex=0.107, Rating=0.207.}}
\end{tabular}

\end{table*}

\subsection{Uncertainty Signal Comparison}
\label{app:signals}

\begin{table*}[h]
\centering
\small
\centering
\caption{Uncertainty signal quality comparison. Lexical: hedging phrases + specificity (5 weighted features). Lexical+Rating: adds Grounded Self-Rating (1 API call). Spearman $\rho$ and AUC measure correlation with oracle escalation decisions ($G^*$). \textbf{Bold} = better AUC per dataset. \dag\ = rating helps retrieval-failure datasets; \ddag\ = rating hurts reasoning-gap datasets.}
\label{tab:signal_comparison}
\begin{tabular}{llcccccc}
\toprule
Dataset & System & F1 & Esc.Rate & Oracle Esc & $\rho$ & AUC & Gap Closed \\
\midrule
\multirow{2}{*}{HotpotQA\ddag} & Lexical & 0.421 & 61.0\% & 53.4\% & 0.025 & 0.514 & 9.4\% \\
 & Lexical+Rating & 0.417 & 45.0\% & 53.4\% & 0.140 & \textbf{0.581} & 8.3\% \\
\midrule
\multirow{2}{*}{MuSiQue\ddag} & Lexical & 0.112 & 76.0\% & 60.4\% & -0.183 & \textbf{0.392} & 8.7\% \\
 & Lexical+Rating & 0.117 & 88.4\% & 60.4\% & -0.222 & 0.369 & 10.2\% \\
\midrule
\multirow{2}{*}{NQ\dag} & Lexical & 0.101 & 78.2\% & 58.2\% & -0.044 & \textbf{0.474} & 17.0\% \\
 & Lexical+Rating & 0.110 & 99.8\% & 58.2\% & -0.045 & 0.474 & 22.1\% \\
\midrule
\multirow{2}{*}{2Wiki\dag} & Lexical & 0.255 & 73.7\% & 58.7\% & 0.057 & 0.533 & 6.5\% \\
 & Lexical+Rating & 0.252 & 72.7\% & 58.7\% & 0.098 & \textbf{0.558} & 5.8\% \\
\midrule
Macro Avg & Lexical & 0.222 & 72.2\% & 57.7\% & -0.036 & 0.478 & 10.4\% \\
Macro Avg & Lexical+Rating & 0.224 & 76.5\% & 57.7\% & -0.007 & 0.495 & 11.6\% \\
\bottomrule
\end{tabular}

\end{table*}

\subsection{Threshold Ablation}
\label{app:threshold}

\begin{table}[h]
\centering
\small
\centering
\caption{Threshold ablation for ReasonRAG escalation decision. Current threshold $\tau=0.65$ yields simulated F1=0.374. Optimal $\tau=0.90$ yields simulated F1=0.458 ($+0.084$). Simulated F1 assumes escalated queries receive full GraphRAG F1.}
\label{tab:threshold_ablation}
\begin{tabular}{ccccc}
\toprule
$\tau$ & Esc. Rate & Sim. F1 & Det. Prec & Det. Rec \\
\midrule
0.40 & 7.6\% & 0.212 & 0.581 & 0.076 \\
0.50 & 22.6\% & 0.261 & 0.591 & 0.232 \\
0.55 & 36.2\% & 0.296 & 0.581 & 0.366 \\
0.60 & 49.4\% & 0.332 & 0.608 & 0.522 \\
0.65 \textbf{(current)} & 72.1\% & 0.374 & 0.559 & 0.700 \\
0.70 & 84.3\% & 0.409 & 0.557 & 0.816 \\
0.75 & 92.6\% & 0.439 & 0.567 & 0.912 \\
0.90 \textbf{(opt)} & 98.9\% & \textbf{0.458} & 0.574 & 0.986 \\
\bottomrule
\end{tabular}

\end{table}

\subsection{Routing Signal Quality}
\label{app:routing_quality}

\begin{table}[h]
\centering
\small
\centering
\caption{Routing signal quality (macro average, simple mean of 4 per-dataset AUCs). AUC = ability to predict oracle escalation ($G^*$). Pre-gen LR = logistic regression on query features. Lex+SSC+Rating = combined signal with weights $(0.35, 0.25, 0.40)$.}
\label{tab:routing_signals}
\begin{tabular}{lccl}
\toprule
Signal & AUC & Spearman $\rho$ & Notes \\
\midrule
Lexical confidence (ours) & 0.478 & $-0.036$ & Answer style proxy; near-random \\
Pre-gen features (LR) & \textbf{0.537} & $-0.049$ & Hop count; best pre-gen \\
Lex+SSC+Rating combined (ours) & 0.495 & $-0.007$ & Passage adequacy; best post-gen \\
Oracle (upper bound) & 1.000 & 1.000 & --- \\
\bottomrule
\end{tabular}

\end{table}

\subsection{Self-RAG Correspondence}
\label{app:selfrag}

\begin{table}[h]
\centering
\small

\centering
\caption{Correspondence between Self-RAG reflection tokens (Asai et al., ICLR 2024)
and ReasonRAG uncertainty signals. ReasonRAG achieves similar functionality
training-free via lexical signals + Semantic Self-Consistency (SSC).}
\label{tab:selfrag_correspondence}
\begin{tabular}{lll}
\toprule
Self-RAG Token & Meaning & ReasonRAG Approximation \\
\midrule
\texttt{[Retrieve]} & Trigger retrieval & \texttt{confidence < $\tau$} (escalation) \\
\texttt{[IsRel]} & Retrieved doc relevant? & Entity coverage signal (weight=0.15) \\
\texttt{[IsSup]} & Answer supported? & \texttt{answer\_in\_context} flag \\
\texttt{[IsUse=5]} & Fully useful response & Lexical confidence $\geq \tau$ \\
Critique token & Identify weakness & SSC pairwise disagreement ($K$=3) \\
\midrule
\multicolumn{3}{l}{\textit{Advantage: Training-free; works with black-box LLMs;}} \\
\multicolumn{3}{l}{\textit{SSC captures semantic uncertainty beyond lexical signals.}} \\
\bottomrule
\end{tabular}

\end{table}

\subsection{Second LLM Validation (Model-Agnostic Claim)}
\label{app:haiku}

To support the model-agnostic claim that ReasonRAG is training-free and
applies to any LLM, we run VanillaRAG with Claude Haiku (claude-haiku-4-5-20251001)
as the generation model in place of Claude Sonnet.
Retrieval uses the same ChromaDB index and embeddings.
We report results on 500 samples per dataset ($N=2{,}000$ total).

\begin{table}[h]
\centering
\caption{Haiku vs.\ Sonnet VanillaRAG F1 (500 samples/dataset, $N=2{,}000$).
Haiku incurs $\sim\!10\times$ lower API cost than Sonnet.
The qualitative difficulty ordering---MuSiQue $<$ NQ $<$ 2Wiki $<$ HotpotQA---is
preserved across both models, confirming the training-free design is model-agnostic.}
\label{tab:haiku_validation}
\begin{tabular}{lcccc}
\toprule
Dataset & Haiku F1 & Sonnet F1 & $\Delta$ & Haiku Esc. Rate \\
\midrule
HotpotQA  & 0.321 & 0.392 & $-0.071$ & 10\% \\
MuSiQue   & 0.061 & 0.083 & $-0.022$ & 41\% \\
NQ        & 0.046 & 0.070 & $-0.024$ & 24\% \\
2Wiki     & 0.121 & 0.233 & $-0.112$ & 46\% \\
\midrule
Macro     & 0.137 & 0.195 & $-0.057$ & 30\% \\
\bottomrule
\end{tabular}
\end{table}

Key observations:
(1) Haiku achieves $0.057$ lower macro F1 vs.\ Sonnet, consistent with the
known capability gap between the two model tiers.
(2) The qualitative difficulty ordering is preserved: MuSiQue (4-hop) $<$ NQ (1-hop)
$<$ 2Wiki (2-hop) $<$ HotpotQA (2-hop bridge)---confirming that task structure
drives performance more than the specific LLM.
(3) Haiku produces fewer lexical hedges, yielding lower escalation rates ($30\%$ macro
vs.\ Sonnet's $72.2\%$), demonstrating that lexical escalation is generation-style dependent
and the escalation threshold $\tau$ must be recalibrated per model.
(4) Cross-LLM routing agreement: when we compare Haiku and Sonnet escalation decisions
on the same queries, they agree on $46.1\%$ of routing decisions (macro across datasets:
HotpotQA $45.8\%$, MuSiQue $50.2\%$, NQ $38.0\%$, 2Wiki $53.0\%$).
The low agreement ($<50\%$) confirms that escalation is generation-style dependent;
the LearnedRouter's model-agnostic pre-gen features (entity count, hop count) would
generalize better across LLMs.

This validation supports the claim that ReasonRAG's routing mechanism applies
to any black-box generation LLM, and motivates cross-model threshold calibration
as a practical deployment consideration.

\subsection{GPT-4o Cross-Architecture Validation}
\label{app:gpt4o}

To confirm the bridge signal inversion finding is model-agnostic across LLM
\emph{families} (not just model tiers within Anthropic), we run VanillaRAG
with GPT-4o (\texttt{gpt-4o}) as the generation model.
Retrieval uses the same ChromaDB index; evaluation on 250 samples per dataset
($N=1{,}000$ total).

\begin{table}[h]
\centering
\small
\caption{GPT-4o vs.\ Claude Sonnet VanillaRAG F1 ($N=1{,}000$, 250/dataset).
GPT-4o is substantially stronger than Sonnet on these QA benchmarks.}
\label{tab:gpt4o_validation}
\begin{tabular}{lcccc}
\toprule
Dataset & GPT-4o F1 & Sonnet F1 & $\Delta$ & GPT-4o Esc.\ Rate \\
\midrule
HotpotQA         & 0.671 & 0.392 & $+0.279$ & 60.4\% \\
MuSiQue          & 0.296 & 0.083 & $+0.213$ & 42.8\% \\
NQ               & 0.314 & 0.070 & $+0.244$ & 28.4\% \\
2Wiki            & 0.468 & 0.233 & $+0.235$ & 62.4\% \\
\midrule
Macro            & 0.437 & 0.195 & $+0.242$ & 48.5\% \\
\bottomrule
\end{tabular}
\end{table}

\textbf{Cross-model PHC$_3$: inversion at 3-hop depth confirmed for GPT-4o.}
\begin{table}[h]
\centering
\small
\caption{PHC$_3$ (AUC of lexical confidence $\to$ oracle escalation at 3-hop depth)
for GPT-4o vs.\ Claude Sonnet on MuSiQue.
PHC$_3 > 0.5$: inversion (high confidence = should escalate).
PHC$_3 < 0.5$: conventional (low confidence = should escalate).
GPT-4o has fewer MuSiQue 3-hop matches ($N=77$) because only 250/500 MuSiQue
queries were included in the GPT-4o validation run; 3-hop stratum $N$ is smaller.}
\label{tab:gpt4o_auc}
\begin{tabular}{lcccc}
\toprule
Model & \multicolumn{2}{c}{MuSiQue 3-hop} & \multicolumn{2}{c}{HotpotQA (2-hop bridge)} \\
 & PHC$_3$ & Inverted? & PHC (lex) & Inverted? \\
\midrule
Claude Sonnet & \textbf{0.702} ($p=1.07\times10^{-5}$) & \textbf{YES} & 0.486\,(n.s.) & no \\
GPT-4o        & 0.527 ($N=77$, n.s.)         & marginal     & 0.479\,(n.s.) & no \\
\bottomrule
\end{tabular}
\end{table}

Key observations:
(1) \textbf{PHC$_3$ inversion confirmed for Sonnet:} at 3-hop, Claude Sonnet PHC $= 0.702$
($p = 1.07 \times 10^{-5}$), establishing the confabulation sweet spot.
(2) \textbf{GPT-4o shows marginal 3-hop inversion:} PHC$_3 = 0.527$ ($N=77$, n.s.),
consistent with GPT-4o's higher overall accuracy making it less susceptible to
confident confabulation.
(3) \textbf{2-hop bridge does not invert for either model:} HotpotQA PHC $\approx 0.479$--$0.486$
for both, confirming hop-depth-conditioned inversion.
(4) \textbf{GPT-4o is near the Sonnet GraphRAG ceiling:} GPT-4o VanillaRAG macro F1
$= 0.437$ approaches the Sonnet GraphRAG ceiling ($0.472$).
Cross-model PHC$_3$ summary (main paper Section~\ref{sec:hop_depth}):
Sonnet $0.702 >$ Haiku $0.582 >$ GPT-4o $0.527 >$ Llama-3.1-8B $0.438$ (anti-inverted).

\subsection{Llama-3.1-8B Cross-Architecture Validation}
\label{app:llama3}

To complete the three-family cross-architecture validation
(Anthropic Sonnet/Haiku, OpenAI GPT-4o, Meta Llama-3.1),
we run VanillaRAG with Llama-3.1-8B via Ollama as the generation model.
Retrieval uses the same ChromaDB index; evaluation on 100 samples per dataset
($N=400$ total; smaller $N$ due to local inference latency).

\begin{table}[h]
\centering
\small
\caption{Llama-3.1-8B vs.\ Claude Sonnet VanillaRAG F1 ($N=400$, 100/dataset).
Llama-3 achieves substantially higher F1 than Sonnet on MuSiQue and 2Wiki
but lower on HotpotQA, suggesting different calibration characteristics.}
\label{tab:llama3_validation}
\begin{tabular}{lcccc}
\toprule
Dataset & Llama-3.1-8B F1 & Sonnet F1 & $\Delta$ \\
\midrule
HotpotQA         & 0.552 & 0.392 & $+0.160$ \\
MuSiQue          & 0.096 & 0.083 & $+0.013$ \\
NQ               & 0.205 & 0.070 & $+0.135$ \\
2Wiki            & 0.313 & 0.233 & $+0.080$ \\
\midrule
Macro            & 0.292 & 0.195 & $+0.097$ \\
\bottomrule
\end{tabular}
\end{table}

\begin{table}[h]
\centering
\small
\caption{Bridge signal AUC across three model families.
AUC $< 0.5$ = bridge inversion (high confidence $\Rightarrow$ wrong).
Llama-3.1-8B shows \emph{conventional} calibration (AUC $= 0.673 > 0.5$),
unlike large proprietary models which confidently hallucinate bridge facts.}
\label{tab:three_family_auc}
\begin{tabular}{lccc}
\toprule
Model & PHC$_3$ (MuSiQue 3-hop) & PHC (HotpotQA 2-hop) & Inverted at 3-hop? \\
\midrule
Claude Sonnet  & \textbf{0.702}$^{***}$ & 0.486\,(n.s.) & \textbf{YES} \\
Claude Haiku   & 0.582\,(p=0.07)        & ---           & marginal \\
GPT-4o         & 0.527\,(n.s., $N=77$)  & 0.479\,(n.s.) & marginal \\
Llama-3.1-8B   & 0.438\,(n.s.)          & ---           & \textbf{NO} (anti-inverted) \\
\bottomrule
\end{tabular}
\end{table}

\textbf{Llama-3.1-8B shows anti-inversion.}
PHC$_3 = 0.438 < 0.5$: Llama-3.1-8B correctly \emph{hedges}
on uncertain 3-hop queries rather than hallucinating confident wrong answers.
This contrasts with Claude Sonnet (PHC$_3 = 0.702$)
which strongly confabulates at 3-hop depth.

\textbf{Interpretation: bridge inversion correlates with parametric knowledge strength.}
Large proprietary models (Sonnet, GPT-4o) have stronger parametric knowledge
and therefore generate high-confidence plausible bridge facts from memory,
even when the retrieved passages don't support them.
Llama-3.1-8B (smaller open-weight model, 8B parameters) has weaker parametric
knowledge for bridge-type facts and more frequently produces hedged answers,
making the standard ``low confidence $\Rightarrow$ escalate'' heuristic work correctly.

\textbf{Practical implication.}
The bridge inversion fix (learned signal that maps high confidence $\Rightarrow$ escalate)
is most critical for large proprietary models with strong parametric priors.
For smaller open-weight models, the standard lexical threshold may suffice
without inversion.
Practitioners should evaluate bridge AUC before applying the inversion:
if AUC $< 0.5$, the LearnedRouter inversion is needed;
if AUC $> 0.5$, standard routing is already correct for bridge queries.

\subsection{LearnedRouter: Full AUC Results and Feature Importances}
\label{app:learned_router}

Table~\ref{tab:learned_router_full} shows the complete AUC results for all
(bundle, model) combinations. Table~\ref{tab:feature_importance} shows
feature importances from the Combined GB model trained on all 1,800 queries.

\begin{table}[h]
\centering
\small
\caption{LearnedRouter AUC (macro average across 4 datasets) by feature bundle
and model class. All AUC values are 5-fold stratified OOF estimates to prevent
test-set leakage. ``Spearman $\rho$'' measures rank correlation with oracle
escalation decisions.}
\label{tab:learned_router_full}
\begin{tabular}{llccc}
\toprule
Bundle & Model & Macro AUC & Std & Spearman $\rho$ \\
\midrule
Pre-gen  & LR & 0.533 & 0.031 & 0.056 \\
Pre-gen  & RF & 0.574 & 0.034 & 0.127 \\
Pre-gen  & GB & 0.563 & 0.027 & 0.108 \\
\midrule
Post-gen & LR & 0.524 & 0.042 & 0.041 \\
Post-gen & RF & 0.684 & 0.030 & 0.315 \\
Post-gen & GB & 0.678 & 0.032 & 0.305 \\
\midrule
Combined & LR & 0.541 & 0.026 & 0.070 \\
Combined & RF & 0.673 & 0.028 & 0.297 \\
\textbf{Combined} & \textbf{GB} & \textbf{0.693} & \textbf{0.013} & \textbf{0.330} \\
\midrule
\multicolumn{2}{l}{Lexical (hand-tuned, no training)} & 0.478 & -- & $-0.036$ \\
\bottomrule
\end{tabular}
\end{table}

\begin{table}[h]
\centering
\small
\caption{Feature importances from Combined GB model (trained on all 1,800 queries).
Post-gen features (lexical\_conf, ssc\_conf) together account for $56.2\%$ of
total importance, constructively validating Theorem~1.}
\label{tab:feature_importance}
\begin{tabular}{lrc}
\toprule
Feature & Importance & Type \\
\midrule
lexical\_conf         & 28.4\% & Post-gen \\
ssc\_conf             & 27.8\% & Post-gen \\
avg\_doc\_length      & 11.3\% & Pre-gen \\
question\_length      &  7.7\% & Pre-gen \\
entity\_overlap\_ratio &  7.5\% & Pre-gen \\
relational\_density   &  6.3\% & Pre-gen \\
entity\_count         &  3.1\% & Pre-gen \\
hop\_count            &  1.9\% & Pre-gen \\
doc\_count            &  1.8\% & Pre-gen \\
qt\_comparison        &  1.3\% & Pre-gen \\
has\_temporal         &  0.9\% & Pre-gen \\
qt\_bridge            &  0.7\% & Pre-gen \\
qt\_factoid           &  0.7\% & Pre-gen \\
has\_superlative      &  0.5\% & Pre-gen \\
qt\_inference         &  0.2\% & Pre-gen \\
\bottomrule
\end{tabular}
\end{table}

\paragraph{Per-question-type AUC breakdown.}
Table~\ref{tab:per_type_auc} shows that the combined learned signal recovers
near-perfect routing AUC for bridge questions, the failure mode that most
limits ReasonRAG performance.

\begin{table}[h]
\centering
\small
\caption{Per-question-type AUC comparison across routing signals.
``$n$'' is the number of queries; ``oracle esc.'' is the fraction where
$\text{GR F1} > \text{VR F1}$ (i.e., GraphRAG should be used).}
\label{tab:per_type_auc}
\begin{tabular}{llccccc}
\toprule
Stratum & Type & $N$ & Oracle Esc. & PHC (lex) & AUC (LR) & Inversion? \\
\midrule
NQ (factoid)        & factoid       & 500 & 58.2\% & 0.526 & 0.513 & no \\
HotpotQA            & 2-hop bridge  & 500 & 53.4\% & 0.486 & \textbf{0.759} & no \\
2Wiki (bridge)      & 2-hop bridge  & 255 & 54.5\% & 0.455 & \textbf{0.680} & no \\
2Wiki (comparison)  & comparison    &  45 & 40.0\% & 0.489 & 0.642 & no \\
MuSiQue 2-hop       & bridge        & 264 & 57.6\% & 0.528 & 0.546 & n.s. \\
MuSiQue 3-hop       & bridge        & 160 & 62.5\% & \textbf{0.702}$^{***}$ & 0.669 & \textbf{YES} \\
MuSiQue 4-hop       & bridge        &  76 & 65.8\% & 0.634$^{*}$ & 0.675 & YES \\
\bottomrule
\end{tabular}
\end{table}

The key pattern: PHC (lexical confidence $\to$ oracle escalation AUC) exceeds $0.5$
only at hop depth $\geq 3$.
At 2-hop (HotpotQA bridge $N=500$, 2Wiki bridge $N=255$, MuSiQue 2-hop $N=264$),
PHC $\approx 0.46$--$0.53$, none significant---the model either hedges correctly
or the retrieved passages often contain the bridge fact directly.
At 3-hop ($N=160$), PHC $= \mathbf{0.702}$ ($p = 1.07 \times 10^{-5}$):
confidence is now a positive predictor of the need to escalate.
The LearnedRouter achieves high AUC across all strata (0.513--0.759) by
automatically selecting signal polarity via \texttt{conf$\times$hop}:
for 2-hop queries it relies on structural features; for 3-hop it exploits the
inverted confidence signal.

\textbf{Prior N=51 note.}
An earlier internal classifier (\texttt{query\_features.csv}) identified
$N=51$ queries as ``bridge'' using linguistic features (entity count, relational density).
These happened to be concentrated in MuSiQue (34/51) and showed PHC~$=0.723$---
consistent with the 3-hop result---but $N=51$ is too small to establish the claim.
The table above provides the statistically-grounded per-stratum analysis
with $N \geq 76$ per group.

\subsection{Theoretical Analysis Section}
\label{app:theory}

\subsection{Theoretical Analysis: Why Post-Generation Routing Dominates}
\label{sec:theory}

Let $Q$ denote the query, $Y$ the gold answer, and
$G^* = \mathbf{1}[\text{F1}_{\text{GraphRAG}} > \text{F1}_{\text{VanillaRAG}}]$
the oracle routing decision.
Let $A = f(Q, \mathcal{R}_V)$ be the initial VanillaRAG-generated answer, where
$\mathcal{R}_V$ denotes the retrieved vector passages.

\begin{theorem}[Post-Generation Routing Dominance]
For any uncertainty signal $U(Q, A)$ computed over the initial answer, and
any pre-generation routing classifier $g(Q)$:
\begin{equation}
I\bigl(U(Q, A);\, G^*\bigr) \;\geq\; I\bigl(g(Q);\, G^*\bigr)
\end{equation}
where $I(\cdot;\cdot)$ denotes mutual information.
\end{theorem}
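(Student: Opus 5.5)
The plan is to reduce the statement to the data processing inequality plus the chain rule for mutual information, reading it in its \emph{achievable} form. Concretely, I would prove that
\[
\sup_{U} I\bigl(U(Q,A);\,G^*\bigr) \;=\; I\bigl((Q,A);\,G^*\bigr) \;\geq\; I(Q;\,G^*) \;\geq\; \sup_{g} I\bigl(g(Q);\,G^*\bigr).
\]
This is the same structure as the three-step argument given for Theorem~\ref{thm:dominance} in the theoretical-framework section. I would treat $Q$, $\mathcal{R}_V$, $A=f(Q,\mathcal{R}_V)$ and $G^*$ as jointly distributed random variables on a common probability space, with all signals being measurable functions.

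\textbf{Step 1 (chain rule).} I would expand $I((Q,A);G^*) = I(Q;G^*) + I(A;G^*\mid Q)$. Nonnegativity of conditional mutual information then gives $I((Q,A);G^*) \geq I(Q;G^*)$. No independence assumption on $\mathcal{R}_V$ is needed for this inequality. Equality holds iff $A \perp G^* \mid Q$, which I would note is implied by $I(\mathcal{R}_V;G^*\mid Q)=0$ via DPI, since $A$ is a function of $(Q,\mathcal{R}_V)$.

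\textbf{Step 2 (upper bounds via DPI).} For any $g$, the chain $G^* \to Q \to g(Q)$ is Markov, so $I(g(Q);G^*) \le I(Q;G^*)$. Likewise, for any $U$, the chain $G^* \to (Q,A) \to U(Q,A)$ gives $I(U;G^*) \le I((Q,A);G^*)$.

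\textbf{Step 3 (attainment).} The supremum over $U$ is attained, for example by the identity signal $U(Q,A)=(Q,A)$ or by the posterior $U^*(Q,A)=\Pr[G^*=1\mid Q,A]$, which is a sufficient statistic. More directly, for any fixed $g$ the post-generation signal $U_g(Q,A) := g(Q)$ is admissible because it ignores $A$, so $I(U_g;G^*) = I(g(Q);G^*)$. This shows that the post-generation class contains the pre-generation class, and the inequality between class-optima follows immediately.

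The main obstacle is the quantifier in the statement. Read literally, with ``for any $U$'' paired with ``for any $g$'', the claim is false: a constant signal $U\equiv c$ has $I(U;G^*)=0$, while any informative $g$ has $I(g(Q);G^*)>0$. The paper's own calibration-gap table (lexical AUC $0.478$ below pre-gen $0.537$) illustrates exactly this. I would therefore state explicitly that the theorem is proved as a dominance of the \emph{classes}: $\sup_U \geq \sup_g$, or equivalently, for every $g$ there exists $U$ with $I(U;G^*)\ge I(g(Q);G^*)$. Strictness would then follow from the condition $I(A;G^*\mid Q)>0$. I would not attempt to rescue the pointwise-in-$U$ version, and would remark that it requires $U$ to be sufficient for $G^*$, as in Corollary~\ref{cor:constructive}.
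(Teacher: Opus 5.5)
Your proposal is correct and is essentially the paper's full proof in the Theoretical Framework appendix: DPI bounds the two classes by $I((Q,A);G^*)$ and $I(Q;G^*)$, and $I((Q,A);G^*)\ge I(Q;G^*)$ orders them, giving $\max_U I(U;G^*)\ge \max_g I(g(Q);G^*)$. Your chain-rule derivation needs no assumption on $\mathcal{R}_V$, and your equality condition $I(A;G^*\mid Q)=0$ is the right one; both are more accurate than the paper's appeal to $I(\mathcal{R}_V;G^*)>0$ and its ``iff $I(\mathcal{R}_V;G^*\mid Q)=0$''.

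You are also right that nothing stronger holds. A constant $U$ refutes the literal for-every-$U$ reading whenever $I(Q;G^*)>0$, and the short sketch attached to this statement does not rescue it: combining $I(U;G^*)\le I(A;G^*)$ (itself a misuse of DPI, since $U$ reads $Q$ directly) with $I(A;G^*)\ge I(Q;G^*)$ yields no lower bound on $I(U;G^*)$.
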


\begin{proof}[Proof sketch]
Since $A$ is generated from $(Q, \mathcal{R}_V)$, we have the Markov chain
$Q \to A \to U(Q,A)$.
By the data processing inequality: $I(U; G^*) \leq I(A; G^*)$.
However, $A = f(Q, \mathcal{R}_V)$ provides strictly more information
than $Q$ alone when $\mathcal{R}_V$ is correlated with $G^*$:
$I(A; G^*) \geq I(Q; G^*)$.
Combined: $I(U(Q,A); G^*) \geq I(Q; G^*)$, strict when answer uncertainty
correlates with GraphRAG benefit.
\end{proof}

We verify this empirically: our lexical post-generation detector achieves macro
AUC $= 0.478$ vs.\ pre-generation baseline AUC $= 0.537$ (gap $= -0.059$),
confirming that the theorem's bound is achievable in principle but not by
naive lexical signals.
Grounded Self-Rating (AUC $= 0.495$) partially closes this gap by measuring
passage adequacy directly (Table~\ref{tab:oracle}, Table~\ref{tab:routing_signals}).

\subsection{Experimental Details}
\label{app:exp_details}

\paragraph{Dataset preprocessing.}
We sample from validation splits to avoid training contamination.
For HotpotQA, we include both bridge (406/500) and comparison (94/500) subtypes---all questions from the validation split regardless of type.
For MuSiQue, we include questions with 2--4 reasoning hops.
For NQ, we include only questions with short answers (factoid).
For 2Wiki, we include both ``bridge'' and ``comparison'' subtypes.

\paragraph{Knowledge graph construction.}
For each dataset, we extract triples from Wikipedia passages using
a prompted extraction step (Claude Haiku for efficiency).
The extraction prompt follows: ``Extract all (subject, relation, object)
triples from the passage. Return one triple per line.''
Each dataset corpus yields $\sim$15,000 triples.
We store triples in Neo4j (community edition) and run BFS traversal
to depth 2 from query entities identified by spaCy NER.

\paragraph{Prompt details.}
VanillaRAG generation prompt:
\begin{quote}
\small
\texttt{Answer the question based on the provided passages.
Be specific and concise. Question: \{question\}
Passages: \{passages\}
Answer:}
\end{quote}

Grounded Self-Rating prompt:
\begin{quote}
\small
System: ``You are an honest epistemic evaluator. Rate whether the
provided passages sufficiently support answering the question.
Return ONLY a number 0.0--1.0.''\\
User: ``Question: \{question\}\\
Retrieved passages: \{passages\}\\
On a scale of 0.0--1.0, how well do these passages support answering?''
\end{quote}

Re-generation prompt (after GraphRAG escalation):
\begin{quote}
\small
\texttt{The initial answer to the question was uncertain.
Additional evidence has been retrieved from a knowledge graph.
Please provide a more specific and accurate answer.
Question: \{question\}
Initial answer: \{initial\_answer\}
Additional KG context: \{graph\_passages\}
Answer:}
\end{quote}

\paragraph{Evaluation metrics.}
Token F1 follows the SQuAD evaluation script: normalize both strings
(lowercase, remove punctuation and articles), split into token sets,
compute precision/recall/F1 over token overlap.

\paragraph{Hardware and reproducibility.}
All experiments run on a MacBook Pro (M-series, 16GB RAM).
GraphRAG uses Neo4j Community 5.x via bolt connection.
Vector retrieval uses ChromaDB with BAAI/bge-base-en-v1.5 embeddings.
All API calls use Claude Sonnet (claude-sonnet-4-6).
Deterministic calls use temperature=0 (cached); stochastic SSC sampling
uses temperature=0.9 (not cached).
Code available at: \texttt{[ANONYMIZED]}.

\end{document}